\providecommand{\algorithmname}{Algorithm}
\theoremstyle{plain}
\newtheorem{thm}{\protect\theoremname}
\theoremstyle{plain}
\newtheorem{lem}[thm]{\protect\lemmaname}
\author{Justin Domke$^1$ and Daniel Sheldon$^{1,2}$\\
$^1$ College of Information and Computer Sciences, University of Massachusetts Amherst \\
$^2$ Department of Computer Science, Mount Holyoke College
}
\setlist[itemize]{leftmargin=18pt}
\providecommand{\lemmaname}{Lemma}
\providecommand{\theoremname}{Theorem}
\begin{document}
\global\long\def\argmin{\operatornamewithlimits{argmin}}

\global\long\def\argmax{\operatornamewithlimits{argmax}}

\global\long\def\prox{\operatornamewithlimits{prox}}

\global\long\def\diag{\operatorname{diag}}

\global\long\def\lse{\operatorname{lse}}

\global\long\def\R{\mathbb{R}}

\global\long\def\E{\operatornamewithlimits{\mathbb{E}}}

\global\long\def\P{\operatornamewithlimits{\mathbb{P}}}

\global\long\def\V{\operatornamewithlimits{\mathbb{V}}}

\global\long\def\N{\mathcal{N}}

\global\long\def\L{\mathcal{L}}

\global\long\def\C{\mathbb{C}}

\global\long\def\tr{\operatorname{tr}}

\global\long\def\norm#1{\left\Vert #1\right\Vert }

\global\long\def\norms#1{\left\Vert #1\right\Vert ^{2}}

\global\long\def\pars#1{\left(#1\right)}

\global\long\def\pp#1{(#1)}

\global\long\def\bracs#1{\left[#1\right]}

\global\long\def\bb#1{[#1]}

\global\long\def\verts#1{\left\vert #1\right\vert }

\global\long\def\Verts#1{\left\Vert #1\right\Vert }

\global\long\def\angs#1{\left\langle #1\right\rangle }

\global\long\def\KL#1{[#1]}

\global\long\def\KL#1#2{\mathrm{KL}\bracs{#1\middle\Vert#2}}

\global\long\def\div{\text{div}}

\global\long\def\erf{\text{erf}}

\global\long\def\vvec{\text{vec}}

\global\long\def\eqd{\overset{d}{=}}

\global\long\def\z{{\bf z}}

\global\long\def\y{{\bf y}}

\global\long\def\x{{\bf x}}

\global\long\def\w{{\bf w}}

\global\long\def\u{\boldsymbol{\mu}}

\global\long\def\T{\mathcal{T}}

\global\long\def\T{\mathcal{T}}

\global\long\def\ep{\boldsymbol{\epsilon}}

\global\long\def\ELBO#1#2{\mathrm{ELBO}\bracs{#1\middle\Vert#2}}

\global\long\def\IWELBO#1#2{\mathrm{IW}\text{-}\mathrm{ELBO}_{M}\bracs{#1\middle\Vert#2}}

\global\long\def\JD#1{{\color{blue}[\text{JD:#1}]}}

\global\long\def\w{{\bf \omega}} 

\newcommand{\dan}[1]{{\color{red}[DS:#1]}}
\newcommand{\eat}[1]{}

\title{Importance Weighting and Variational Inference}

\maketitle
\maketitle
\begin{abstract}
Recent work used importance sampling ideas for better variational bounds on likelihoods. We clarify the applicability of these ideas to pure probabilistic inference, by showing the resulting Importance Weighted Variational Inference (IWVI) technique is an instance of augmented variational inference, thus identifying the looseness in previous work. Experiments confirm IWVI's practicality for probabilistic inference. As a second contribution, we investigate inference with elliptical distributions, which improves accuracy in low dimensions, and convergence in high dimensions.\end{abstract}


\section{Introduction}

Probabilistic modeling is used to reason about the world by formulating
a joint model $p(\z,\x)$ for unobserved variables $\z$ and observed
variables $\x$, and then querying the posterior distribution $p(\z\mid\x)$
to learn about hidden quantities given evidence $\x$. Common tasks
are to draw samples from $p(\z\mid\x)$ or compute posterior expectations.
However, it is often intractable to perform these tasks directly,
so considerable research has been devoted to methods for approximate
probabilistic inference.

Variational inference (VI) is a leading approach for approximate inference.
In VI, $p(\z\mid\x)$ is approximated by a distribution $q(\z)$ in
a simpler family for which inference is tractable. The process to
select $q$ is based on the following decomposition \cite[Eqs. 11-12]{saul_mean_1996}:

\begin{equation}
\log p(\x)=\underbrace{\E_{q(\z)}\log\frac{p(\z,\x)}{q(\z)}}_{\text{ELBO}[q\pp{\z}\Vert p\pp{\z,\x}]}+\underbrace{\KL{q(\z)}{p(\z\vert\x)}}_{\text{divergence}}.\label{eq:ELBO-decomposition}
\end{equation}

The first term is a lower bound of $\log p(\x)$ known as the \textquotedbl evidence
lower bound\textquotedbl{} (ELBO). Selecting $q$ to make the ELBO
as big as possible simultaneously obtains a lower
bound of $\log p(\x)$ that is as tight as possible and drives $q$ close to $p$ in KL-divergence.

The ELBO is closely related to importance sampling. For fixed $q$,
let $R=p(\z,\x)/q(\z)$ where $\z\sim q$. This random variable
satisfies $p(\x)=\E R$, which is the foundation of importance sampling.
Similarly, we can write by Jensen's inequality that $\log p(\x)\geq\E\log R=\ELBO qp$,
which is the foundation of modern ``black-box'' versions of VI (BBVI)~\cite{ranganath_black_2014} in which Monte Carlo samples are used to estimate $\E \log R$, in the same way that IS estimates $\E R$.

Critically, the \emph{only} property VI
uses to obtain a lower bound is $p(\x)=\E R$. Further, it is straightforward
to see that Jensen's inequality yields a tighter bound when $R$ is
more concentrated about its mean $p(\x)$. So, it is natural to consider
different random variables with the same mean that are more concentrated, for example the sample
average $R_{M}=\frac{1}{M}\sum_{m=1}^{M}R_{m}$. Then, by identical
reasoning, $\log p(\x)\geq\E\log R_{M}$. The last quantity is the
objective of importance-weighted auto-encoders~\cite{burda_importance_2015};
we call it the \emph{importance weighted ELBO (IW-ELBO)}, and the
process of selecting $q$ to maximize it \emph{importance-weighted
VI (IWVI)}.

However, at this point we should pause. The decomposition in Eq.~\ref{eq:ELBO-decomposition}
makes it clear exactly in what sense standard VI, when optimizing
the ELBO, makes $q$ close to $p$. By switching to the one-dimensional
random variable $R_M$, we derived the IW-ELBO, which gives a tighter
bound on $\log p(\x)$. For learning applications, this may be all
we want. But for probabilistic inference, we are left uncertain exactly
in what sense $q$ \textquotedbl is close to\textquotedbl{} $p$,
and how we should use $q$ to approximate $p$, say, for computing
posterior expectations.

Our first contribution is to provide a new perspective on IWVI by
highlighting a precise connection between IWVI and \emph{self-normalized
importance sampling} (NIS) \citep{owen_monte_2013}, which instructs us how to use IWVI for
“pure inference” applications. Specifically, IWVI is an instance of
augmented VI. Maximizing the IW-ELBO corresponds exactly to minimizing
the KL divergence between joint distributions $q_{M}$ and $p_{M}$,
where $q_{M}$ is derived from NIS over a batch of $M$ samples from
$q$, and $p_{M}$ is the joint distribution obtained by drawing one
sample from $p$ and $M-1$ “dummy” samples from $q$. This has strong
implications for probabilistic inference (as opposed to learning)
which is our primary focus. After optimizing $q$, one should compute
posterior expectations using NIS.
We show that not only does IWVI significantly tighten bounds on $\log p(\x)$,
but, by using $q$ this way at test time, it significantly reduces
estimation error for posterior expectations.

Previous work has connected IWVI and NIS by showing that the importance
weighted ELBO is a lower bound of the ELBO applied to the NIS distribution~\citep{cremer_reinterpreting_2017,naesseth_variational_2018,bachman_training_2015}.
Our work makes this relationship precise as an instance of augmented
VI, and exactly quantifies the gap between the IW-ELBO and conventional
ELBO applied to the NIS distribution, which is a conditional KL divergence.

Our second contribution is to further explore the connection between
variational inference and importance sampling by adapting ideas of
“defensive sampling” \citep{owen_monte_2013} to VI. Defensive importance sampling uses a widely
dispersed $q$ distribution to reduce variance by avoiding situations
where $q$ places essentially no mass in an area with $p$ has density.
This idea is incompatible with regular VI due to its ``mode seeking''
behavior, but it is quite compatible with IWVI. We show how to use
elliptical distributions and reparameterization to achieve a form
of defensive sampling with almost no additional overhead to black-box
VI (BBVI). “Elliptical VI” provides small improvements over Gaussian
BBVI in terms of ELBO and posterior expectations. In higher dimensions,
these improvements diminish, but elliptical VI provides significant
improvement in the convergence reliability and speed. This is consistent
with the notion that using a “defensive” $q$ distribution is advisable
when it is not well matched to $p$ (e.g., before optimization has
completed). 

\section{Variational Inference}

Consider again the "ELBO decomposition" in Eq. \ref{eq:ELBO-decomposition}. \eat{Note that this is a KL-divergence is over $\z$ for a fixed $\x$,
not a conditional divergence.} Variational inference maximizes the
``evidence lower bound'' (ELBO) over $q$. Since the divergence
is non-negative, this tightens a lower-bound on $\log p(\x$). But,
of course, since the divergence and ELBO vary by a constant, maximizing
the ELBO is equivalent to minimizing the divergence. Thus, variational
inference can be thought of as simultaneously solving two problems:
\begin{itemize}
\item \textbf{``probabilistic inference''} or finding a distribution $q(\z)$
that is close to $p(\z\vert\x)$ in KL-divergence.
\item \textbf{``bounding the marginal likelihood''} or finding a lower-bound
on $\log p(\x)$.
\end{itemize}
The first problem is typically used with Bayesian inference: A user
specifies a model $p\pp{\z,\x}$, observes some data $\x$, and is
interested in the posterior $p({\bf z}\vert{\bf x})$ over the latent
variables. While Markov chain Monte Carlo is most commonly for these
problems \citep{gilks_language_1994,stan_development_team_modeling_2017},
the high computational expense motivates VI \citep{kucukelbir_automatic_2017,bamler_perturbative_2017}.
While a user might be interested in any aspect of the posterior, for
concreteness, we focus on ``posterior expectations'', where the
user specifies some arbitrary $t({\bf z})$ and wants to approximate
$\E_{p\pp{\z\vert\x}}t(\z)$.

The second problem is typically used to support maximum likelihood
learning. Suppose that $p_{\theta}\pp{\z,\x}$ is some distribution
over observed data $\x$ and hidden variables $\z$. In principle,
one would like to set $\theta$ to maximize the marginal likelihood
over the observed data. When the integral $p_{\theta}(\x)=\int p_{\theta}(\z,\x)d\z$
is intractable, one can optimize the lower-bound $\E_{q(\z)}\log\pars{p_{\theta}(\z,\x)/q\pp{\z}}$
instead \citep{saul_mean_1996}, over both $\theta$ and the parameters
of $q$. This idea has been used to great success recently with variational
auto-encoders (VAEs) \citep{kingma_auto-encoding_2014}.


\section{Importance Weighting}
\label{sec:iw}

Recently, ideas from importance sampling have been applied to obtain tighter ELBOs for learning in VAEs~\cite{burda_importance_2015}. We review the idea and then draw novel connections to augmented VI that make it clear how adapt apply these ideas to probabilistic inference.

\begin{wrapfigure}{O}{0.5\columnwidth}%
\vspace{-15pt}\hspace{15pt}\includegraphics[viewport=35bp 0bp 330bp 160bp,scale=0.62]{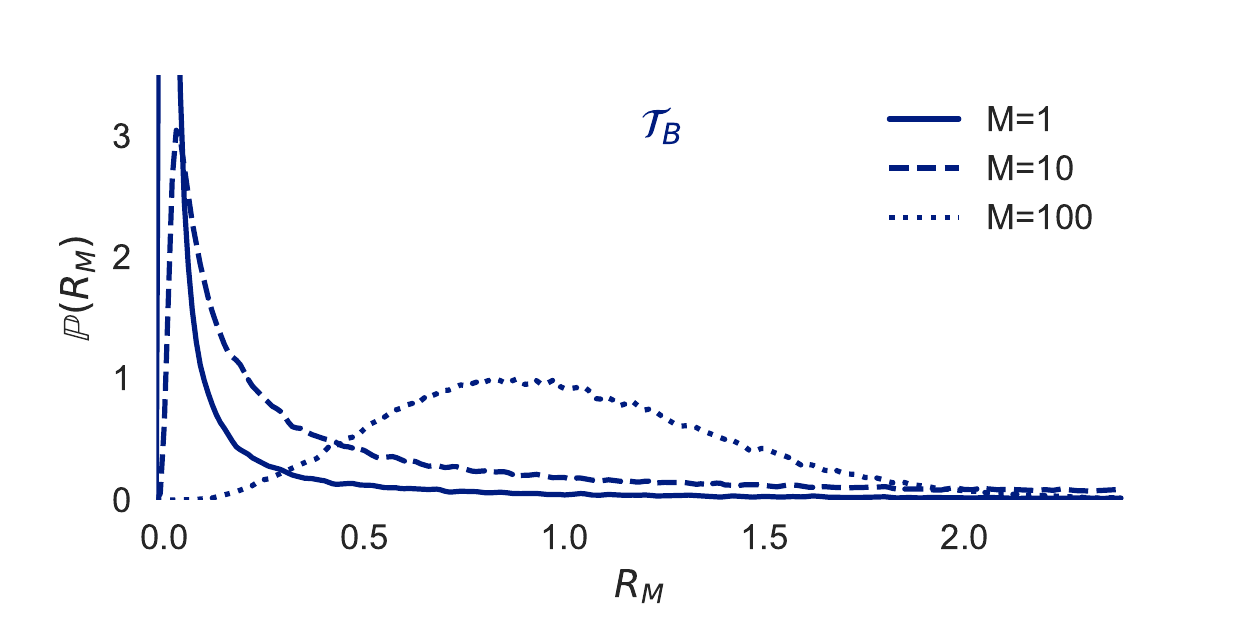}\hspace{-30pt}

\caption{How the density of $R_{M}$ changes with $M$. (Distribution and setting
as in Fig. \ref{fig:diffent-better-when-reweighted}.) \label{fig:RM-example}}
\vspace{-20pt}\end{wrapfigure}%
Take any random variable $R$ such that $\E R=p(\x),$ which we will
think of as an ``estimator'' of $p(\x$). Then it's easy to see
via Jensen's inequality that
\begin{equation}
\log p({\bf x})=\underbrace{\E\log R}_{\text{bound}}+\underbrace{\E\log\frac{p({\bf x})}{R}}_{\text{looseness}},\label{eq:R-decomp}
\end{equation}
where the first term is a lower bound on $\log p({\bf x})$, and the
second (non-negative) term is the looseness. The bound will be tight
if $R$ is highly concentrated.

While Eq. \ref{eq:R-decomp} looks quite trivial, it is a generalization
of the ``ELBO'' decomposition in Eq. \ref{eq:ELBO-decomposition}.
To see that, use the random variable
\begin{equation}
R=\w(\z)=\frac{p(\z,\x)}{q(\z)},\ \z\sim q,\label{eq:R-VI}
\end{equation}
which clearly obeys $\E R=p({\bf x})$, and for which Eq. \ref{eq:R-decomp}
becomes Eq. \ref{eq:ELBO-decomposition}.

The advantage of Eq. \ref{eq:R-decomp} over Eq. \ref{eq:ELBO-decomposition}
is increased flexibility: alternative estimators $R$ can give a tighter
bound on $\log p(\x)$. One natural idea is to draw multiple i.i.d.
samples from $q$ and average the estimates as in importance sampling
(IS) . This gives the estimator

\begin{equation}
R_{M}=\frac{1}{M}\sum_{m=1}^{M}\frac{p\pars{\z_{m},\x}}{q\pp{\z_{m}}},\ \z_{m}\sim q.\label{eq:RM-IWVI}
\end{equation}

It's always true that $\E R_{M}=p(\x)$, but the distribution of $R_{M}$
places less mass near zero for larger $M$, which leads to a tighter
bound (Fig. \ref{fig:RM-example}).

This leads to a tighter ``importance weighted ELBO'' (IW-ELBO) lower
bound on $\log p\pp{\x},$ namely
\begin{equation}
\IWELBO{q\pp{\z}}{p\pp{\z,\x}}:=\E_{q\pp{\z_{1:M}}}\log\frac{1}{M}\sum_{m=1}^{M}\frac{p\pars{\z_{m},\x}}{q\pp{\z_{m}}},\label{eq:logp-IWAE-bound}
\end{equation}
where $\z_{1:M}$ is a shorthand for $(\z_1, ..., \z_M)$ and $q(\z_{1:M})=q\pp{\z_{1}}\cdots q(\z_{M})$. This bound was
first introduced by Burda et al. \citep{burda_importance_2015} in
the context of supporting maximum likelihood learning of a variational
auto-encoder.

\subsection{A generative process for the importance weighted ELBO}

While Eq. \ref{eq:R-decomp} makes clear that optimizing the IW-ELBO tightens a bound on
$\log p(\x)$, it isn't obvious what connection this has to probabilistic inference. Is there some divergence
that is being minimized? Theorem \ref{thm:IWVI-decomp} shows this
can be understood by constructing ``augmented'' distributions $p_{M}(\z_{1:M},\x)$
and $q_{M}\pp{\z_{1:M}}$ and then applying the ELBO decomposition in Eq.
\ref{eq:ELBO-decomposition} to the joint distributions.

\begin{algorithm}[t]
\begin{enumerate}
\item Draw $\hat{\z}_{1},\hat{\z}_{1},...,\hat{\z}_{M}$ independently from
$q\pars{\z}.$
\item Choose $m\in\{1,...,M\}$ with probability ${\displaystyle \frac{\w\pars{\hat{\z}_{m}}}{\sum_{m'=1}^{M}\w\pars{\hat{\z}_{m'}}}}.$
\item Set $\z_{1}=\hat{\z}_{m}$ and $\z_{2:M}=\hat{\z}_{-m}$ and return
$\z_{1:M}.$ 
\end{enumerate}
\caption{A generative process for $q_{M}\protect\pp{\protect\z_{1:M}}$\label{alg:A-generative-process}}
\end{algorithm}

\begin{restatable}[IWVI]{thm}{IWVIdecomp}
\label{thm:IWVI-decomp}
Let $q_M(\z_{1:M})$ be the density of the generative process described by Alg. \ref{alg:A-generative-process}, which is based on self-normalized importance sampling over a batch of $M$ samples from $q$. Let $p_{M}\pars{\z_{1:M},\x}=p\pp{\z_{1},\x}q\pp{\z_{2:M}}$ be the density obtained by drawing $\z_1$ and $\x$ from $p$ and drawing the ``dummy'' samples $\z_{2:M}$ from $q$. Then
\begin{equation}
q_{M}\pars{\z_{1:M}}=\frac{p_{M}\pp{\z_{1:M},\x}}{\frac{1}{M}\sum_{m=1}^{M}\w\pp{\z_{m}}}.\label{eq:augmented-q}
\end{equation}
Further, the ELBO decomposition in Eq. \ref{eq:ELBO-decomposition} applied to $q_M$ and $p_M$ is
\begin{equation}
\log p(\x)=\IWELBO{q(\z)}{p(\z,\x)}+\KL{q_{M}\pp{\z_{1:M}}}{p_{M}\pp{\z_{1:M}\vert\x}}.\label{eq:IWELBO-decomp}
\end{equation}

\end{restatable}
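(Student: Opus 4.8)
The plan is to prove the two displayed identities in turn, since the decomposition \eqref{eq:IWELBO-decomp} follows almost mechanically once the density identity \eqref{eq:augmented-q} is in hand. First I would compute the density $q_M$ of the output of Alg.~\ref{alg:A-generative-process} explicitly and verify \eqref{eq:augmented-q}; then I would apply the ELBO decomposition \eqref{eq:ELBO-decomposition} to the augmented pair $(q_M,p_M)$ over the joint variable $\z_{1:M}$.

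To get the density, I would view the generative process as a joint law over the i.i.d.\ draws $\hat{\z}_{1:M}\sim q$ and the selected index $k$, with density $\prod_{m=1}^{M}q(\hat{\z}_m)\cdot\w(\hat{\z}_k)/\sum_{m'=1}^{M}\w(\hat{\z}_{m'})$, and push it forward through the deterministic map $\z_1=\hat{\z}_k,\ \z_{2:M}=\hat{\z}_{-k}$. The crucial bookkeeping step is to enumerate the preimages of a fixed output $\z_{1:M}$: for each selected position $k\in\{1,\dots,M\}$ exactly one draw produces $\z_{1:M}$ (place $\z_1$ at position $k$ and fill the remaining slots with $\z_{2:M}$ in order), and—because the selected sample is always $\z_1$—each such preimage carries the same draw-density $\prod_m q(\z_m)$ and the same selection probability $\w(\z_1)/\sum_{m'}\w(\z_{m'})$. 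Summing the $M$ identical contributions gives $q_M(\z_{1:M})=M\,\w(\z_1)\prod_m q(\z_m)/\sum_{m'}\w(\z_{m'})$; substituting $\w(\z_1)q(\z_1)=p(\z_1,\x)$ to absorb the $m=1$ factor and recognizing $p_M(\z_{1:M},\x)=p(\z_1,\x)\prod_{m=2}^M q(\z_m)$ then yields exactly \eqref{eq:augmented-q}. As a consistency check I would confirm $\int q_M=1$ using $\E_{q(\z_{1:M})}[\w(\z_1)/\sum_{m'}\w(\z_{m'})]=1/M$.

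For the second identity, I would first note that $\int p_M(\z_{1:M},\x)\,d\z_{1:M}=p(\x)$, since integrating out the dummy factors $q(\z_{2:M})$ and marginalizing $p(\z_1,\x)$ leaves $p(\x)$. Hence the ELBO decomposition \eqref{eq:ELBO-decomposition} applies verbatim to $\z_{1:M}$ with the same left-hand side $\log p(\x)$, and its divergence term is precisely $\KL{q_M(\z_{1:M})}{p_M(\z_{1:M}\mid\x)}$, matching \eqref{eq:IWELBO-decomp}. It remains to identify the ELBO term with the IW-ELBO. Rearranging \eqref{eq:augmented-q} gives $p_M(\z_{1:M},\x)/q_M(\z_{1:M})=\tfrac{1}{M}\sum_m\w(\z_m)$, so the ELBO term equals $\E_{q_M}\log\tfrac{1}{M}\sum_m\w(\z_m)$, whereas \eqref{eq:logp-IWAE-bound} defines $\IWELBO{q(\z)}{p(\z,\x)}$ as the \emph{same} integrand in expectation under the product measure $q(\z_{1:M})=\prod_m q(\z_m)$.

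The one genuinely non-trivial step—and the part I expect to be the main obstacle—is reconciling these two expectations, since $q_M\neq q(\z_{1:M})$. I would resolve it by symmetry: the integrand $g(\z_{1:M})=\log\tfrac{1}{M}\sum_m\w(\z_m)$, the product $\prod_m q(\z_m)$, and the denominator $\sum_{m'}\w(\z_{m'})$ are all permutation-symmetric, so the integral $\int [\w(\z_k)/\sum_{m'}\w(\z_{m'})]\,g\prod_m q(\z_m)\,d\z_{1:M}$ is independent of $k$. Averaging over $k$ replaces the single weight $\w(\z_1)$ by $\tfrac{1}{M}\sum_k\w(\z_k)$, which cancels the denominator and the leading factor $M$, yielding $\E_{q_M}[g]=\E_{q(\z_{1:M})}[g]$. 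This identifies the ELBO term with $\IWELBO{q(\z)}{p(\z,\x)}$ and establishes \eqref{eq:IWELBO-decomp}.
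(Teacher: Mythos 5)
Your proposal is correct and follows essentially the same route as the paper: it computes the density of the generative process by summing the $M$ symmetric contributions from the choice of selected index (the paper does this with delta functions and a marginalization over the index $h$, you by enumerating preimages of the relabeling map), and then applies the ELBO decomposition to $(q_M,p_M)$ using $p_M(\x)=p(\x)$. Your final symmetrization step, showing $\E_{q_M(\z_{1:M})}\log\tfrac{1}{M}\sum_{m}\w(\z_m)=\E_{q(\z_{1:M})}\log\tfrac{1}{M}\sum_{m}\w(\z_m)$, is in fact slightly more careful than the paper's proof, which identifies the $q_M$-expectation with the IW-ELBO (defined in Eq.~\ref{eq:logp-IWAE-bound} under the product measure) without explicitly justifying that change of measure.
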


We will call the process of maximizing the IW-ELBO ``Importance
Weighted Variational Inference'' (IWVI). (Burda et al. used ``Importance
Weighted Auto-encoder'' for optimizing Eq. \ref{eq:logp-IWAE-bound}
as a bound on the likelihood of a variational auto-encoder, but this
terminology ties the idea to a particular model, and is not suggestive
of the probabilistic inference setting.)

The generative process for $q_{M}$ in Alg. \ref{alg:A-generative-process} is very similar to self-normalized importance sampling. The usual NIS distribution draws a batch of size $M$, and then ``selects'' a single variable with probability in proportion to its importance weight. NIS is exactly equivalent to the marginal distribution $q_M(\z_1)$. The generative process for $q_M(\z_{1:M})$ additionally keeps the \emph{unselected} variables and relabels them as $\z_{2:M}$. 

Previous work \citep{cremer_reinterpreting_2017,bachman_training_2015,naesseth_variational_2018,Le2017May} investigated a similar connection between NIS and the importance-weighted ELBO. In our notation, they showed that
\begin{equation}
\log p(\x) \geq \ELBO{q_M(\z_1)}{p(\z_1, \x)} \geq \IWELBO{q(\z)}{p(\z,\x)}.\label{eq:IWELBO-lower-bound}
\end{equation}
That is, they showed that the IW-ELBO \emph{lower bounds} the ELBO between the NIS distribution and $p$, without quantifying the gap in the second inequality. Our result makes it clear exactly what KL-divergence is being minimized by maximizing the IW-ELBO and in what sense doing this makes $q$ ``close to'' $p$. As a corollary, we also quantify the gap in the inequality above (see Thm.~\ref{thm:marginal-vs-joint-divergence-2} below).

A recent decomposition \citep[Claim 1]{Le2017May} is related to Thm. \ref{thm:IWVI-decomp}, but based on different augmented distributions $q^{IS}_M$ and $p^{IS}_M$. This result is fundamentally different in that it holds $q^{IS}_M$ "fixed" to be an independent sample of size $M$ from $q$, and modifies $p_M^{IS}$ so its marginals approach $q$. This does not inform inference. Contrast this with our result, where $q_M(\z_1)$ gets closer and closer to $p(\z_1 \mid \x)$, and can be used for probabilistic inference. See appendix (Section \ref{sec:decomp}) for details.

Identifying the precise generative process is useful
if IWVI will be used for general probabilistic queries, which is a
focus of our work, and, to our knowledge, has not been investigated before.
For example, the expected value of $t(\z)$ can be approximated as
\begin{equation}
\E_{p(\z\vert\x)}t(\z)=\E_{p_{M}(\z_{1}\vert\x)}t(\z_{1})\approx\E_{q_{M}(\z_{1})}t(\z_{1})=\E_{q\pp{\z_{1:M}}}\frac{\sum_{m=1}^{M}\w\pars{\z_{m}}\ t(\z_{m})}{\sum_{m=1}^{M}\w\pars{\z_{m}}}.\label{eq:test-integral-justification}
\end{equation}

The final equality is established by Lemma \ref{lem:t-transformation-justification}
in the Appendix. Here, the inner approximation is justified since
IWVI minimizes the joint divergence between $q_{M}\pp{\z_{1:M}}$
and $p_{M}\pp{\z_{1:M}\vert\x}$ . However, this is \emph{not}
equivalent to minimizing the divergence between $q_{M}\pp{\z_{1}}$
and $p_{M}\pp{\z_{1}\vert\x}$, as the following result shows.

\begin{restatable}{thm}{marginaltojoint}
The marginal and joint divergences relevant to IWVI are related by\label{thm:marginal-vs-joint-divergence-2}
\[
\KL{q_{M}\pp{\z_{1:M}}}{p_{M}\pp{\z_{1:M}\vert\x}}=\KL{q_{M}\pp{\z_{1}}}{p\pp{\z_{1}\vert\x}}+\KL{q_{M}\pp{\z_{2:M}\vert\z_{1}}}{q\pp{\z_{2:M}}}.
\]
As a consequence, the gap in the first inequality of Eq ~\ref{eq:IWELBO-lower-bound} is exactly $\KL{q_{M}\pp{\z_{1}}}{p\pp{\z_{1}\vert\x}}$ and the gap in the second inequality is exactly $\KL{q_{M}\pp{\z_{2:M}\vert\z_{1}}}{q\pp{\z_{2:M}}}$.
\end{restatable}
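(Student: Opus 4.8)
The plan is to prove the identity by the chain rule for relative entropy, factorizing both $q_M$ and the conditional $p_M\pp{\z_{1:M}\vert\x}$ along the split $\z_{1:M}=(\z_1,\z_{2:M})$. The only structural fact needed about $p_M$ is how it factorizes after conditioning on $\x$, and this follows immediately from its definition $p_M\pars{\z_{1:M},\x}=p\pp{\z_1,\x}q\pp{\z_{2:M}}$. Integrating out $\z_{2:M}$ contributes a factor of $1$, so the marginal is $p_M(\x)=p(\x)$, and dividing gives $p_M\pp{\z_{1:M}\vert\x}=p\pp{\z_1\vert\x}\,q\pp{\z_{2:M}}$. In particular $p_M\pp{\z_1\vert\x}=p\pp{\z_1\vert\x}$ and $p_M\pp{\z_{2:M}\vert\z_1,\x}=q\pp{\z_{2:M}}$; under $p_M$ the dummy samples are independent of both $\z_1$ and $\x$.

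With these factorizations in hand, I would write the joint divergence as a single expectation under $q_M$ of the log-ratio, substitute $q_M\pp{\z_{1:M}}=q_M\pp{\z_1}\,q_M\pp{\z_{2:M}\vert\z_1}$ in the numerator and $p_M\pp{\z_{1:M}\vert\x}=p\pp{\z_1\vert\x}\,q\pp{\z_{2:M}}$ in the denominator, and split the logarithm. The first piece depends only on $\z_1$, and its expectation under the $\z_1$-marginal of $q_M$ is exactly $\KL{q_M\pp{\z_1}}{p\pp{\z_1\vert\x}}$. The second piece is $\E_{q_M}\log\frac{q_M\pp{\z_{2:M}\vert\z_1}}{q\pp{\z_{2:M}}}$; factoring the outer expectation as $\E_{q_M\pp{\z_1}}\E_{q_M\pp{\z_{2:M}\vert\z_1}}$ identifies it as the conditional KL divergence, which is precisely what $\KL{q_M\pp{\z_{2:M}\vert\z_1}}{q\pp{\z_{2:M}}}$ abbreviates. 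Summing the two pieces yields the stated identity.

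For the consequences, I would combine this identity with two decompositions already in hand. Theorem \ref{thm:IWVI-decomp} gives $\log p(\x)=\IWELBO{q(\z)}{p(\z,\x)}+\KL{q_M\pp{\z_{1:M}}}{p_M\pp{\z_{1:M}\vert\x}}$, while the ordinary ELBO decomposition of Eq. \ref{eq:ELBO-decomposition}, applied to $q_M\pp{\z_1}$ and $p\pp{\z_1,\x}$, gives $\log p(\x)=\ELBO{q_M(\z_1)}{p(\z_1,\x)}+\KL{q_M\pp{\z_1}}{p\pp{\z_1\vert\x}}$. Reading off the second equation shows the gap in the first inequality of Eq. \ref{eq:IWELBO-lower-bound} is $\KL{q_M\pp{\z_1}}{p\pp{\z_1\vert\x}}$; subtracting the two ELBO-style expressions and invoking the main identity shows the gap in the second inequality equals $\KL{q_M\pp{\z_{1:M}}}{p_M\pp{\z_{1:M}\vert\x}}-\KL{q_M\pp{\z_1}}{p\pp{\z_1\vert\x}}=\KL{q_M\pp{\z_{2:M}\vert\z_1}}{q\pp{\z_{2:M}}}$.

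The main obstacle is bookkeeping rather than analysis: the one point requiring care is correctly identifying the conditionals of $p_M$ — especially that conditioning on $\x$ does not couple $\z_{2:M}$ to $\z_1$, so that $p_M\pp{\z_{2:M}\vert\z_1,\x}=q\pp{\z_{2:M}}$ — together with keeping straight that the second term in the statement is a conditional KL, i.e. an expectation over $\z_1\sim q_M$ of the inner divergence, not the divergence at a fixed $\z_1$. Everything else is the standard chain rule for relative entropy.
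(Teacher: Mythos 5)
Your proposal is correct and follows essentially the same route as the paper: the chain rule for relative entropy applied to the split $\z_{1:M}=(\z_1,\z_{2:M})$, the identifications $p_M\pp{\z_1\vert\x}=p\pp{\z_1\vert\x}$ and $p_M\pp{\z_{2:M}\vert\z_1,\x}=q\pp{\z_{2:M}}$, and then the same two ELBO decompositions subtracted to read off the gaps. The only difference is that you derive the chain rule by hand from the log-ratio rather than citing it, which is harmless.
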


The first term is the divergence between the marginal of $q_{M}$,
i.e., the ``standard'' NIS distribution,
and the posterior. In principle, this is exactly the divergence we
would like to minimize to justify Eq.~\ref{eq:test-integral-justification}.
However, the second term is not zero since the selection phase in
Alg.~\ref{alg:A-generative-process} leaves $\z_{2:M}$ distributed
differently under $q_{M}$ than under $q$. Since this term is irrelevant
to the quality of the approximation in Eq. \ref{eq:test-integral-justification},
IWVI truly minimizes an upper-bound. Thus, IWVI can be seen as an instance of
auxiliary variational inference \citep{agakov_auxiliary_2004} where
a joint divergence upper-bounds the divergence of interest.

\section{Importance Sampling Variance}

\begin{figure}
\begin{minipage}[t]{0.599\columnwidth}%
\subfloat[The target $p$ and four candidate variational distributions.]{%
\noindent\begin{minipage}[t]{1\columnwidth}%
\vspace{-20pt}
\begin{center}
\includegraphics[viewport=40bp 0bp 330bp 200bp,scale=0.5]{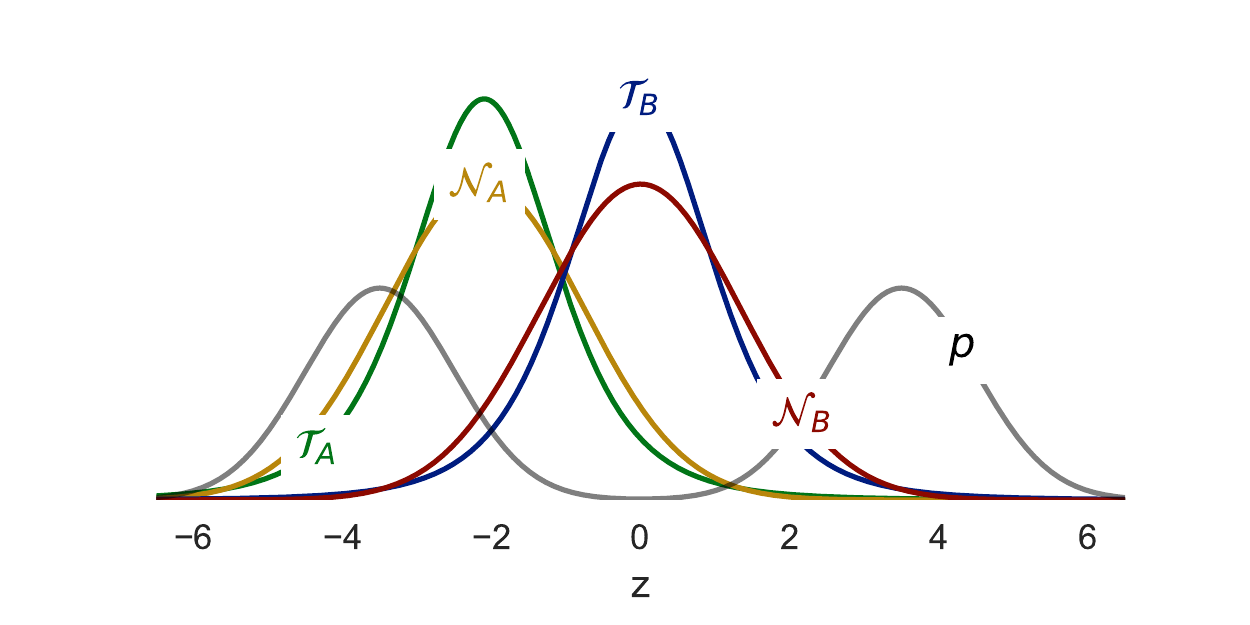}
\par\end{center}%
\end{minipage}}

\subfloat[Reweighted densities $q_{M}(z_{1})$ for each distribution.]{%
\noindent\begin{minipage}[t]{1\columnwidth}%
\vspace{-20pt}

\includegraphics[viewport=40bp 0bp 330bp 200bp,clip,scale=0.4]{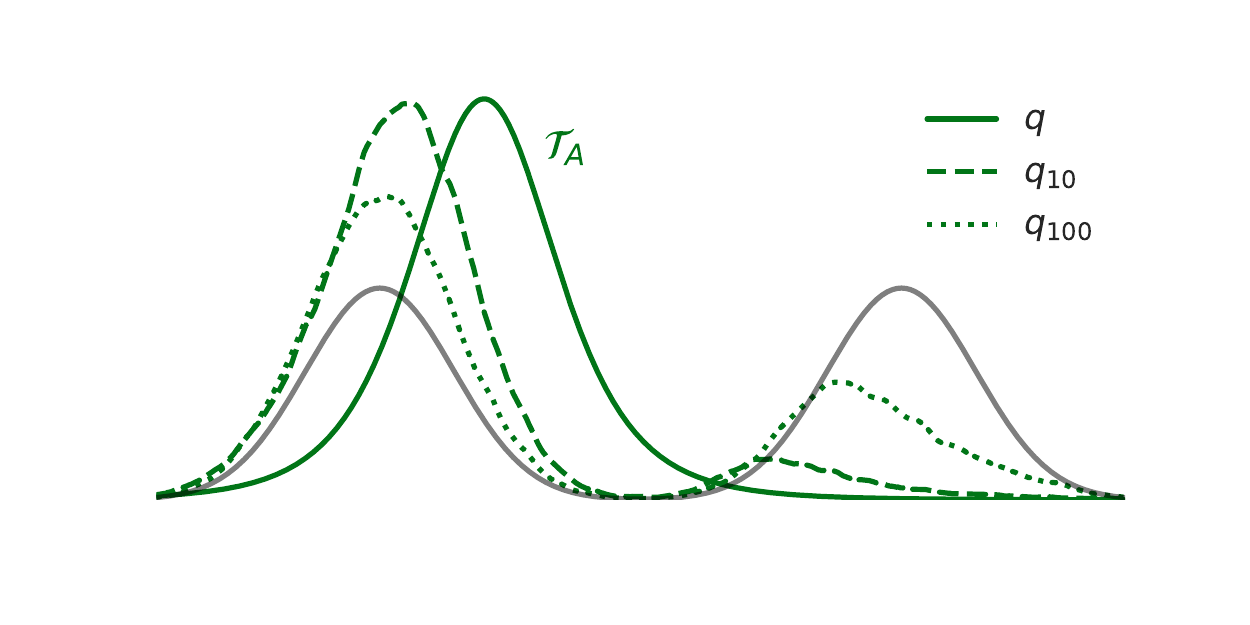}\hspace{-3pt}\includegraphics[viewport=40bp 0bp 330bp 200bp,clip,scale=0.4]{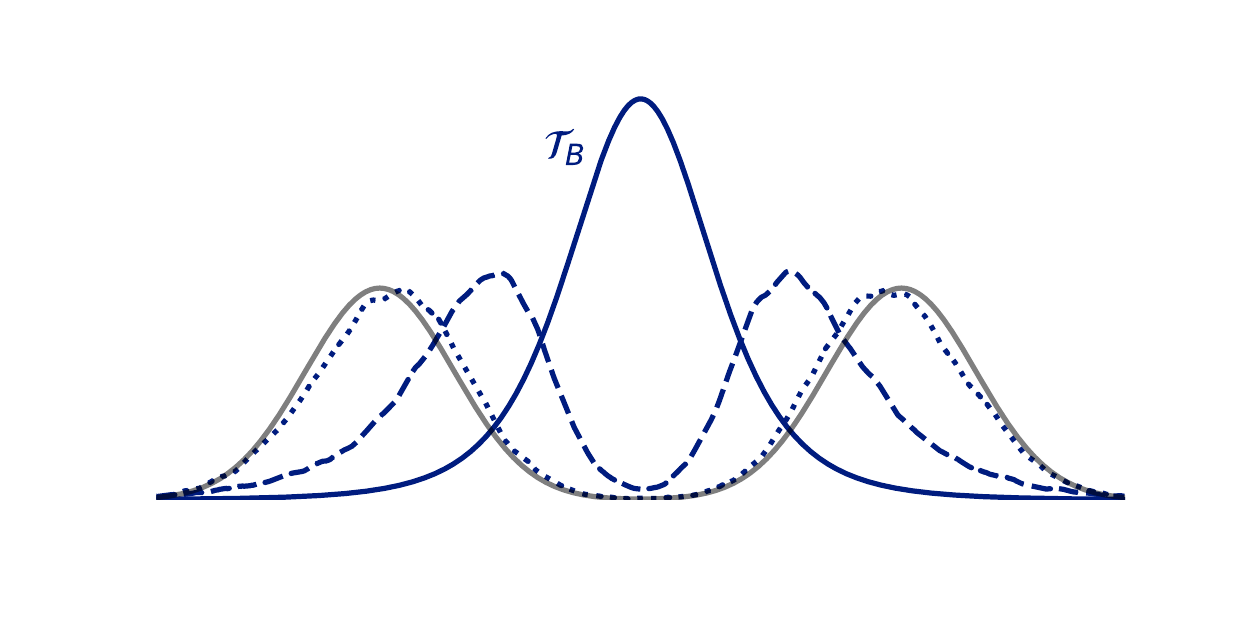}

\vspace{-30pt}

\includegraphics[viewport=40bp 0bp 330bp 200bp,clip,scale=0.4]{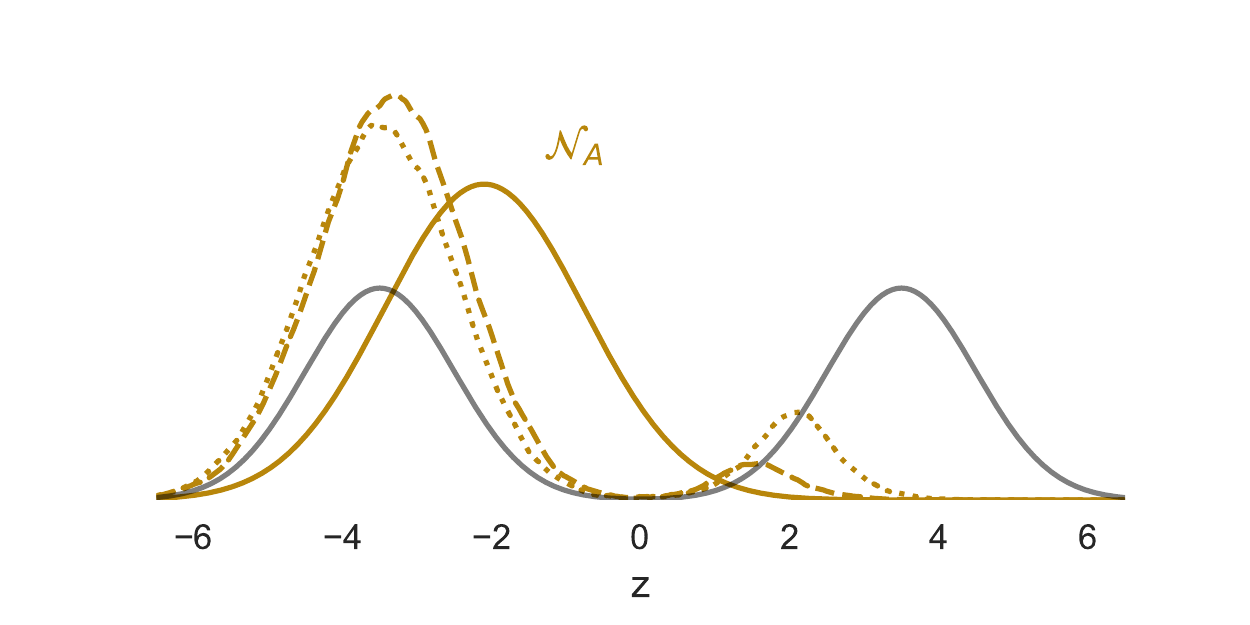}\hspace{-3pt}\includegraphics[viewport=40bp 0bp 330bp 200bp,clip,scale=0.4]{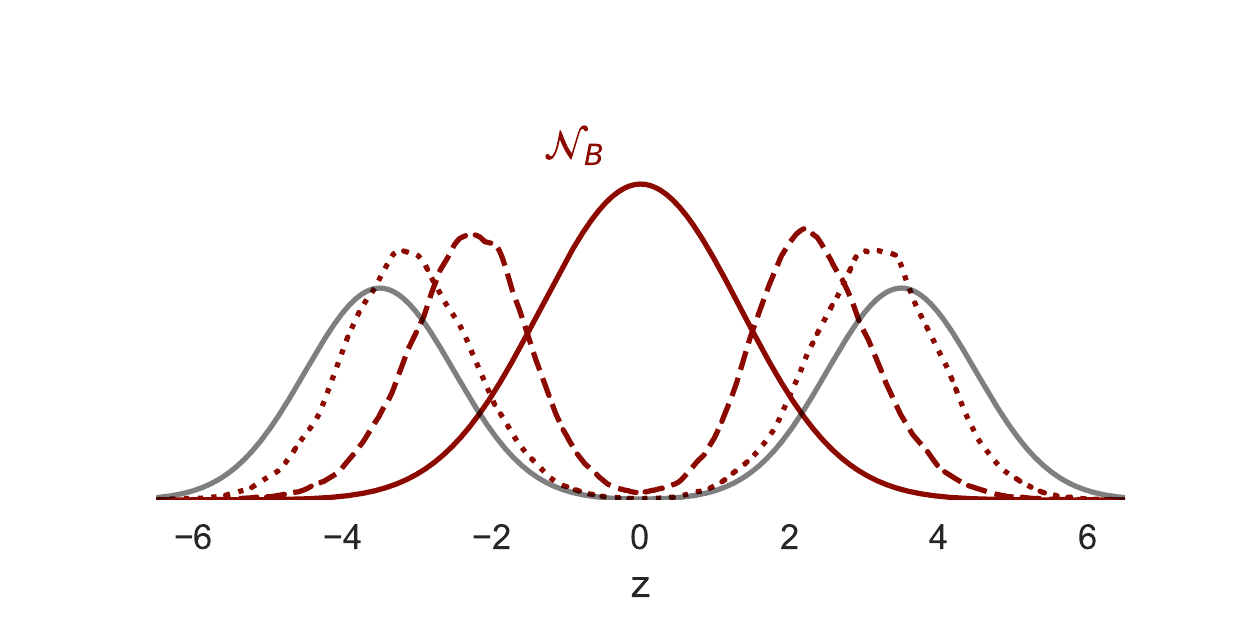}%
\end{minipage}}%
\end{minipage}%
\begin{minipage}[t]{0.4\columnwidth}%
\subfloat[The IW-ELBO. (Higher is better.)]{%
\noindent\begin{minipage}[t]{1\columnwidth}%
\hspace{-20pt}\includegraphics[viewport=20bp 0bp 400bp 190bp,scale=0.47]{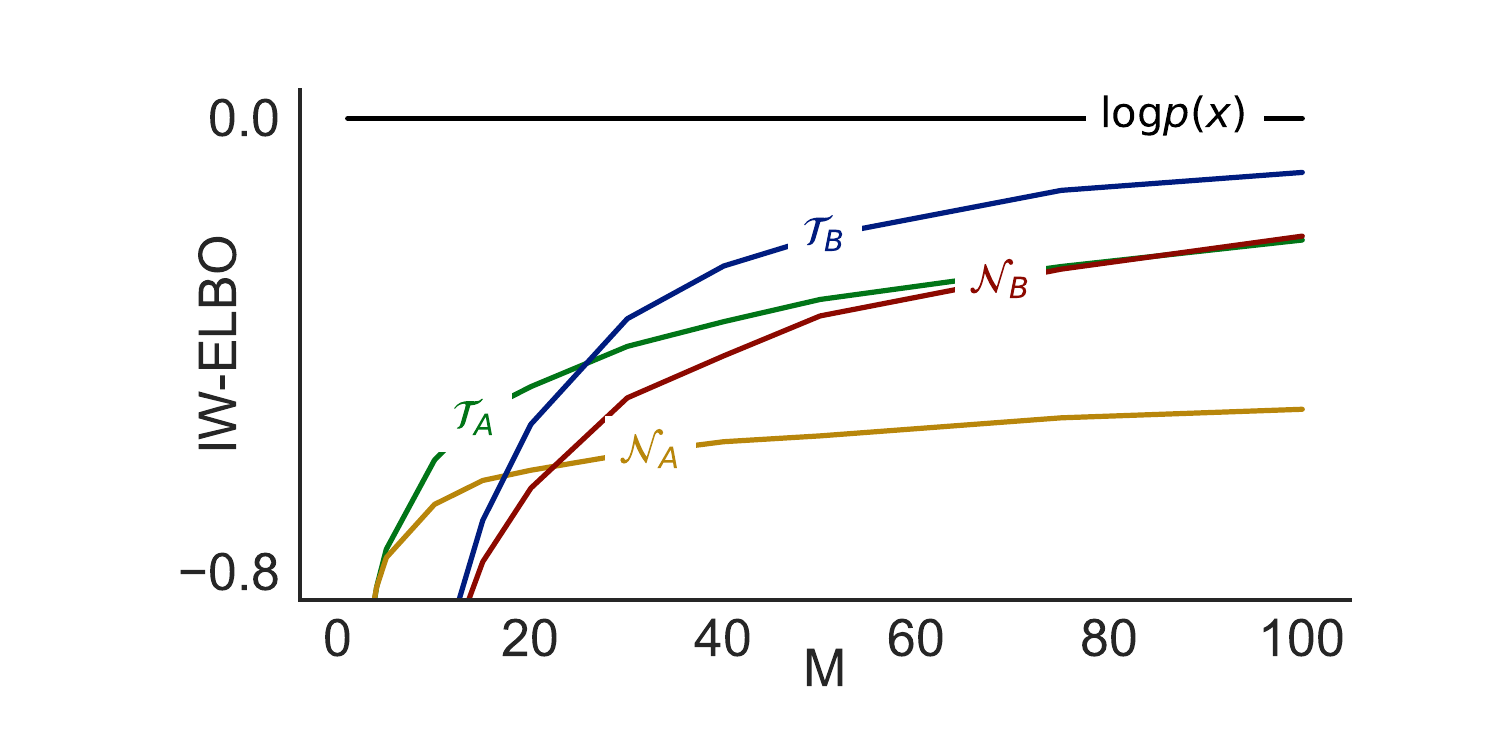}%
\end{minipage}

}

\subfloat[Moment error $\Vert\protect\E_{q_{M}}t\protect\pp{z_{1}}-\protect\E_{p}t\protect\pp z\Vert_{2}^{2}$
for $t\protect\pp z=\protect\pp{z,z^{2}}$. (Lower is better.)]{%
\noindent\begin{minipage}[t]{1\columnwidth}%
\hspace{-20pt}
\vspace{-10pt}\includegraphics[viewport=25bp 0bp 400bp 190bp,scale=0.47]{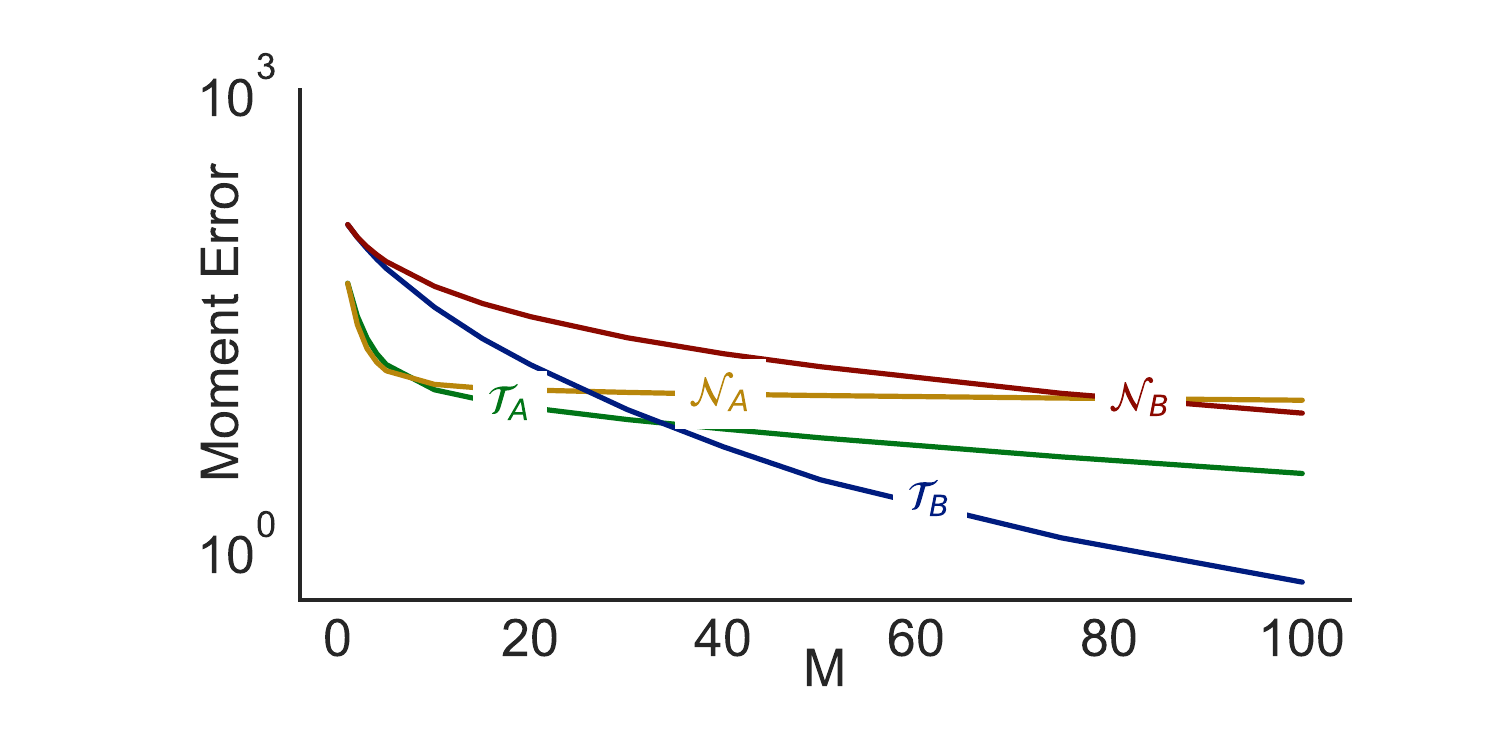}%
\end{minipage}

}%
\end{minipage}\caption{Two Gaussian ($\protect\N$) and two Student-T ($\protect\T$) variational
distributions, all with constant variance and one of two means ($A$
or $B$). For $M=1$ it is better to use a mean closer to one mode
of $p$. For large $M$, a mean in the center is superior, and the
heavy tails of the Student T lead to better approximation of $p$
and better performance both in terms of IW-ELBO and moment error.\label{fig:diffent-better-when-reweighted} }
\end{figure}

This section considers the family for the variational distribution. For small $M$, the mode-seeking behavior of VI will favor weak tails, while for large $M$, variance reduction provided by importance weighting will favor wider tails.

The most common variational distribution is the Gaussian. One explanation
for this is the Bayesian central limit theorem, which, in many cases, guarantees that the posterior is asymptotically Gaussian. Another is that it's
``safest'' to have weak tails: since the objective is $\E\log R$,
small values of $R$ are most harmful. So, VI wants to avoid cases
where $q(\z)\gg p(\z,\x)$, which is difficult if $q$ is heavy-tailed.
(This is the ``mode-seeking'' behavior of the KL-divergence \citep{tom_minka_divergence_2005}.)

With IWVI, the situation changes. Asymptotically in $M$, $R_{M}$
in Eq. \ref{eq:RM-IWVI} concentrates around $p(\x)$, and so it is
the variance of $R_{M}$ that matters, as formalized in the following
result.

\begin{restatable}{thm}{asymptoticIWVI}
  \label{thm:asymptotic}
For large $M$, the looseness of the IW-ELBO is given by the variance
of $R.$ Formally, if there exists some $\alpha > 0$ such that $\E|R- p(\x)|^{2+\alpha} < \infty$ and $\limsup_{M\rightarrow\infty}\E[1/R_M] < \infty$, then
\[
\lim_{M\rightarrow\infty} M\Bigl( \underbrace{\log p(\x)-\IWELBO{q(\z)}{p(\z,\x)}}_{ \KL{q_{M}}{p_{M}}  }\Bigr) =\frac{\V\bb R}{2p(\x)^{2}}.
\]
\end{restatable}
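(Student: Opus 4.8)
The plan is to work directly with the looseness. Set $\mu=p(\x)$ and $u=(R_M-\mu)/\mu$. Since $\E\bb{R_M}=\mu$ we have $\E\bb u=0$, and writing $-\log(1+u)=\phi(u)-u$ with $\phi(u):=u-\log(1+u)\ge0$ gives
\[
\log p(\x)-\IWELBO{q(\z)}{p(\z,\x)}=\E\bigl[-\log\pp{R_M/\mu}\bigr]=\E\bb{\phi(u)}.
\]
The function $\phi$ is nonnegative, behaves like $u^{2}/2$ near the origin, grows only linearly as $u\to\infty$, and blows up like $\log(1/R_M)$ as $R_M\to0$. So the whole claim reduces to showing $M\,\E\bb{\phi(u)}\to\sigma^{2}/(2\mu^{2})$, where $\sigma^{2}=\V\bb R$. (The identification of the left side with $\KL{q_M}{p_M}$ is already supplied by Thm.~\ref{thm:IWVI-decomp}.)

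The heuristic is a delta-method argument: the CLT gives $\sqrt{M}\,u\Rightarrow\N\pp{0,\sigma^{2}/\mu^{2}}$, and since $M\phi(t/\sqrt{M})\to t^{2}/2$ we expect $M\phi(u)\Rightarrow G^{2}/2$ with $\E\bb{G^{2}/2}=\sigma^{2}/(2\mu^{2})$. Converting this distributional statement into convergence of expectations requires a uniform-integrability control, which I will obtain by splitting $\E\bb{\phi(u)}$ over three regions: the \emph{bulk} $\verts u\le1/2$, the \emph{upper tail} $u>1/2$, and the \emph{singular region} $R_M<\mu/2$. Without loss of generality I take $\alpha\in(0,1]$, since a finite $(2+\alpha)$-moment for larger $\alpha$ implies one for $\alpha=1$.

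On the bulk, a second-order Taylor expansion gives $\phi(u)=u^{2}/2+r(u)$ with $\verts{r(u)}\le C\verts u^{3}\le C'\verts u^{2+\alpha}$ for $\verts u\le1/2$. Here $M\,\E\bb{u^{2}/2}=\sigma^{2}/(2\mu^{2})$ exactly, while the truncation term $M\,\E\bigl[(u^{2}/2)\mathbf{1}[\verts u>1/2]\bigr]$, the remainder $M\,\E\bigl[\verts{r(u)}\mathbf{1}[\verts u\le1/2]\bigr]$, and (on the upper tail, where $\phi(u)\le u$) the term $M\,\E\bigl[u\,\mathbf{1}[u>1/2]\bigr]$ are each dominated by a multiple of $M\,\E\verts u^{2+\alpha}$. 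By the Marcinkiewicz–Zygmund inequality and the hypothesis $\E\verts{R-\mu}^{2+\alpha}<\infty$ we have $\E\verts u^{2+\alpha}=O\pp{M^{-(1+\alpha/2)}}$, so all three are $O\pp{M^{-\alpha/2}}\to0$, leaving exactly the desired constant $\sigma^{2}/(2\mu^{2})$ from these regions.

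The crux, and the main obstacle, is the singular region $R_M<\mu/2$. There $u<0$, so from $\log t\le t-1$ we get $\phi(u)\le\log\pp{\mu/R_M}\le\mu/R_M$, and the contribution is bounded by $M\,\E\bigl[(\mu/R_M)\,\mathbf{1}[R_M<\mu/2]\bigr]$. This term cannot be dispatched by Markov's inequality alone, because $1/R_M$ is unbounded on an event whose probability is shrinking, and it is precisely why the assumption $\limsup_{M\to\infty}\E\bb{1/R_M}<\infty$ is imposed. The strategy is to combine two facts: $1/R_M\to1/\mu$ almost surely (strong law plus continuous mapping) together with the bounded reciprocal moment yields uniform integrability of $\{1/R_M\}$, so $\E\bigl[(1/R_M)\mathbf{1}[R_M<\mu/2]\bigr]\to0$; and the $(2+\alpha)$-moment bound gives the polynomial decay $\P\bb{R_M<\mu/2}=O\pp{M^{-(1+\alpha/2)}}$ needed to beat the prefactor $M$ (via a truncation of $1/R_M$ at a slowly growing level). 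Verifying that these two ingredients together force $M\,\E\bigl[(1/R_M)\mathbf{1}[R_M<\mu/2]\bigr]\to0$ is the delicate point of the argument; once it is established, summing the three regions gives $M\,\E\bb{\phi(u)}\to\sigma^{2}/(2\mu^{2})$, completing the proof.
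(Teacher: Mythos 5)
Your overall architecture is the same as the paper's (Taylor expansion of $\log(1+u)$ around $u=0$, the Marcinkiewicz--Zygmund inequality to get $\E|u|^{2+\alpha}=O(M^{-(2+\alpha)/2})$, and the recognition that the only hard part is the region where $R_M$ is near zero), and your treatment of the bulk and upper tail is fine. The gap is in the singular region, and it is genuine. First, your claim that almost-sure convergence of $1/R_M$ to $1/\mu$ plus $\limsup_M\E\bb{1/R_M}<\infty$ yields uniform integrability is false: bounded $L^1$ norms together with a.s.\ convergence do not give UI (e.g.\ $X_M=M\mathbf{1}[U<1/M]$), and Vitali would require $\E\bb{1/R_M}\to 1/\mu$, which is not assumed. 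Second, even granting UI, your truncation scheme cannot beat the prefactor $M$: the piece $M\,\E\bigl[(1/R_M)\mathbf{1}[1/R_M>T_M]\bigr]$ needs to be $o(1)$, and with only a bounded \emph{first} moment of $1/R_M$ there is no way to force the tail expectation below $o(1/M)$ --- all of the (bounded) mass of $\E\bb{1/R_M}$ could sit above any truncation level. The root cause is that your pointwise bound $\phi(u)\le \mu/R_M$ on $\{R_M<\mu/2\}$ spends the entire first power of $1/R_M$, leaving no factor of $|u|$ to supply decay in $M$.

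The paper's proof closes exactly this hole with its Lemma on the Taylor remainder: applying H\"older \emph{inside the integral} $\int_0^{\Delta}\frac{x^2}{1+x}\,dx$ for $\Delta<0$ yields a pointwise bound of the form $C_\epsilon\,|1/(1+\Delta)|^{\epsilon/(1+\epsilon)}\,|\Delta|^{(2+3\epsilon)/(1+\epsilon)}$, i.e.\ only a \emph{fractional} power of $1/R_M$ multiplied by a power of $|\Delta|$ strictly greater than $2$. A second application of H\"older in expectation, with exponents $(1+\epsilon)/\epsilon$ and $1+\epsilon$, then lands precisely on $\E\bb{1/R_M}$ (bounded by hypothesis) times $(\E|\Delta_M|^{2+\alpha})^{1/(1+\epsilon)}=O(M^{-s'/2})$ with $s'>2$ after choosing $\epsilon=\alpha/3$, which beats the factor $M$. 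Your argument can be repaired in the same spirit without importing that lemma: on $\{R_M<\mu/2\}$ do not discard the logarithm; instead use $\phi(u)\le\log(\mu/R_M)\le C_\epsilon(\mu/R_M)^{\epsilon}$ for $\mu/R_M\ge 2$, and then H\"older with exponents $1/\epsilon$ and $1/(1-\epsilon)$ gives
\begin{equation*}
M\,\E\bigl[(\mu/R_M)^{\epsilon}\mathbf{1}[R_M<\mu/2]\bigr]\le M\,\bigl(\mu\,\E\bb{1/R_M}\bigr)^{\epsilon}\,\P\bb{R_M<\mu/2}^{1-\epsilon}=O\bigl(M^{1-(1+\alpha/2)(1-\epsilon)}\bigr),
\end{equation*}
which tends to zero once $\epsilon<\alpha/(2+\alpha)$. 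Without some such move that reduces the exponent on $1/R_M$ below one, the proof does not go through.
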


Maddison et al. \citep{maddison_filtering_2017} give a related result. Their Proposition 1 applied to $R_M$ gives the same conclusion (after an argument based on the Marcinkiewicz-Zygmund inequality; see appendix) but requires the sixth central moment to exist, whereas we require only existence of $\E|R-p(\x)|^{2+\alpha}$ for any $\alpha > 0$. The $\limsup$ assumption on $\E 1/R_M$ is implied by assuming that $\E 1/R_M < \infty$ for any finite $M$ (or for $R$ itself). Rainforth et al. \citep[Theorem 1 in Appendix]{rainforth_tighter_2018} provide a related asymptotic for errors in gradient variance, assuming at least the third moment exists.

Directly minimizing the variance of $R$ is equivalent to minimizing the
$\chi^{2}$ divergence between $q(\z)$ and $p(\z\vert\x)$, as
explored by Dieng et al. \citep{dieng_variational_2017}. Overdispersed VI \citep{ruiz_overdispersed_2016} reduces the variance of score-function estimators using heavy-tailed distributions.

The quantity inside the parentheses on the left-hand side
is exactly the KL-divergence between $q_{M}$ and $p_{M}$ in Eq.
\ref{eq:IWELBO-decomp}, and accordingly, even for constant $q$,
this divergence asymptotically decreases at a $1/M$ rate.

The variance of $R$ is a well-explored topic in traditional importance
sampling. Here the situation is reversed from traditional
VI-- since $R$ is non-negative, it is very \emph{large} values of
$R$ that can cause high variance, which occurs when $q(\z)\ll p(\z,\x).$
The typical recommendation is ``defensive sampling'' or using a
widely-dispersed proposal \citep{owen_monte_2013}. For these reasons, we believe that the best form for $q$ will vary
depending on the value of $M$. Figure \ref{fig:RM-example} explores
a simple example of this in 1-D.


\section{Elliptical Distributions}

Elliptical distributions are a generalization of Gaussians
that includes the Student-T, Cauchy, scale-mixtures of Gaussians, and many others.
The following short review assumes a density function exists,
enabling a simpler presentation than the typical one based on characteristic
functions \citep{fang_symmetric_1990}.

We first describe the special case of spherical distributions. Take some density $\rho(r)$ for a non-negative $r$ with
$\int_{0}^{\infty}\rho(r)=1$. Define the \textbf{spherical random
variable} $\ep$ corresponding to $\rho$ as
\begin{equation}
\ep=r{\bf u},\ r\sim\rho,\ {\bf u}\sim S,\label{eq:spherical-process}
\end{equation}
where $S$ represents the uniform distribution over the unit sphere
in $d$ dimensions. The density of $\ep$ can be found using two observations.
First, it is constant for all $\ep$ with a fixed radius $\Verts{\ep}$.
Second, if if $q_{\ep}\pp{\ep}$ is integrated over $\{\ep:\Verts{\ep}=r\}$
the result must be $\rho(r)$. Using these, it is not hard to show
that the density must be
\begin{equation}
q_{\ep}\pp{\ep}=g(\Verts{\ep}_{2}^{2}),\ \ \ g\pp a=\frac{1}{S_{d-1}a^{\pp{d-1}/2}} \rho\pars{\sqrt{a}},\label{eq:g-def}
\end{equation}
where $S_{d-1}$ is the surface area of the unit sphere in $d$ dimensions (and so $S_{d-1} a^{\pp{d-1}/2}$ is the surface area of the sphere with radius $a$) and $g$ is the \textbf{density generator}.

Generalizing, this, take some positive definite matrix $\Sigma$ and
some vector $\mu$. Define the \textbf{elliptical random variable} $\z$
corresponding to $\rho, \Sigma$, and $\u$ by
\begin{equation}
\z=rA^{\top}{\bf u}+\u,\ r\sim\rho,\ {\bf u}\sim S,\label{eq:generative-process-elliptical}
\end{equation}
where $A$ is some matrix such that $A^{\top}A=\Sigma$. Since $\z$
is an affine transformation of $\ep$, it is not hard to show by the
``Jacobian determinant'' formula for changes of variables that the
density of $\z$ is

\begin{equation}
q({\bf z}\vert\u,\Sigma)=\frac{1}{\verts{\Sigma}^{1/2}}g\pars{\pars{\z-\u}^{T}\Sigma^{-1}\pars{\z-\u}},\label{eq:elliptical-using-generator-1}
\end{equation}
where $g$ is again as in Eq. \ref{eq:g-def}. The mean and covariance
are $\E[\z]=\u,$ and $\C\bb{\z}=\pars{\E\bb{r^{2}}/d}\Sigma.$

For some distributions, $\rho(r)$ can be found from observing that
$r$ has the same distribution as $\Vert\ep\Vert.$ For example, with
a Gaussian, $r^{2}=\Vert\ep\Vert^{2}$ is a sum of $d$ i.i.d. squared
Gaussian variables, and so, by definition, $r\sim\chi_{d}$.

\section{Reparameterization and Elliptical Distributions\label{sec:Reparameterization-of-Elliptical}}

Suppose the variational family $q(\z \vert w)$ has parameters $w$ to optimize during inference.
The reparameterization trick is based on finding some density
$q_{\ep}\pp{\ep}$ independent of $w$ and a ``reparameterization function'' $\T(\ep;w)$
such that $\T\pars{\ep;w}$ is distributed as $q(\z\vert w).$ Then,
the ELBO can be re-written as
\[
\text{ELBO}[q\pp{\z \vert w }\Vert p\pp{\z,\x}]=\E_{q_{\ep}(\ep)}\log\frac{p(\T(\ep;w),\x)}{q(\T(\ep;w)\vert w)}.
\]
The advantage of this formulation is that the expectation is independent
of $w$. Thus, computing the gradient of the term inside the expectation
for a random $\ep$ gives an unbiased estimate of the gradient. By
far the most common case is the multivariate Gaussian distribution,
in which case the base density $q_{\ep}(\ep)$ is just a standard
Gaussian and for some $A_{w}$ such that $A_{w}^{\top}A_{w}=\Sigma_{w}$,
\begin{equation}
\T\pars{\ep;w}=A_{w}^{\top}\ep+\u_{w}\label{eq:affine-reparameterization}.
\end{equation}

\subsection{Elliptical Reparameterization}

To understand Gaussian reparameterization from the perspective of
elliptical distributions, note the similarity of Eq. \ref{eq:affine-reparameterization}
to Eq. \ref{eq:generative-process-elliptical}. Essentially, the reparameterization
in Eq. \ref{eq:affine-reparameterization} combines $r$ and ${\bf u}$
into $\ep=r{\bf u}$. This same idea can be applied more broadly:
for any elliptical distribution, \emph{provided the density generator $g$
is independent of $w$}, the reparameterization in Eq. \ref{eq:affine-reparameterization}
will be valid, provided that $\ep$ comes from the corresponding spherical
distribution.

While this independence is true for Gaussians, this is not the case for other elliptical distributions.
If $\rho_{w}$ itself is a function of $w$, Eq. \ref{eq:affine-reparameterization}
must be generalized. In that case, think of the generative process
(for $v$ sampled uniformly from $[0,1]$)
\begin{equation}
\T\pp{{\bf u},v;w}=F_{w}^{-1}\pp vA_{w}^{T}{\bf u}+\u_{w},\label{eq:elliptical-T}
\end{equation}
where $F_{w}^{-1}(v)$ is the inverse CDF corresponding to the distribution
$\rho_{w}\pp r$. Here, we should think of the vector $({\bf u},v)$
playing the role of $\ep$ above, and the base density as $q_{{\bf u},v}({\bf u},v)$
being a spherical density for ${\bf u}$ and a uniform density for
$v$.

To calculate derivatives with respect to $w$, backpropagation through $A_{w}$ and $\u_{w}$
is simple using any modern autodiff system. So, if the inverse CDF $F^{-1}_w$ has a closed-form, autodiff can be directly applied to Eq. \ref{eq:elliptical-T}. If the inverse CDF does not have a simple closed-form, the following section shows that only
the CDF is actually needed, provided that one can at least sample from
$\rho\pp r$.

\subsection{Dealing CDFs without closed-form inverses}

For many distributions $\rho$, the inverse CDF may not have a simple
closed form, yet highly efficient samplers still exist (most commonly
custom rejection samplers with very high acceptance rates). In such
cases, one can still achieve the \emph{effect} of Eq. \ref{eq:elliptical-T}
on a random $v$ using only the CDF (not the inverse). The idea is
to first directly generate $r\sim\rho_{w}$ using the specialized
sampler, and only then find the corresponding $v=F_{w}(r)$ using
the closed-form CDF. To understand this, observe that if $r \sim \rho$ and $v \sim \mathrm{Uniform}[0,1]$, then the pairs $(r, F_w(r))$ and $(F^{-1}_w(v), v)$ are identically distributed. Then, via the implicit function
theorem, $\nabla_{w}F_{w}^{-1}\pp v=-\nabla_{w}F_{w}(r)\big/\nabla_{r}F_{w}(r).$
All gradients can then be computed by ``pretending'' that one had
started with $v$ and computed $r$ using the inverse CDF.

\subsection{Student T distributions}

The following experiments will consider student T distributions. The
spherical T distribution can be defined as $\ep =\sqrt{\nu}{\boldsymbol \delta}/s$
where ${\boldsymbol \delta}\sim \N(0,I)$ and $s\sim\chi_{\nu}$ \citep{fang_symmetric_1990}. Equivalently, write $r=\Vert \ep \Vert = \sqrt{\nu} t / s$ with $t \sim {\chi}_{d}$. This shows that $r$ is the ratio of two independent $\chi$ variables, and thus determined by an F-distribution, the CDF of which could be used directly in Eq. \ref{eq:elliptical-T}. We found a slightly ``bespoke'' simplification helpful. As there is no need for gradients with respect to $d$ (which is fixed), we represent $\ep$ as $\ep = (\sqrt{\nu} t / s) \bf{u}$, leading to reparameterizing the elliptical T distribution as
\vspace{-5pt}
\[
\T\pp{{\bf u},t,v;w}=\frac{\sqrt{\nu} t}{F_{\nu}^{-1}(v)}A_{w}^{\top}{\bf u}+\u_{w},
\]
where $F_{\nu}$ is the CDF for the $\chi_{\nu}$ distribution.
This is convenient since the CDF of the $\chi$ distribution
is more widely available than that of the F distribution.


\section{Experiments}

All the following experiments compare ``E-IWVI'' using student T
distributions to ``IWVI'' using Gaussians. Regular ``VI'' is equivalent
to IWVI with $M=1$. 

We consider experiments on three distributions. In the first two, a computable $\log p(\x)$ enables estimation of the KL-divergence and computable true mean and variance of the posterior enable a precise evaluation of test integral estimation. On these, we used a
fixed set of $10,000\times M$ random inputs to $\T$ and optimized
using batch L-BFGS, avoiding heuristic tuning of a learning rate sequence.

A first experiment considered random Dirichlet distributions $p(\bm{\theta}\vert\bm{\alpha})$
over the probability simplex in $K$ dimensions, $\bm{\theta}\in\Delta^{K}$.
Each parameter $\alpha_{k}$ is drawn i.i.d. from a Gamma distribution
with a shape parameter of $10.$ Since this density is defined only
over the probability simplex, we borrow from Stan the strategy of
transforming to an unconstrained $\z\in\R^{K-1}$ space via a stick-breaking
process \citep{stan_development_team_modeling_2017}. To compute test
integrals over variational distributions, the reverse transformation
is used. Results are shown in Fig. \ref{fig:Dirichlets}.

\begin{figure}[t]
\begin{minipage}[t]{0.33\columnwidth}%
\hspace{10pt}\includegraphics[width=0.75\textwidth]{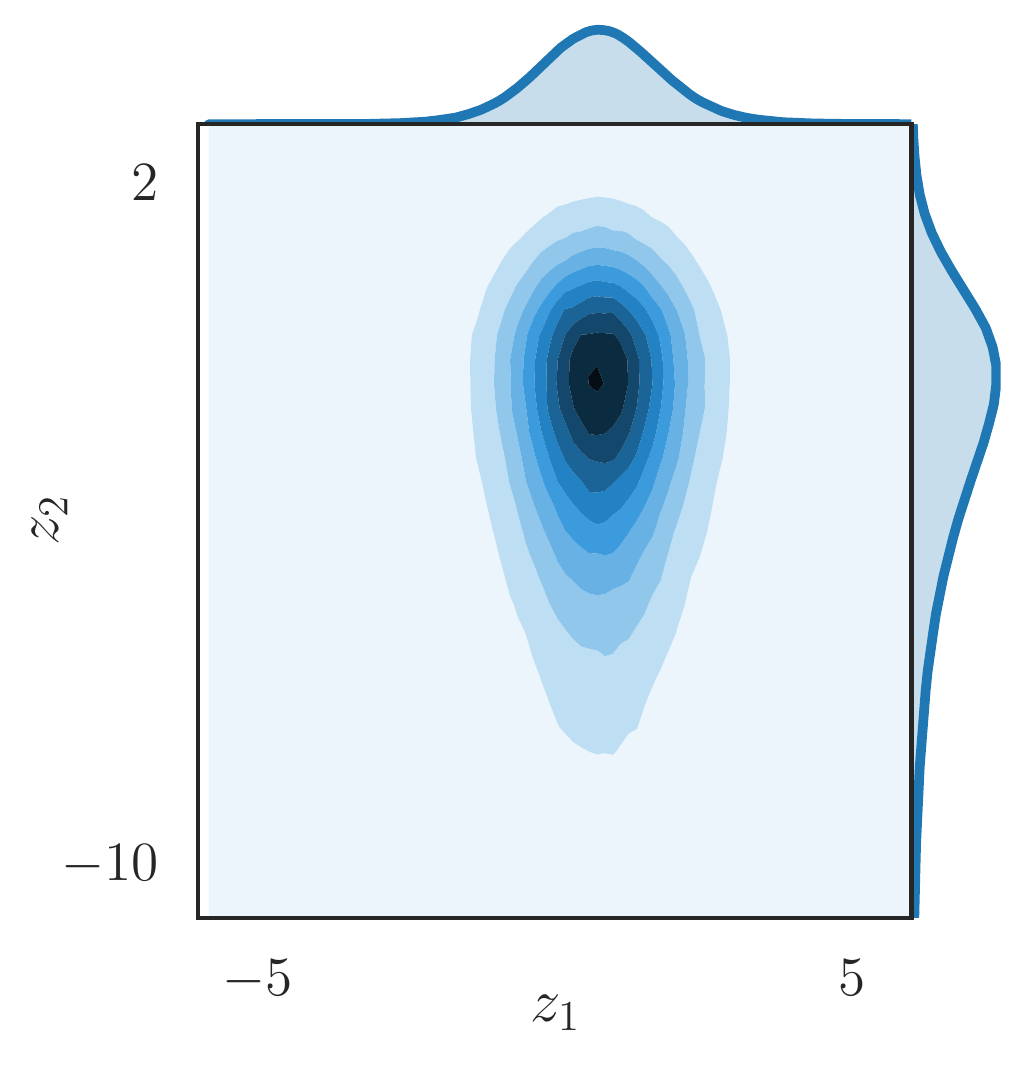}

\includegraphics[width=1\textwidth]{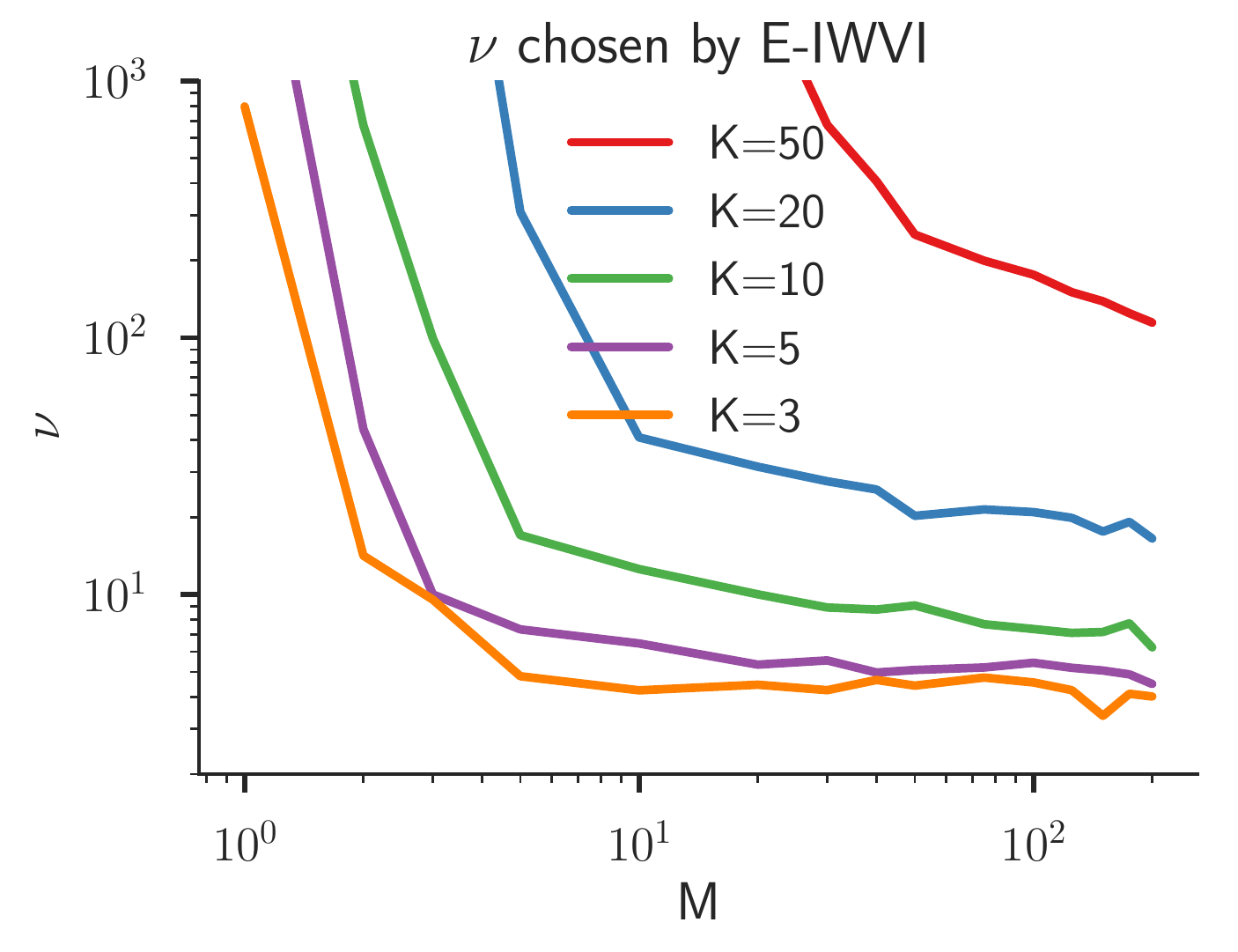}%
\end{minipage}%
\begin{minipage}[t]{0.66\columnwidth}%
\includegraphics[width=0.5\textwidth]{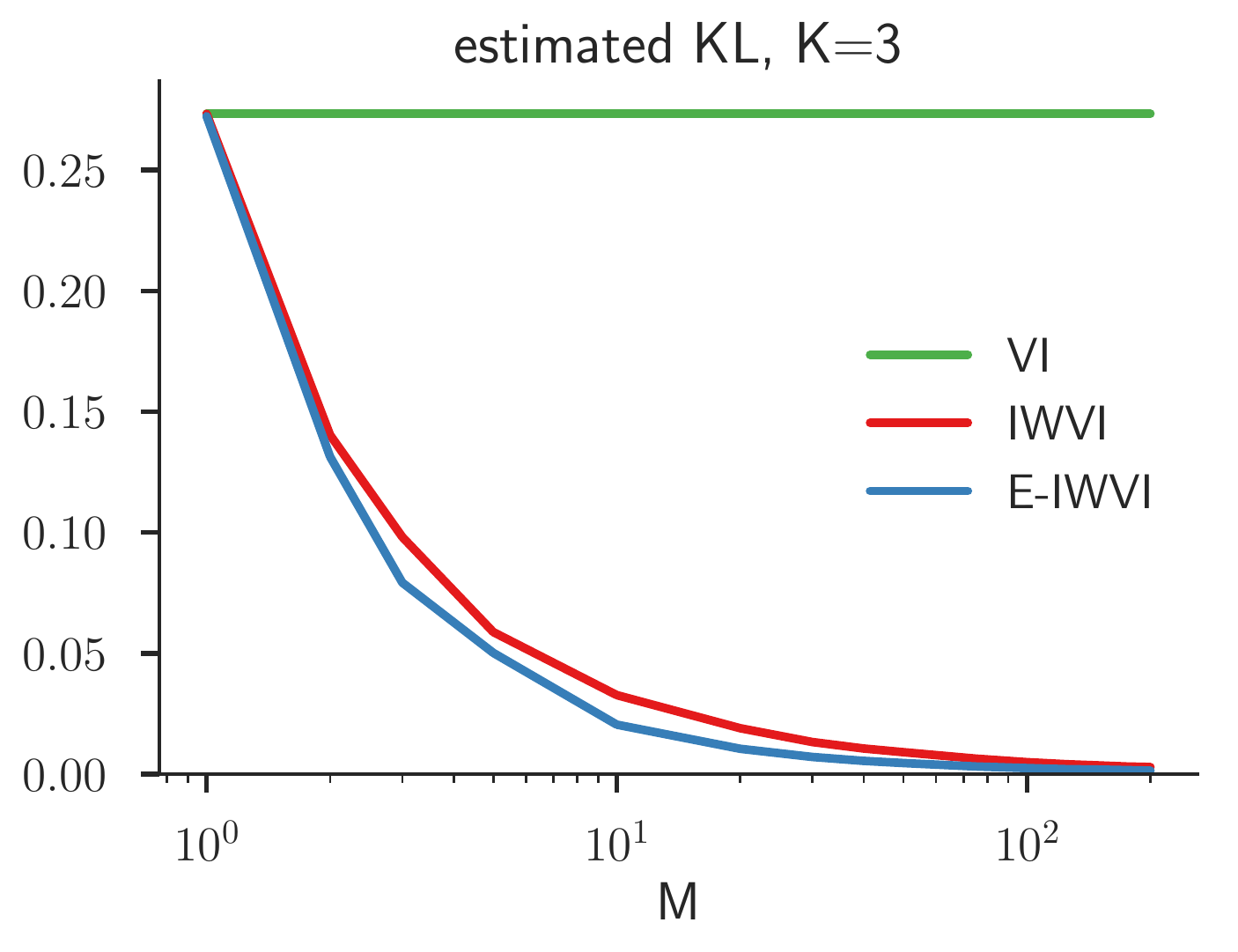}\includegraphics[width=0.5\textwidth]{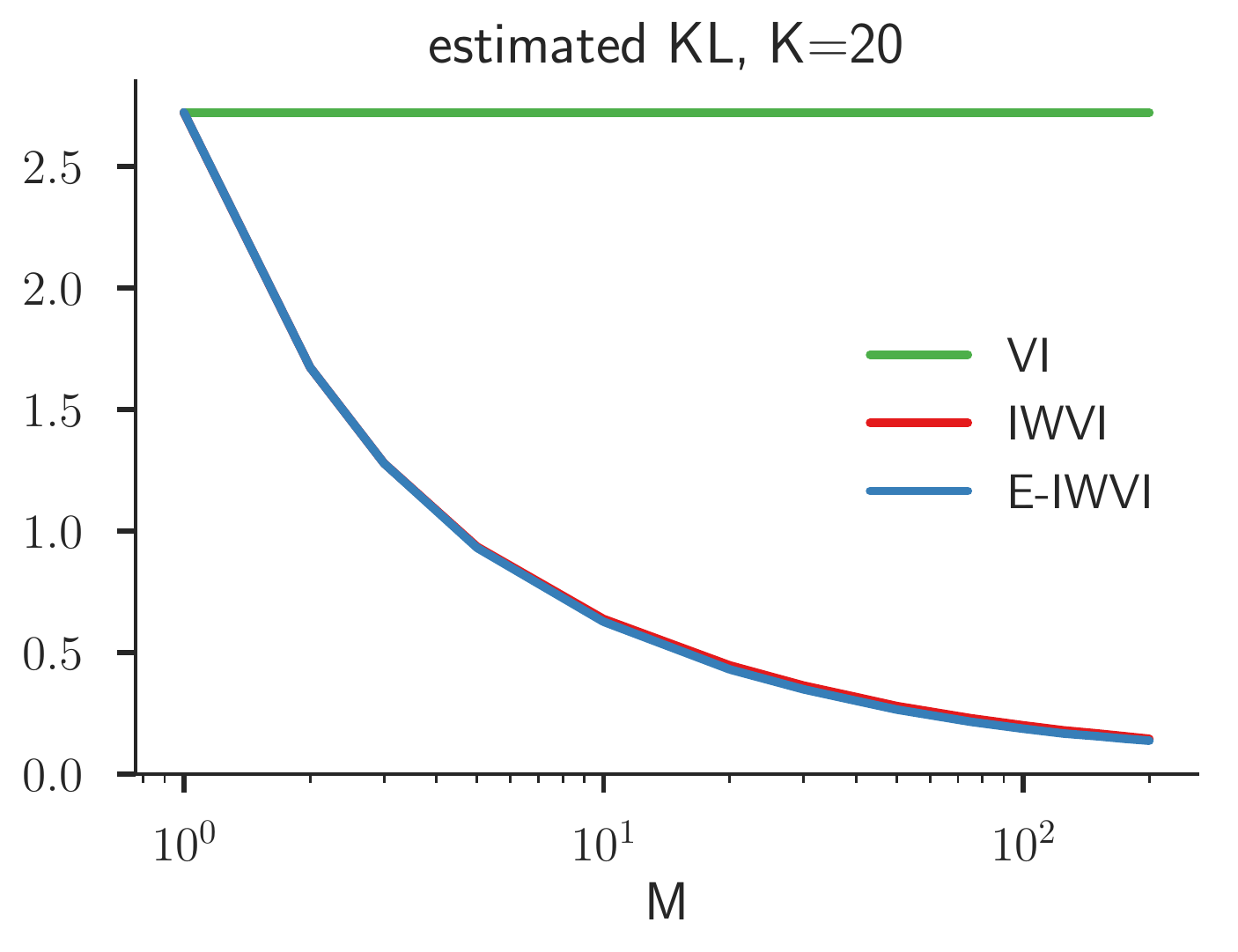}

\includegraphics[width=0.5\textwidth]{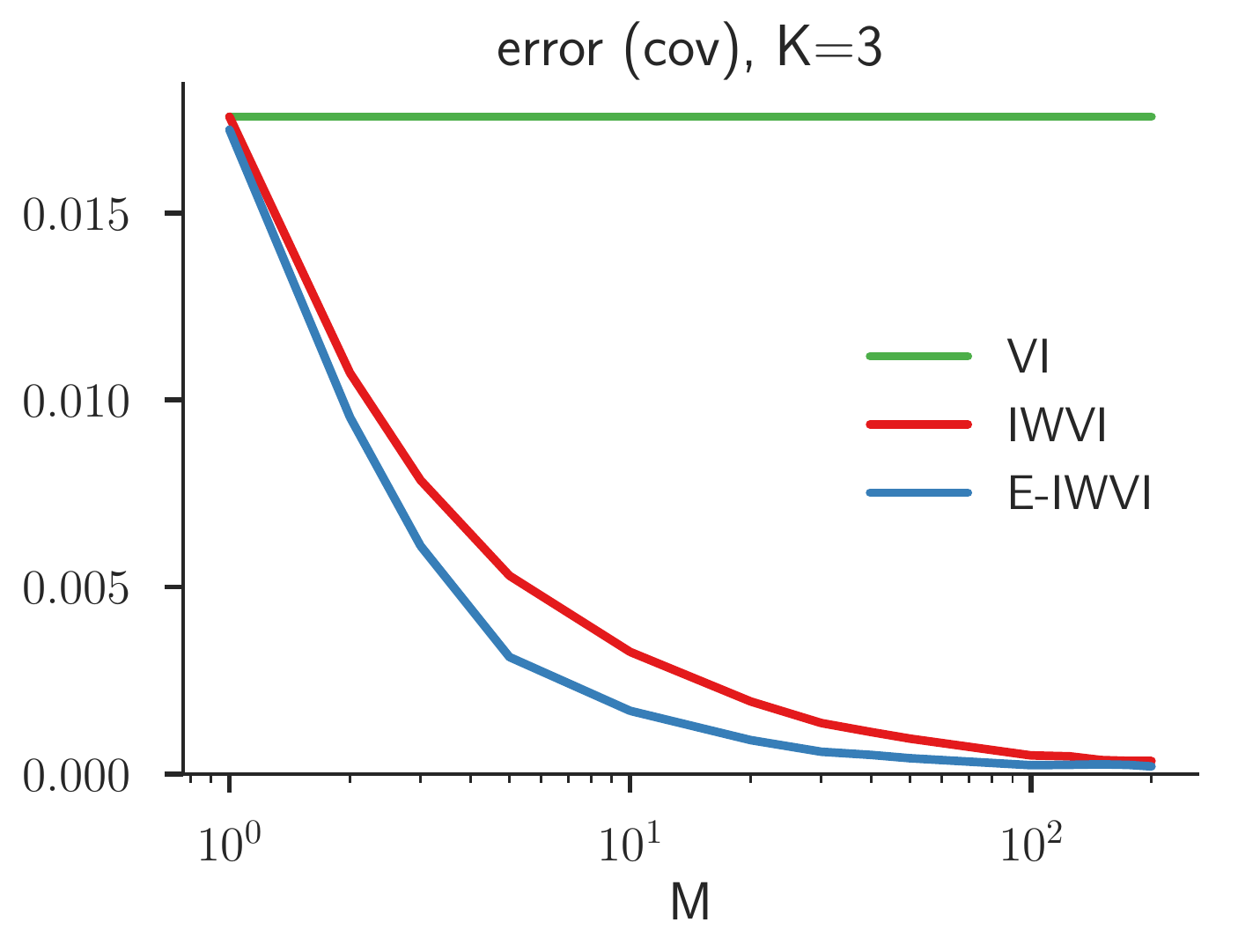}\includegraphics[width=0.5\textwidth]{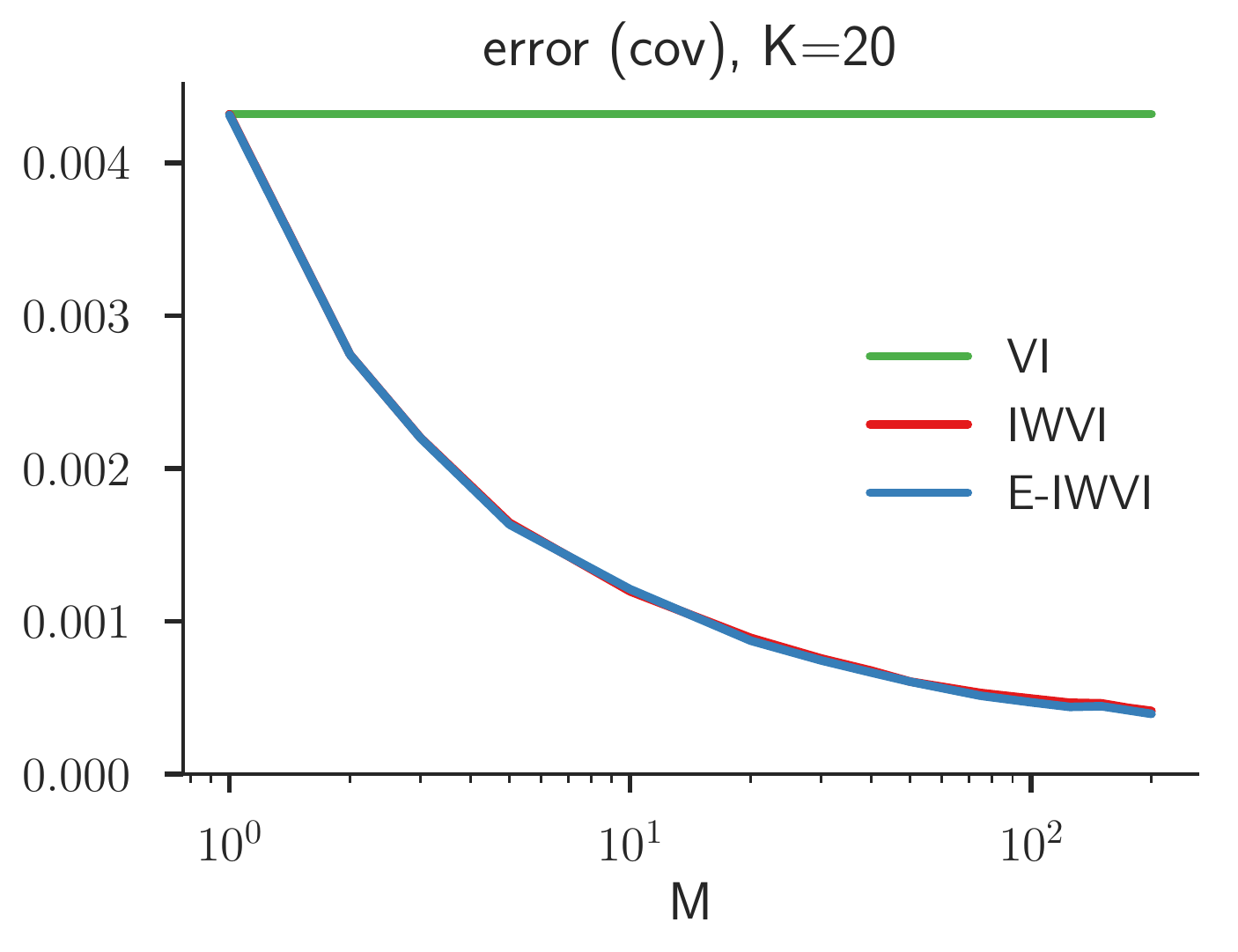}%
\end{minipage}

\caption{Random Dirichlets, averaged over 20 repetitions. Top left shows an example posterior for $K=3$. The test-integral
error is $\Vert\protect\C[\boldsymbol{\theta}]-\hat{\mathbb{C}}[\boldsymbol{\theta}]\Vert_{F}$
where $\hat{\mathbb{C}}$ is the empirical covariance of samples drawn
from $q_{M}({\bf z}_{1})$ and then transformed to $\Delta^{K}$. In all cases, IWVI is able to reduce the error of VI to negligible levels. E-IWVI provides an accuracy benefit in low dimensions but little when $K=20$. \label{fig:Dirichlets}}
\vspace{-10pt}
\end{figure}

\begin{figure}[t]

\begin{minipage}[t]{0.33\columnwidth}%
\hspace{10pt}\includegraphics[width=0.75\textwidth]{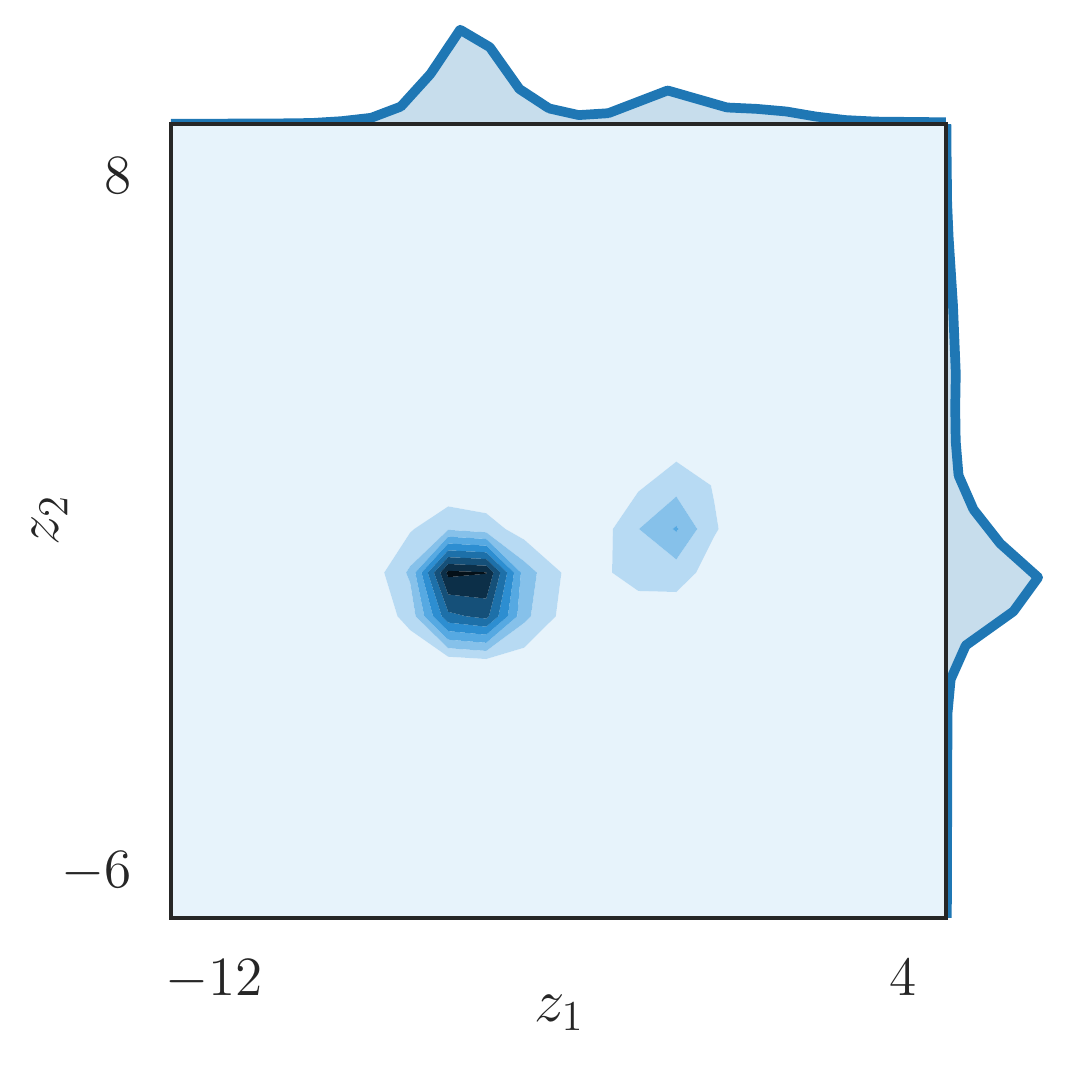}

\includegraphics[width=1\textwidth]{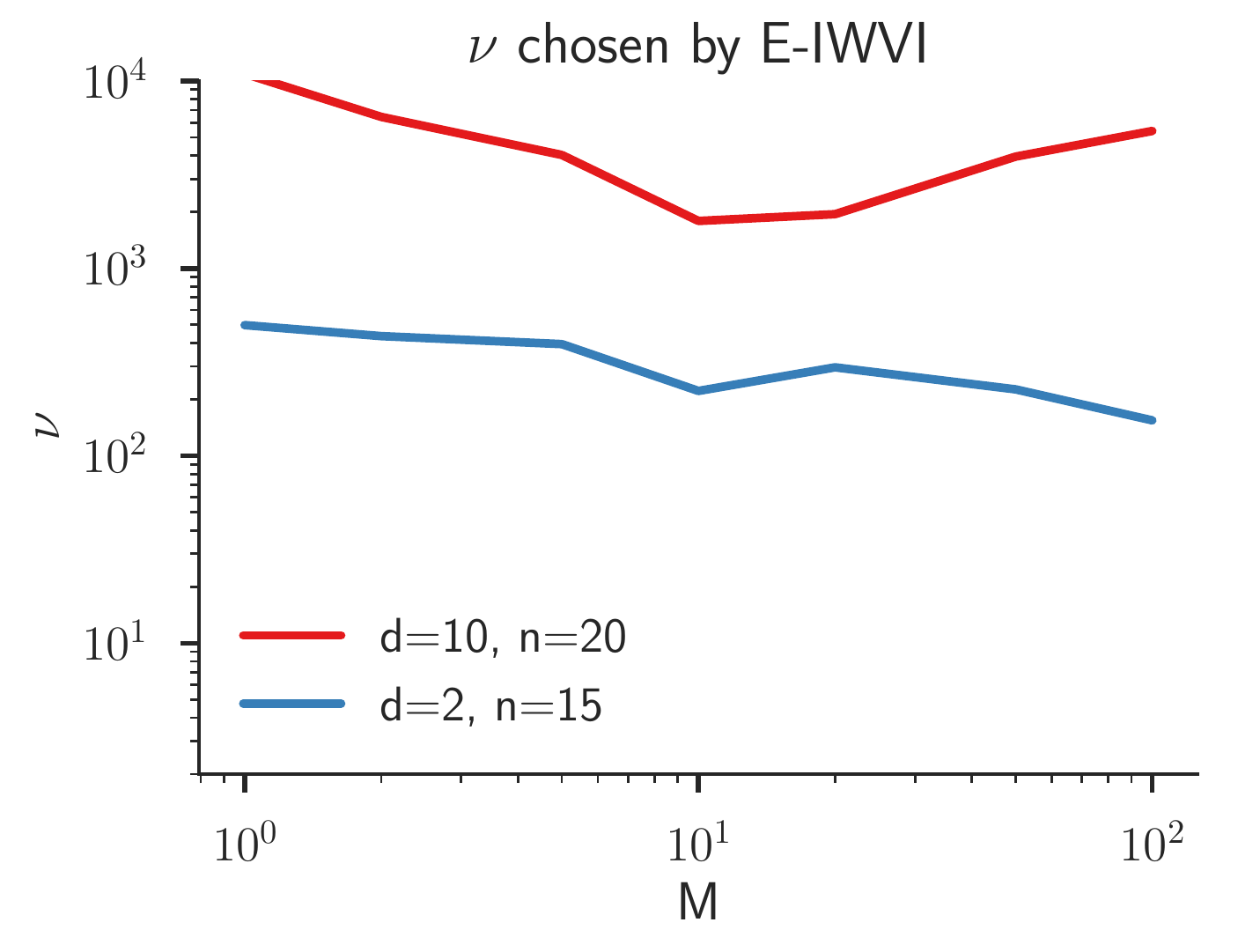}%
\end{minipage}%
\begin{minipage}[t]{0.66\columnwidth}%
\includegraphics[width=0.5\textwidth]{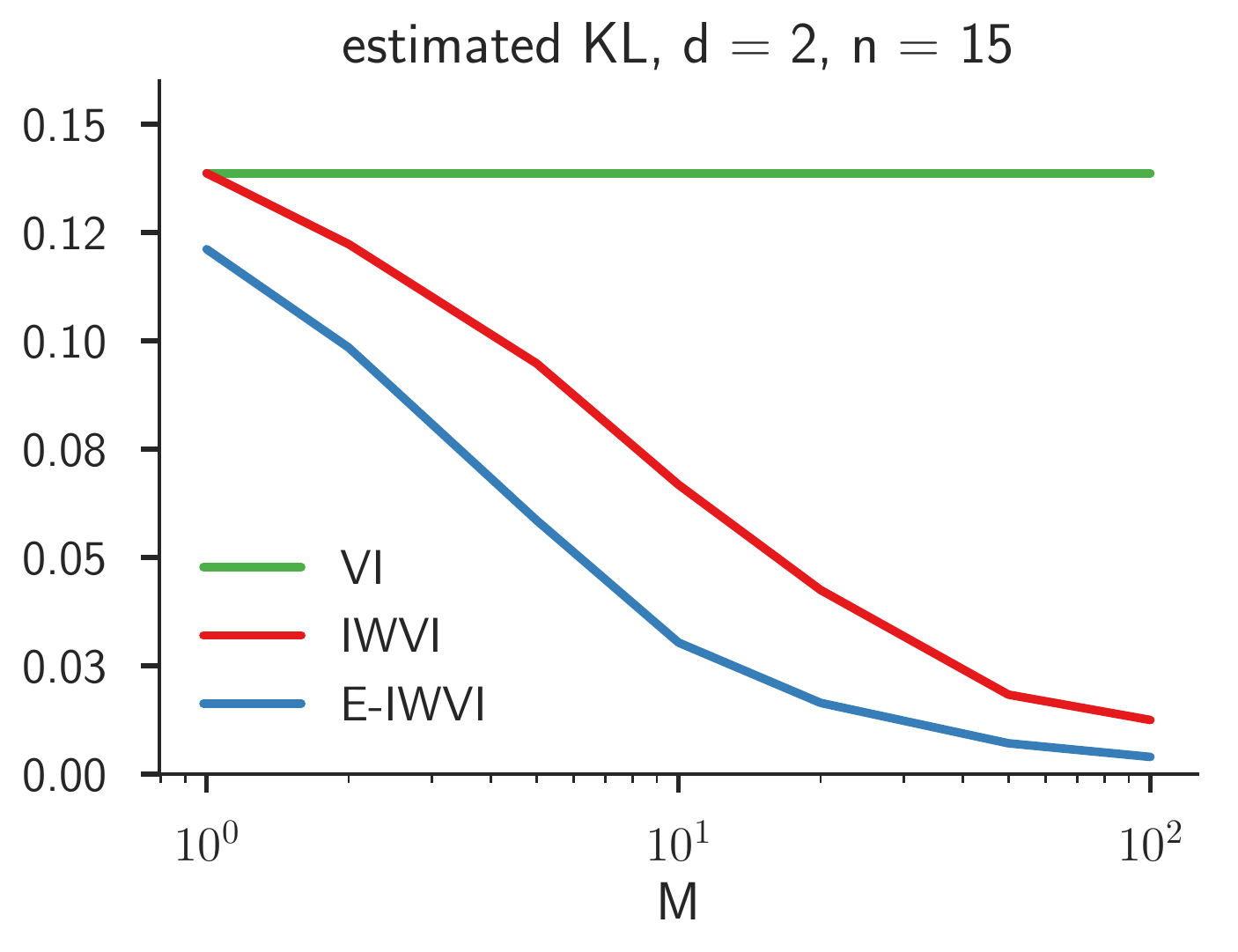}\includegraphics[width=0.5\textwidth]{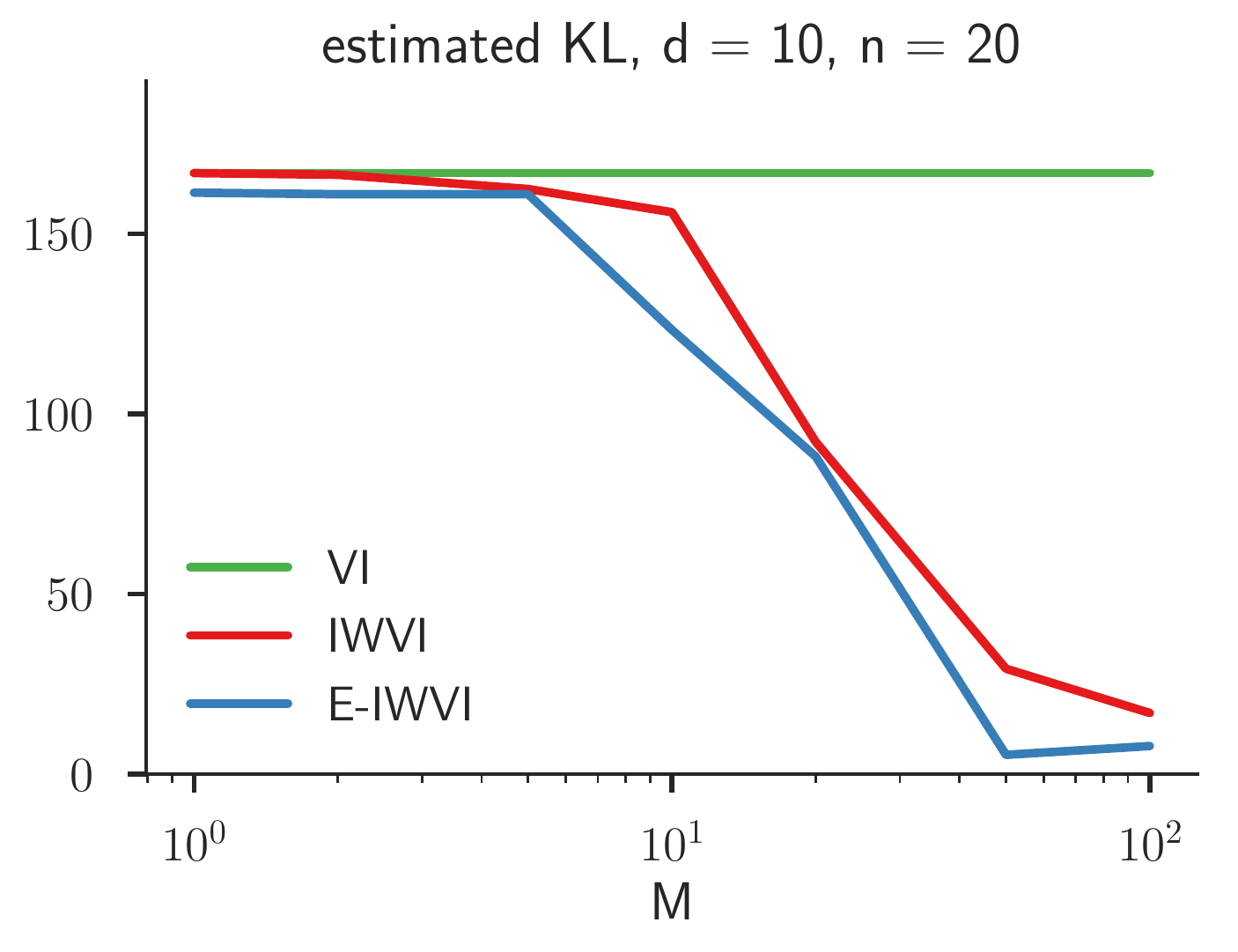}

\includegraphics[width=0.5\textwidth]{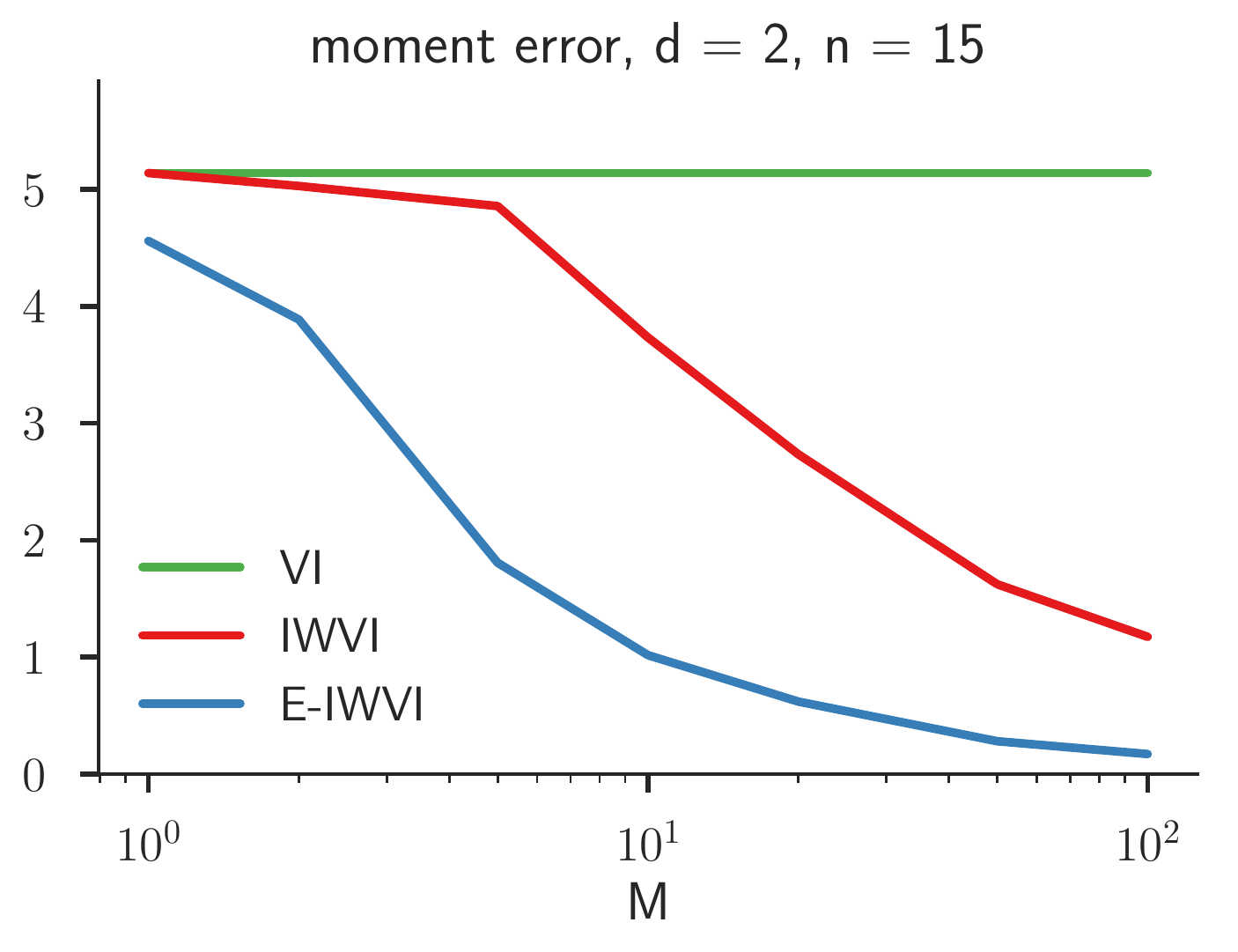}\includegraphics[width=0.5\textwidth]{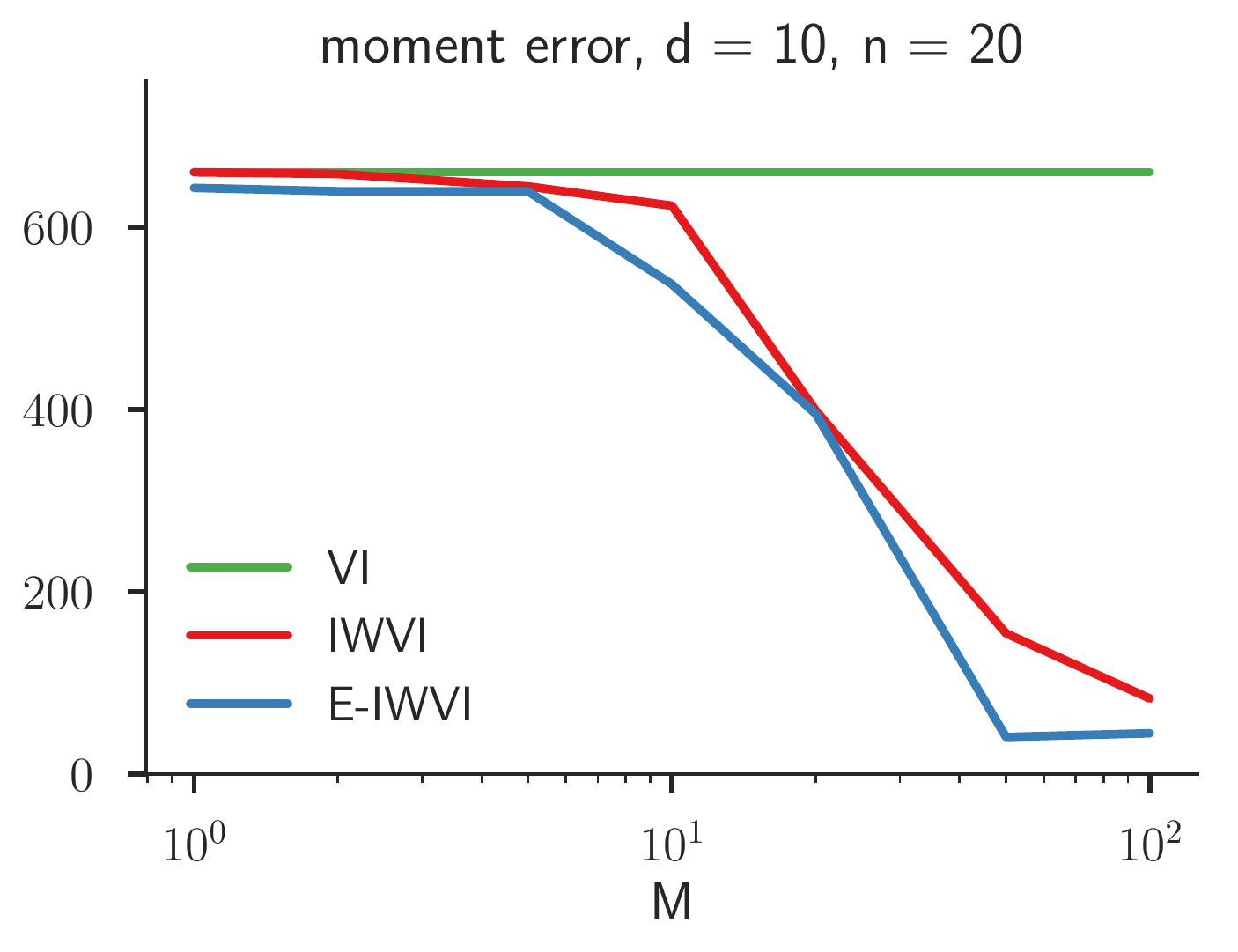}%
\end{minipage}

\caption{Clutter Distributions, averaged over 50 repetitions. The error shows the error in the estimated second moment $\protect\E[\protect\z\protect\z^{T}].$ \label{fig:Clutter-Distributions} IWVI reduces the errors of VI by orders of magnitude. E-IWVI provides a diminishing benefit in higher dimensions.}
\vspace{-10pt}
\end{figure}
A second experiment uses Minka's ``clutter'' model~\cite{minka_thomas_expectation_2001}: $\z \in \R^d$ is a hidden object location, and $\x = (\x_1, \ldots, \x_n)$ is a set of $n$ noisy observations, with $p(\z) = \N(\z; \mathbf{0}, 100 I)$ and $p(\x_i | \z) = 0.25\, \N(\x_i; \z, I) + 0.75\, \N(\x_i; 0, 10I)$. The posterior $p(\z \mid \x)$ is a mixture of $2^n$ Gaussians, for which we can do exact inference for moderate $n$. Results are shown in Fig. \ref{fig:Clutter-Distributions}.






\begin{figure}
\hspace{-10pt}
\noindent\begin{minipage}[t]{1\columnwidth}%
{\small{}}%
\begin{minipage}[t]{1.2\columnwidth}%
{\small{}\hspace{50pt}$M=1$\hspace{80pt}$M=5$\hspace{70pt}$M=20$\hspace{60pt}$M=100$}%
\end{minipage}{\small\par}

\scalebox{.3}{%
\begin{minipage}[t]{4\textwidth}%
\includegraphics{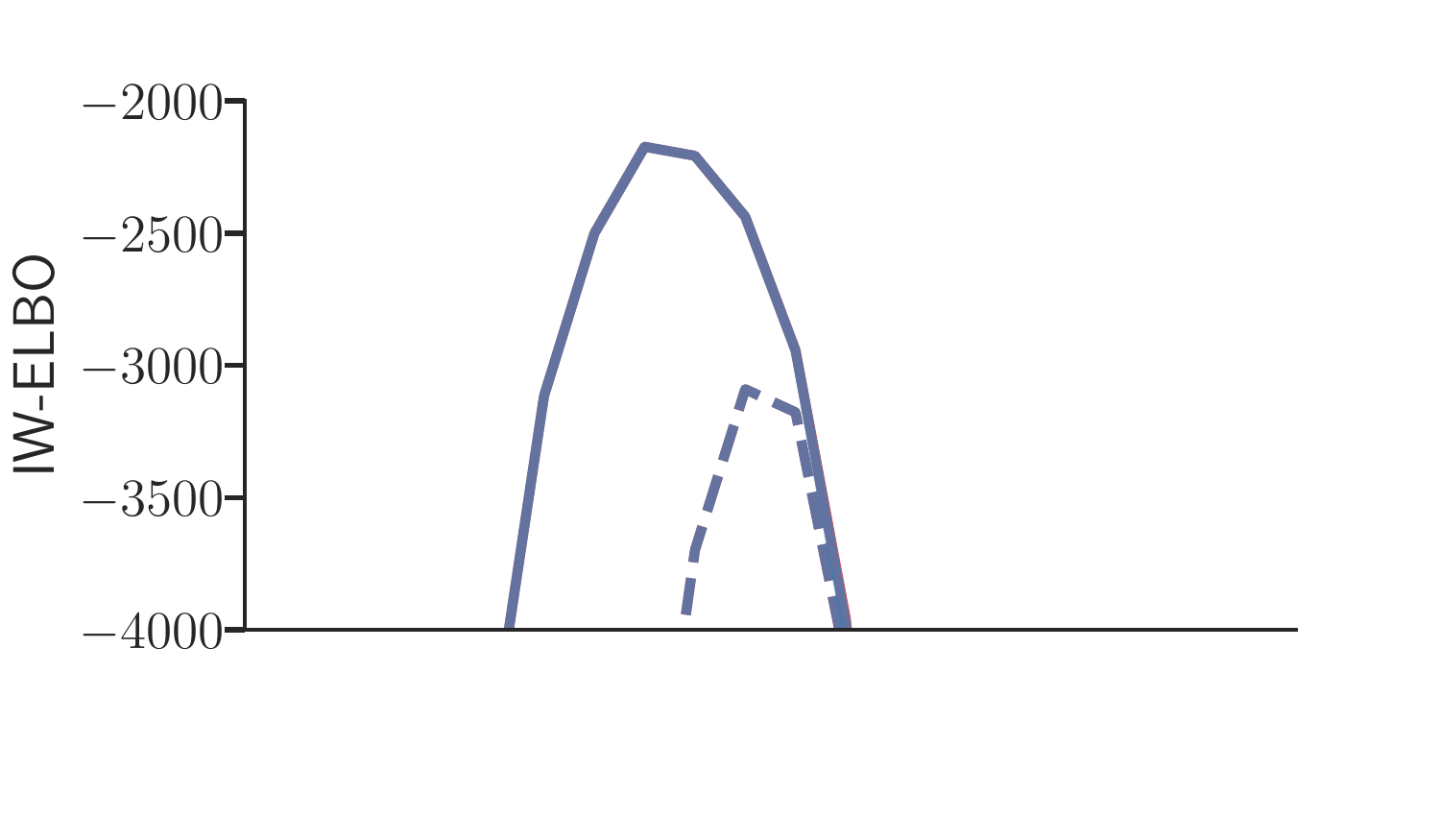}\hspace{-110pt}\includegraphics{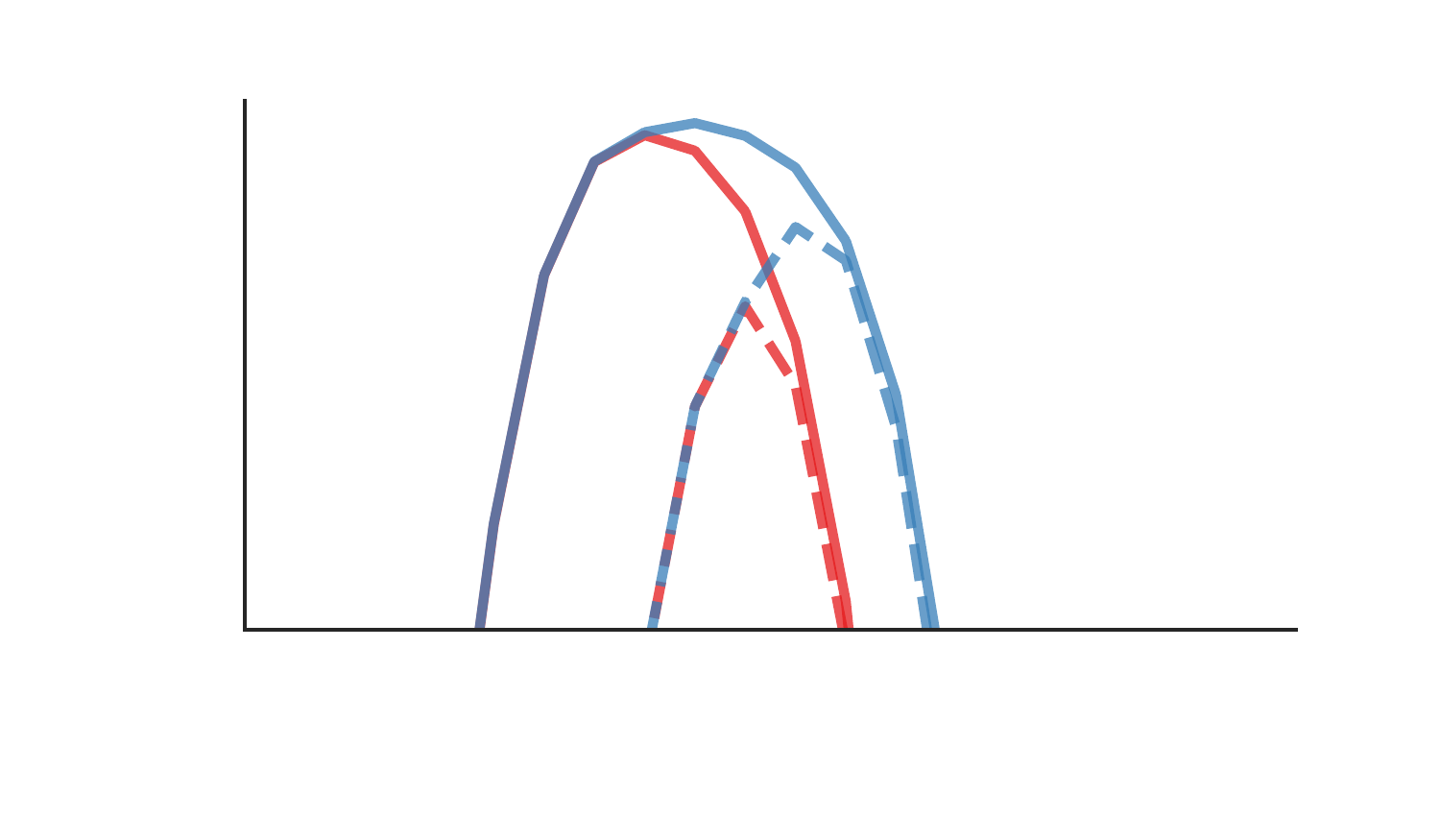}\hspace{-110pt}\includegraphics{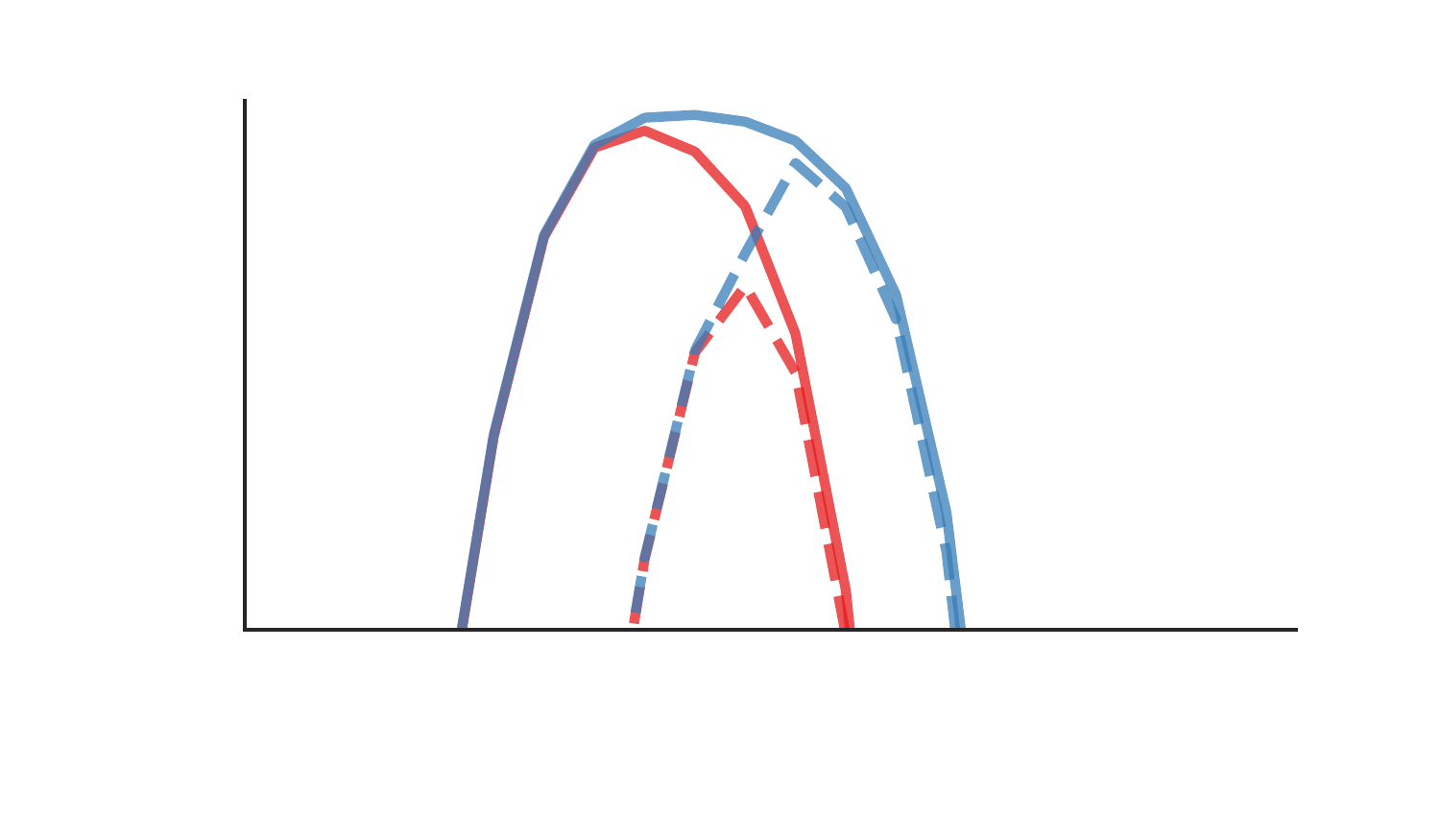}\hspace{-110pt}\includegraphics{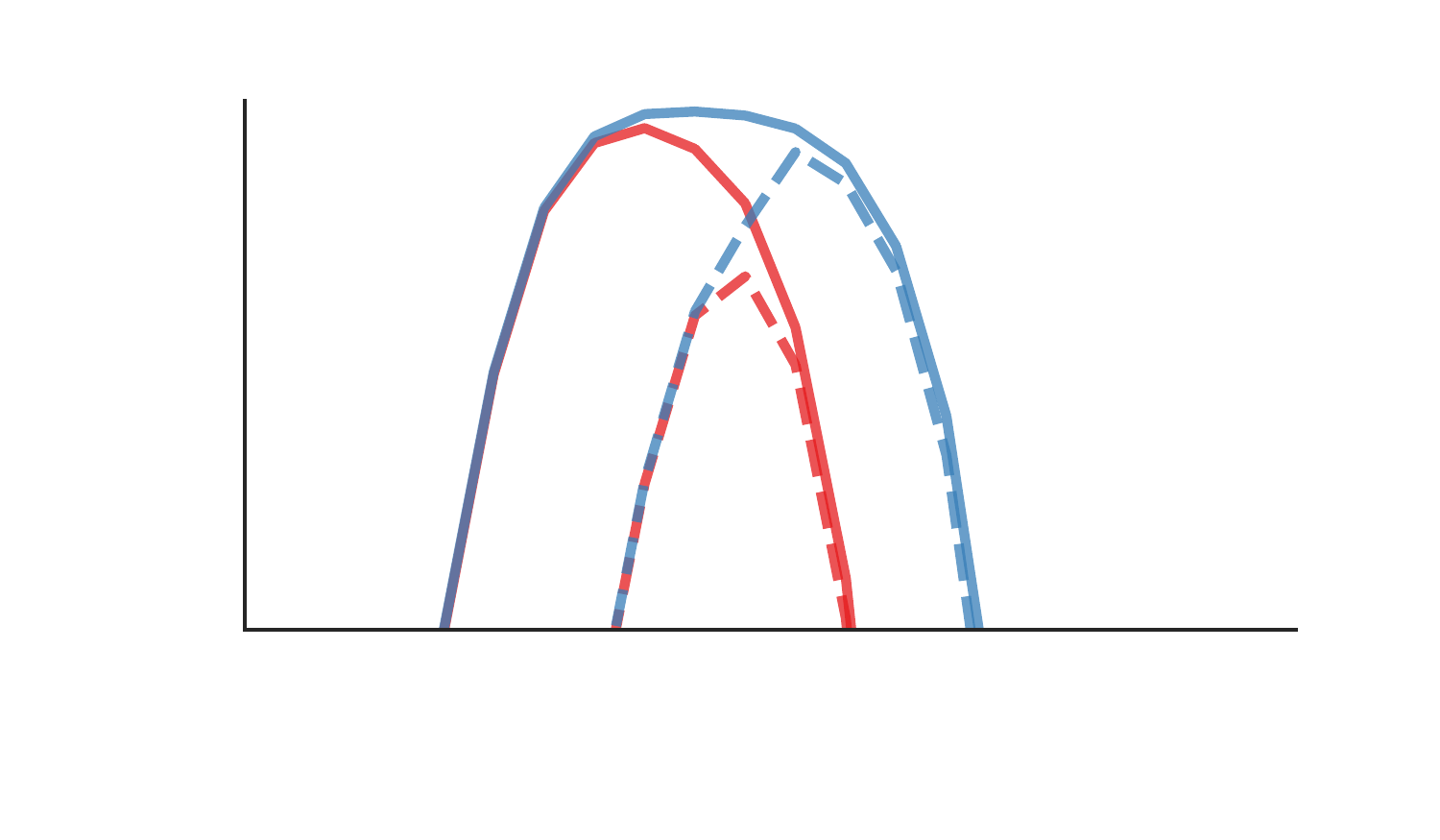}

\vspace{-75pt}

\includegraphics{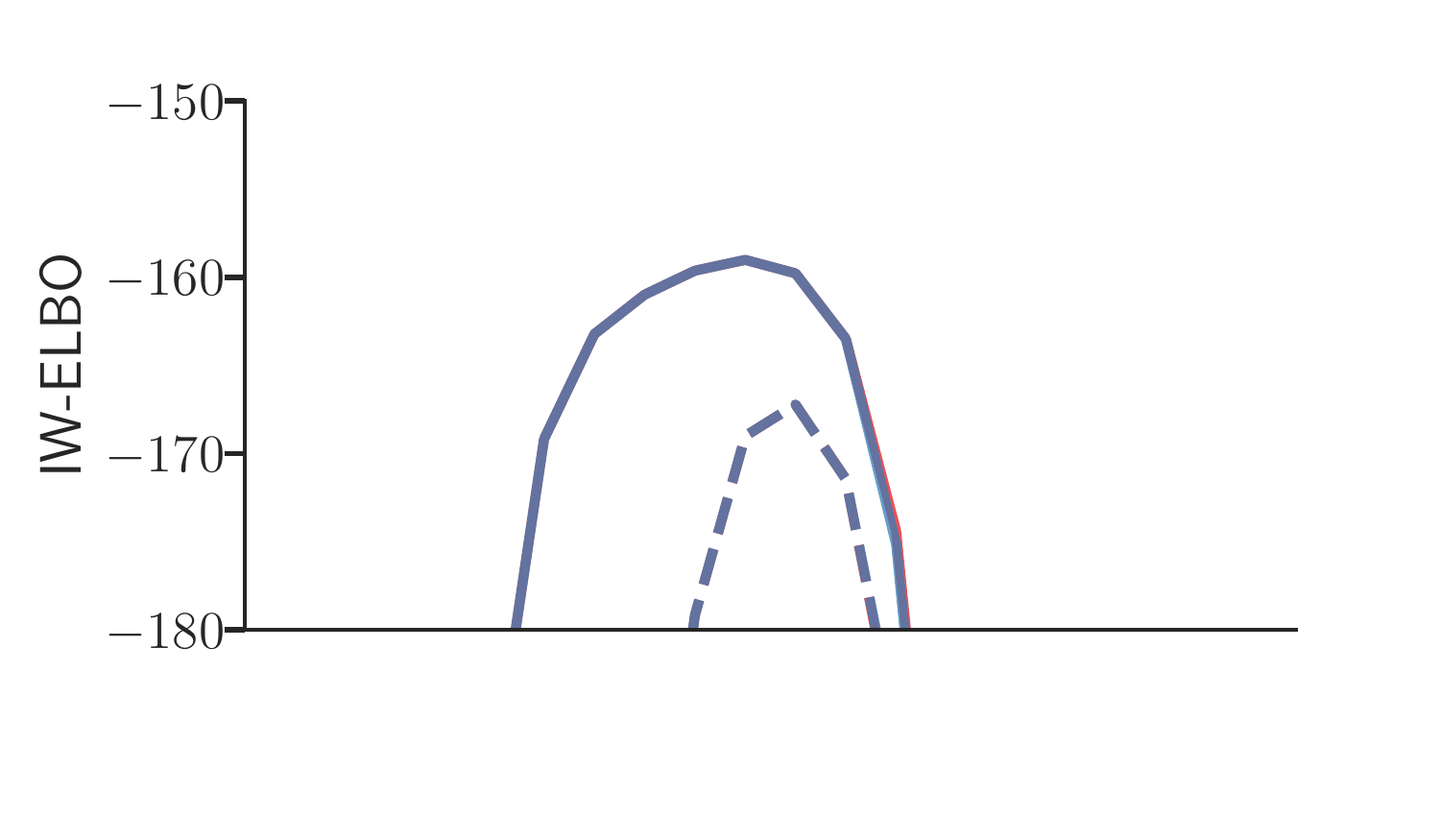}\hspace{-110pt}\includegraphics{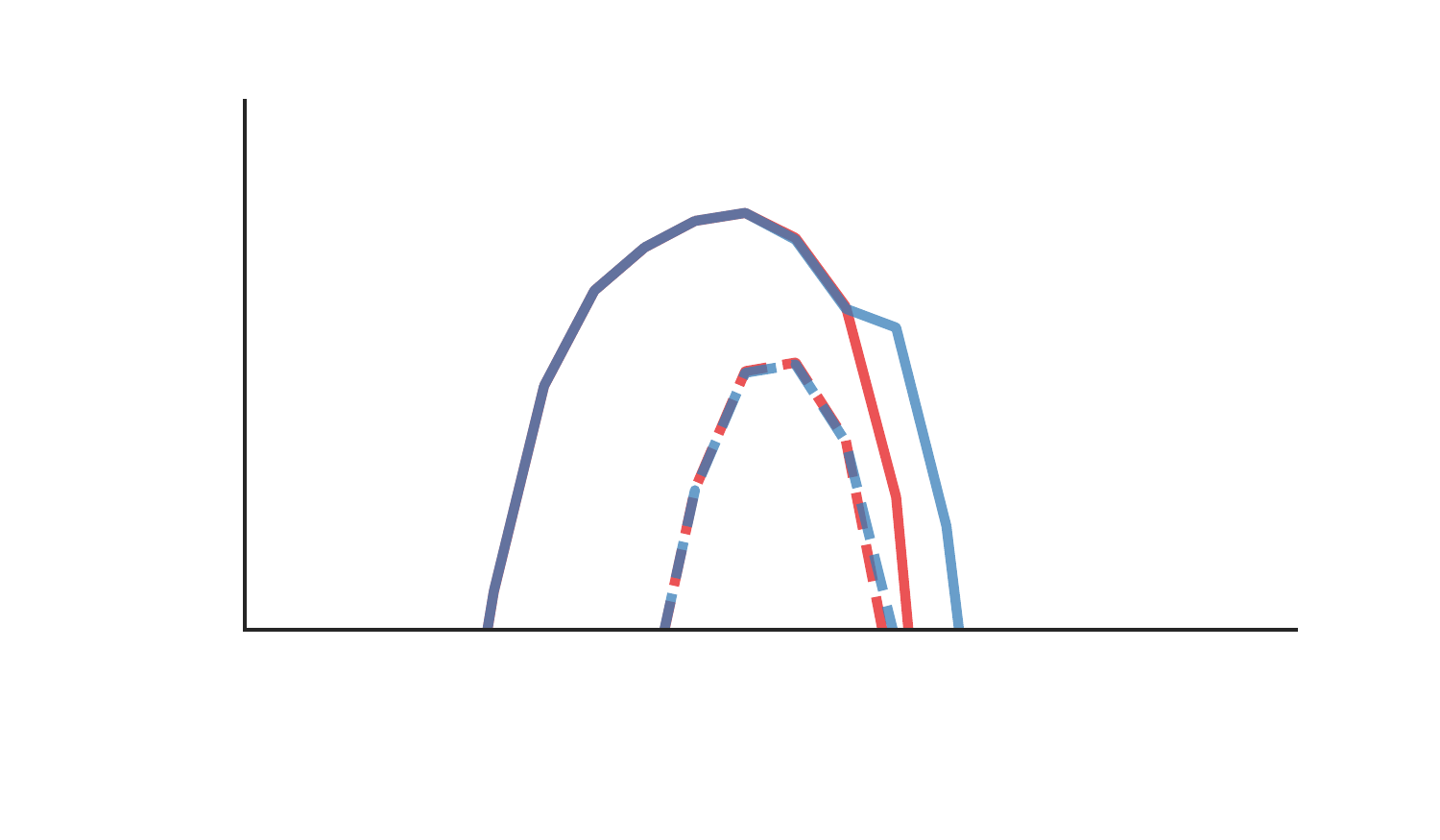}\hspace{-110pt}\includegraphics{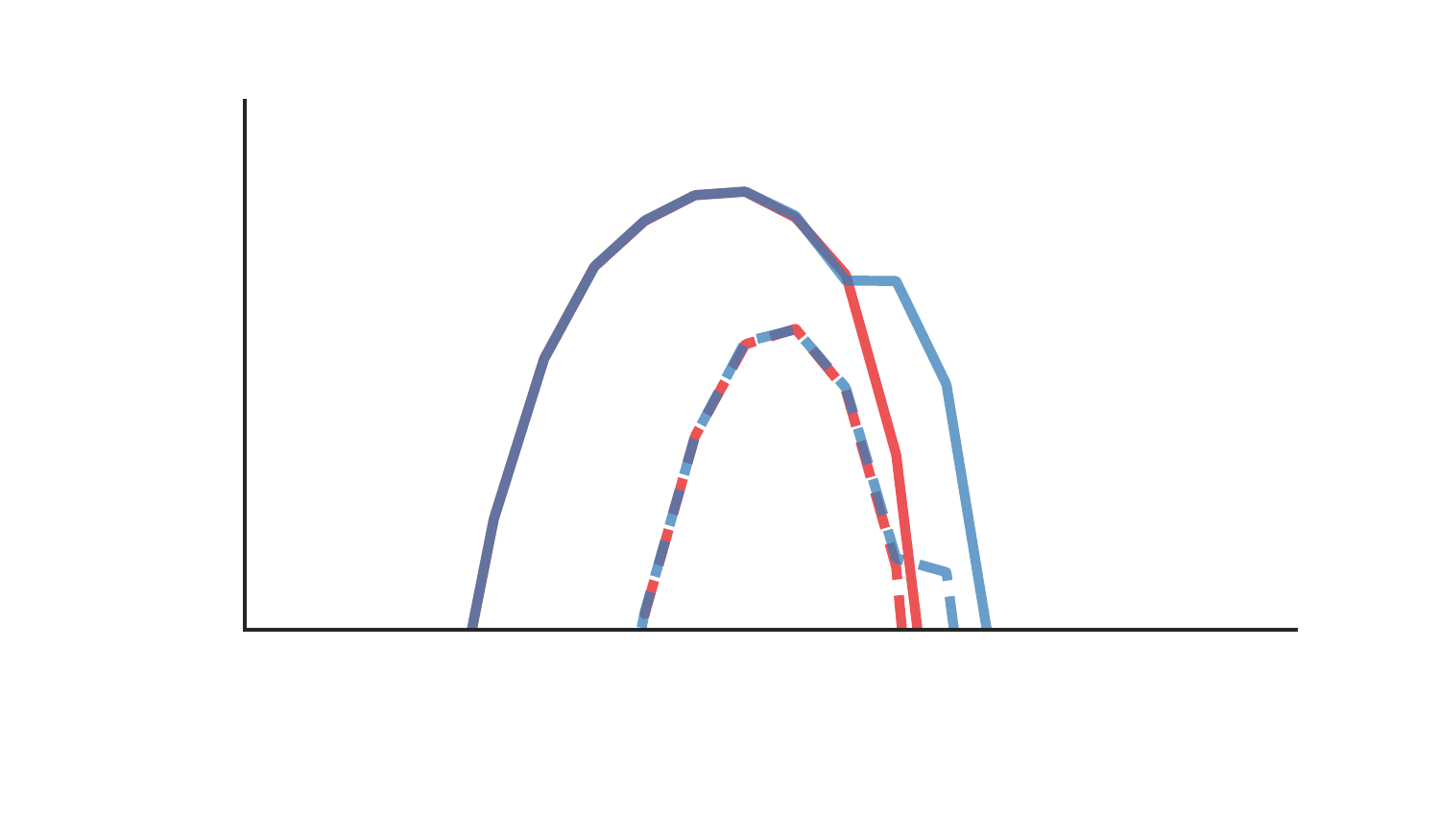}\hspace{-110pt}\includegraphics{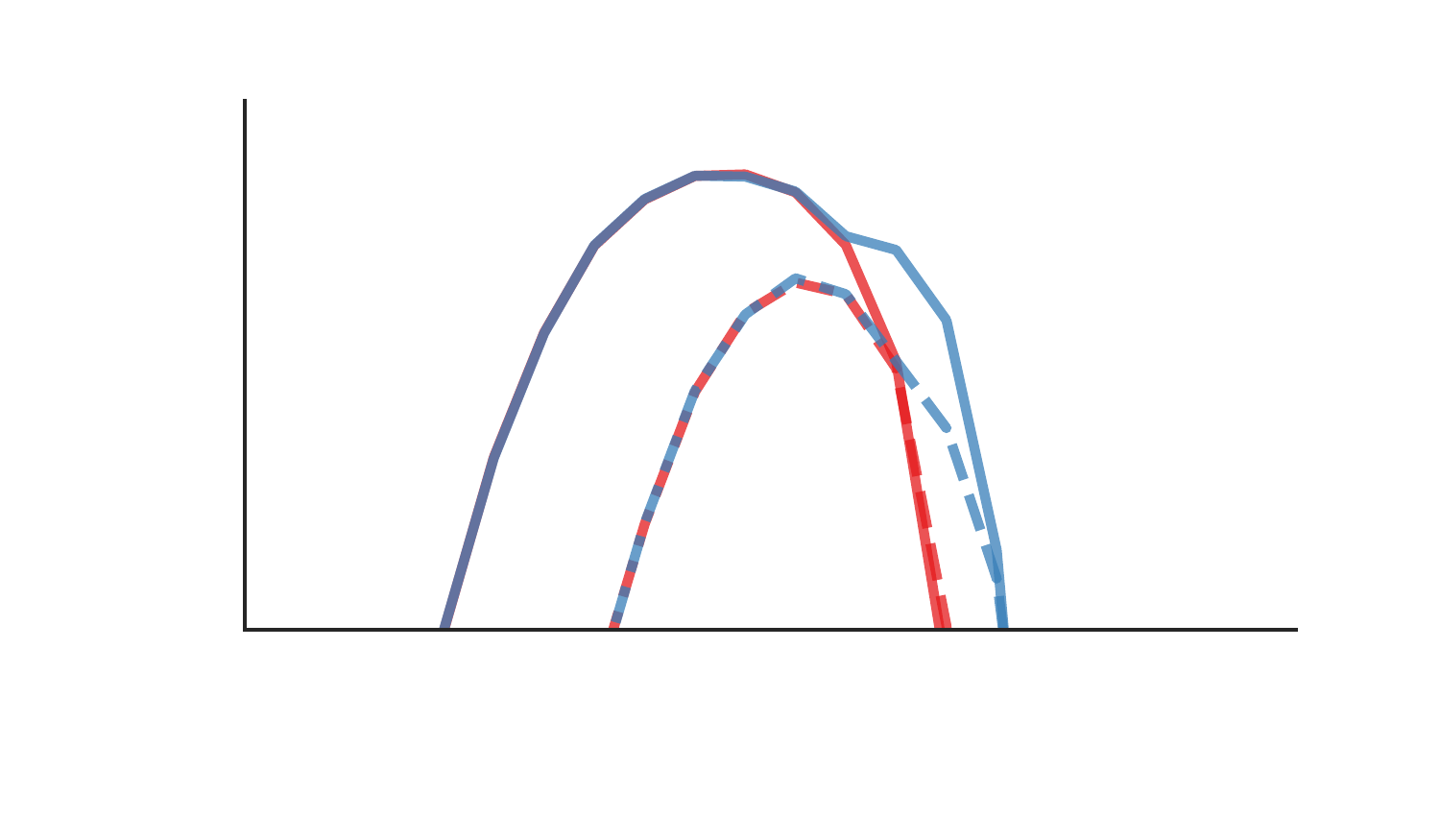}

\vspace{-75pt}

\includegraphics{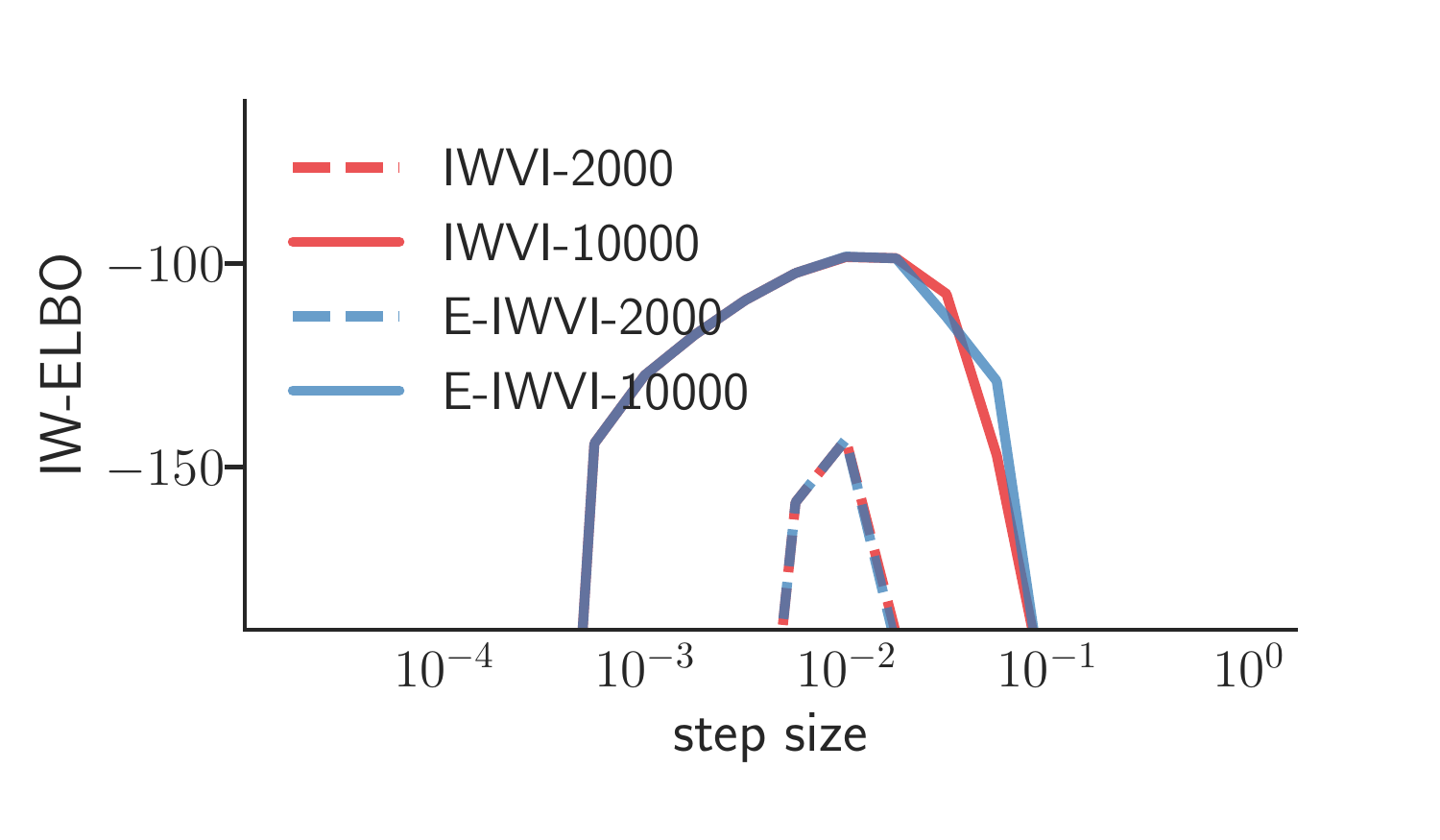}\hspace{-110pt}\includegraphics{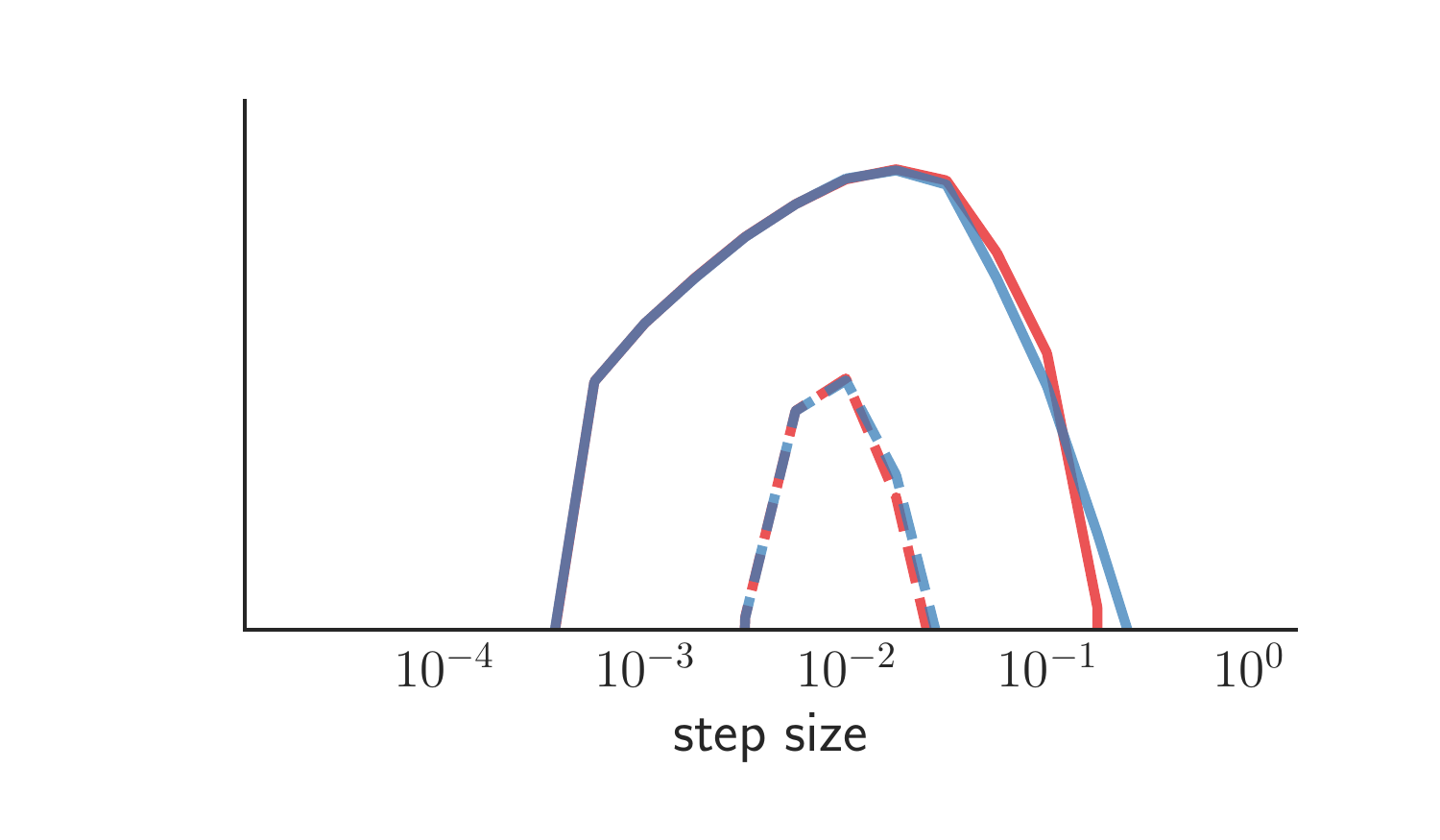}\hspace{-110pt}\includegraphics{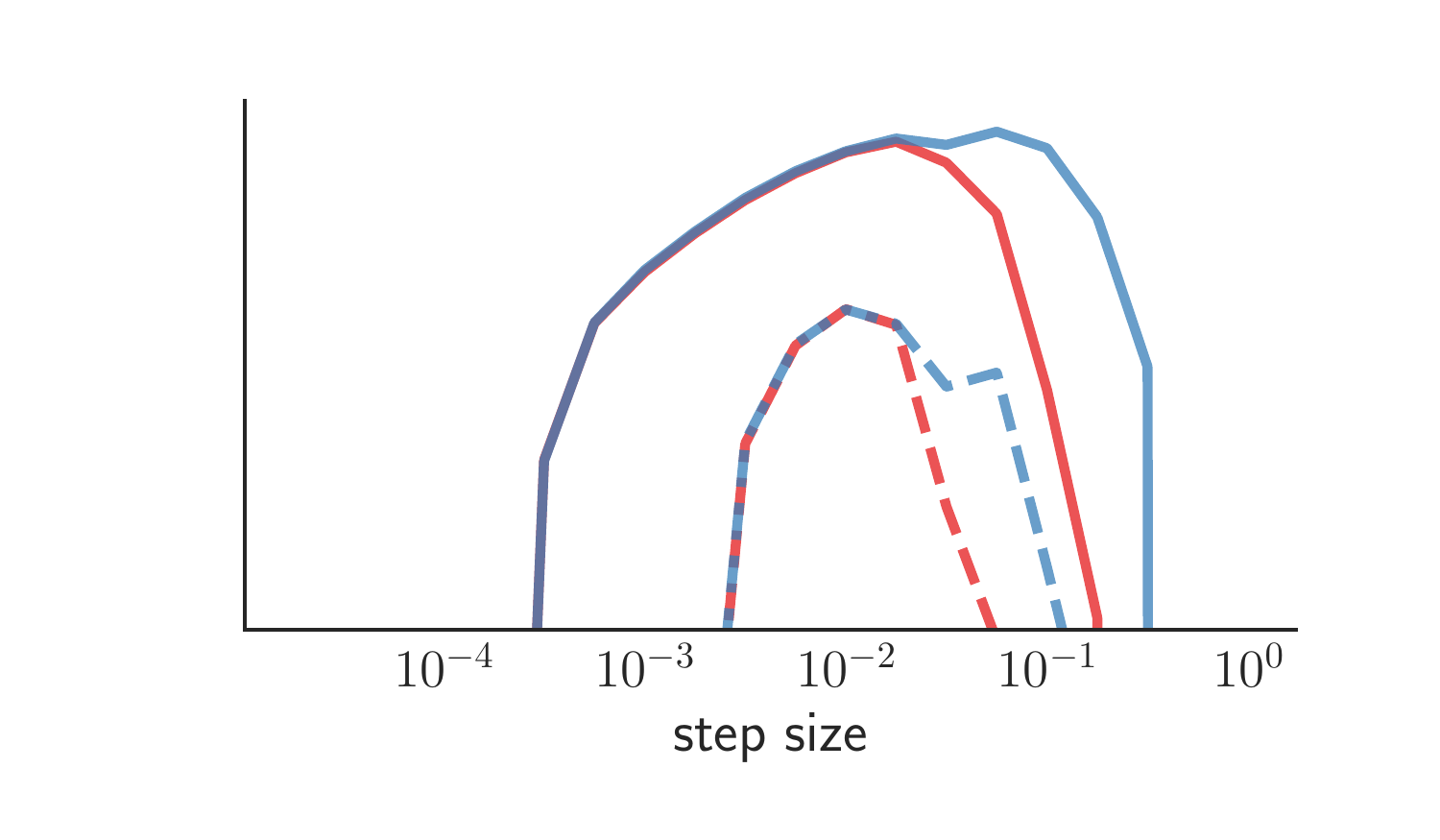}\hspace{-110pt}\includegraphics{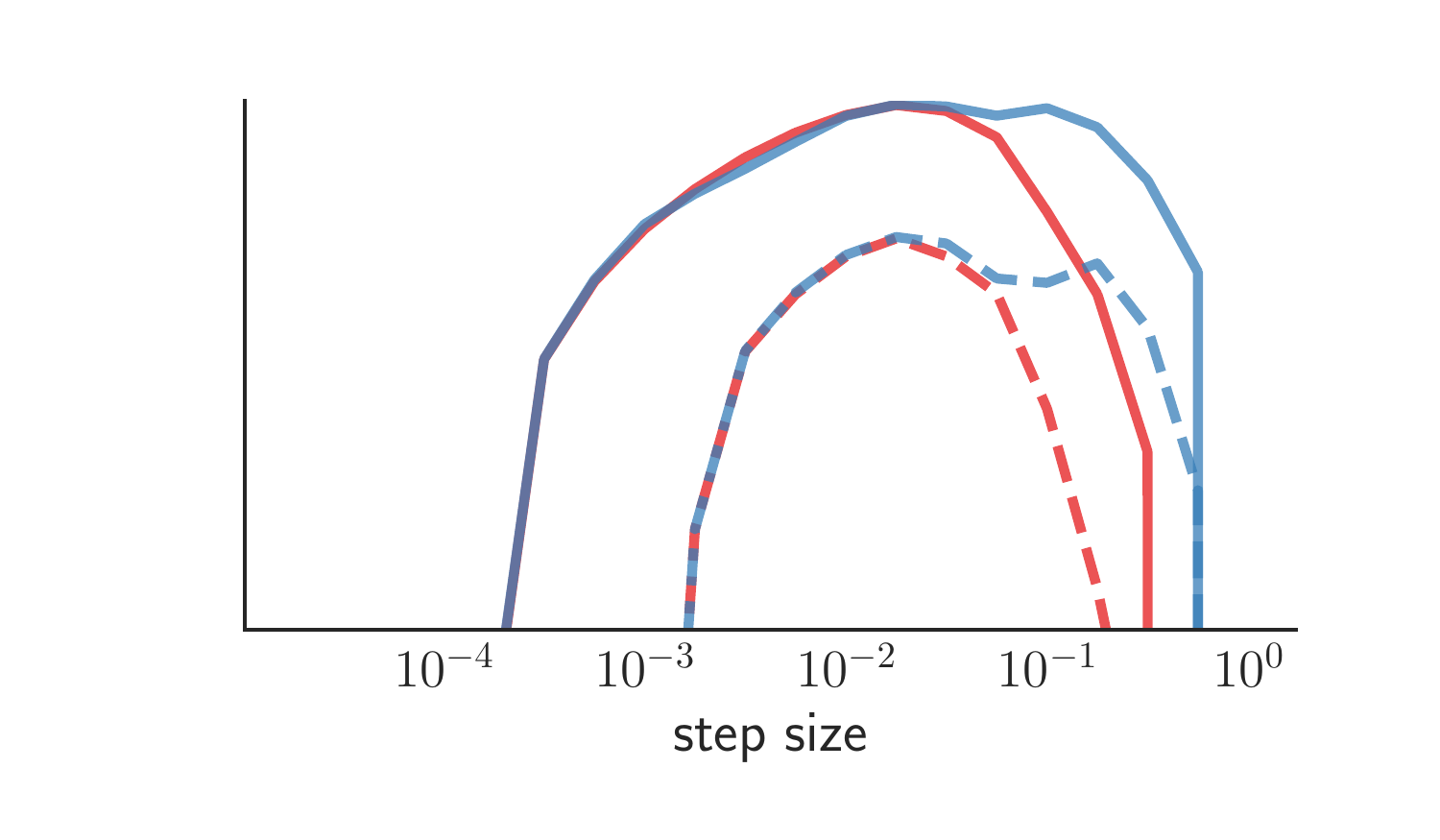}

\end{minipage}}%
\end{minipage}

\caption{Logistic regression comparing IWVI (red) and E-IWVI (blue)
with various $M$ and step sizes. The IW-ELBO is shown after 2,000 (dashed
lines) and 10,000 (solid) iterations. A larger $M$ consistently improves both methods. E-IWVI converges more reliably,
particularly on higher-dimensional data. From top: Madelon ($d=500$) Sonar ($d=60$),
Mushrooms ($d=112$).
\label{fig:Logistic-regression}}
\end{figure}

Finally, we considered a (non-conjugate) logistic regression model with a Cauchy prior with a scale of 10, using stochastic gradient descent with various step sizes. On these higher dimensional problems, we found that when the step-size was perfectly tuned and optimization had many iterations, both methods performed similarly in terms of the IW-ELBO. E-IWVI never performed worse, and sometimes performed very slightly better. E-IWVI exhibited superior convergence behavior and was easier to tune, as illustrated in Fig. \ref{fig:Logistic-regression}, where E-IWVI converges at least as
well as IWVI for \emph{all} step sizes. We suspect this is because when $w$ is far from optimal, both the IW-ELBO and gradient variance is better with E-IWVI.


%
%

\subsection*{Acknowledgements}

We thank Tom Rainforth for insightful comments regarding asymptotics and Theorem \ref{thm:asymptotic}. This material is based upon work supported by the National Science Foundation under Grant No. 1617533.

\bibliographystyle{plain}
\bibliography{justindomke_zotero_biblatex_new}

\begin{thebibliography}{10}

\bibitem{agakov_auxiliary_2004}
Felix~V. Agakov and David Barber.
\newblock An auxiliary variational method.
\newblock In {\em Neural Information Processing}, Lecture Notes in Computer
  Science, pages 561--566. Springer, Berlin, Heidelberg, 2004.

\bibitem{bachman_training_2015}
Philip Bachman and Doina Precup.
\newblock Training deep generative models: Variations on a theme.
\newblock In {\em {NIPS} Workshop: Advances in Approximate Bayesian Inference},
  2015.

\bibitem{bamler_perturbative_2017}
Robert Bamler, Cheng Zhang, Manfred Opper, and Stephan Mandt.
\newblock Perturbative black box variational inference.
\newblock In {\em {NIPS}}, 2017.

\bibitem{bickel2015mathematical}
Peter~J Bickel and Kjell~A Doksum.
\newblock {\em Mathematical statistics: basic ideas and selected topics, volume
  I}, volume 117.
\newblock CRC Press, 2015.

\bibitem{burda_importance_2015}
Yuri Burda, Roger Grosse, and Ruslan Salakhutdinov.
\newblock Importance weighted autoencoders.
\newblock 2015.

\bibitem{cremer_reinterpreting_2017}
Chris Cremer, Quaid Morris, and David Duvenaud.
\newblock Reinterpreting importance-weighted autoencoders.
\newblock 2017.

\bibitem{dieng_variational_2017}
Adji~Bousso Dieng, Dustin Tran, Rajesh Ranganath, John Paisley, and David Blei.
\newblock Variational inference via $\chi$ upper bound minimization.
\newblock In {\em NIPS}, pages 2729--2738. 2017.

\bibitem{fang_symmetric_1990}
Kaitai Fang, Samuel Kotz, and Kai~Wang Ng.
\newblock {\em Symmetric multivariate and related distributions}.
\newblock Number~36 in Monographs on statistics and applied probability.
  Chapman and Hall, 1990.

\bibitem{gilks_language_1994}
W.~R. Gilks, A.~Thomas, and D.~J. Spiegelhalter.
\newblock A language and program for complex bayesian modelling.
\newblock 43(1):169--177, 1994.

\bibitem{kingma_auto-encoding_2014}
Diederik~P. Kingma and Max Welling.
\newblock Auto-encoding variational bayes.
\newblock In {\em {ICLR}}.

\bibitem{kucukelbir_automatic_2017}
Alp Kucukelbir, Dustin Tran, Rajesh Ranganath, Andrew Gelman, and David~M.
  Blei.
\newblock Automatic differentiation variational inference.
\newblock 18(14):1--45, 2017.

\bibitem{Le2017May}
Tuan~Anh Le, Maximilian Igl, Tom Rainforth, Tom Jin, and Frank Wood.
\newblock {Auto-Encoding Sequential Monte Carlo}.
\newblock In {\em {ICLR}}, 2018.

\bibitem{maddison_filtering_2017}
Chris~J Maddison, John Lawson, George Tucker, Nicolas Heess, Mohammad Norouzi,
  Andriy Mnih, Arnaud Doucet, and Yee Teh.
\newblock Filtering variational objectives.
\newblock In {\em {NIPS}}, pages 6576--6586. 2017.

\bibitem{marcinkiewicz1937quelques}
J{\'o}zef Marcinkiewicz and Antoni Zygmund.
\newblock Quelques th{\'e}oremes sur les fonctions ind{\'e}pendantes.
\newblock {\em Fund. Math}, 29:60--90, 1937.

\bibitem{minka_thomas_expectation_2001}
{Minka, Thomas}.
\newblock Expectation propagation for approximate bayesian inference.
\newblock In {\em {UAI}}, 2001.

\bibitem{naesseth_variational_2018}
Christian~A. Naesseth, Scott~W. Linderman, Rajesh Ranganath, and David~M. Blei.
\newblock Variational sequential monte carlo.
\newblock In {\em {AISTATS}}, 2018.

\bibitem{owen_monte_2013}
Art Owen.
\newblock {\em Monte Carlo theory, methods and examples}.
\newblock 2013.

\bibitem{rainforth_tighter_2018}
Tom Rainforth, Adam~R. Kosiorek, Tuan~Anh Le, Chris~J. Maddison, Maximilian
  Igl, Frank Wood, and Yee~Whye Teh.
\newblock Tighter variational bounds are not necessarily better.

\bibitem{ranganath_black_2014}
Rajesh Ranganath, Sean Gerrish, and David~M. Blei.
\newblock Black box variational inference.
\newblock In {\em {AISTATS}}, 2014.

\bibitem{ruiz_overdispersed_2016}
Francisco J.~R. Ruiz, Michalis~K. Titsias, and David~M. Blei.
\newblock Overdispersed black-box variational inference.
\newblock In {\em UAI}, 2016.

\bibitem{saul_mean_1996}
L.~K. Saul, T.~Jaakkola, and M.~I. Jordan.
\newblock Mean field theory for sigmoid belief networks.
\newblock {\em Journal of Artificial Intelligence Research}, 4:61--76, 1996.

\bibitem{stan_development_team_modeling_2017}
{Stan Development Team}.
\newblock Modeling language user's guide and reference manual, version 2.17.0,
  2017.

\bibitem{tom_minka_divergence_2005}
{Tom Minka}.
\newblock Divergence measures and message passing.
\newblock 2005.

\end{thebibliography}

\clearpage{}

\newpage{}

\appendix


\section{Appendix}

\subsection{Additional Experimental Results}

\begin{figure}[H]
\noindent\begin{minipage}[t]{1\columnwidth}%
\begin{center}
\includegraphics[viewport=40bp 0bp 330bp 200bp,scale=0.65]{\string"1D weight visualization-subdivided-b/PDF\string".pdf}
\par\end{center}%
\end{minipage}

\includegraphics[viewport=40bp 20bp 330bp 150bp,scale=0.43]{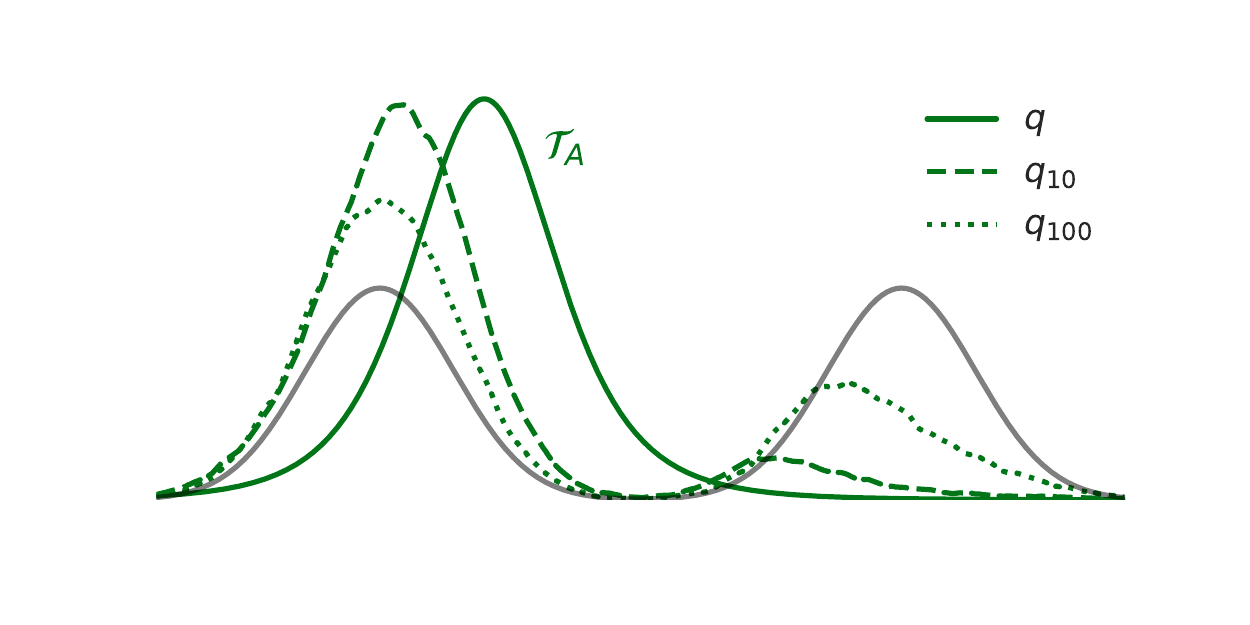}\includegraphics[viewport=0bp 20bp 330bp 150bp,scale=0.43]{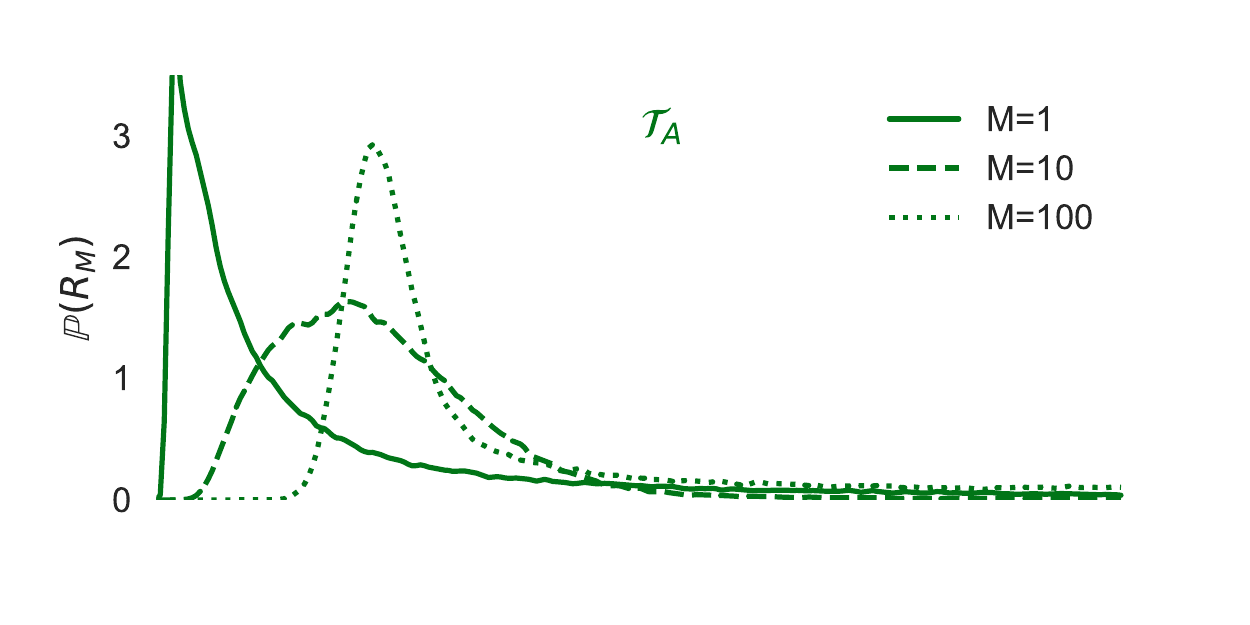}\includegraphics[viewport=0bp 20bp 330bp 150bp,scale=0.43]{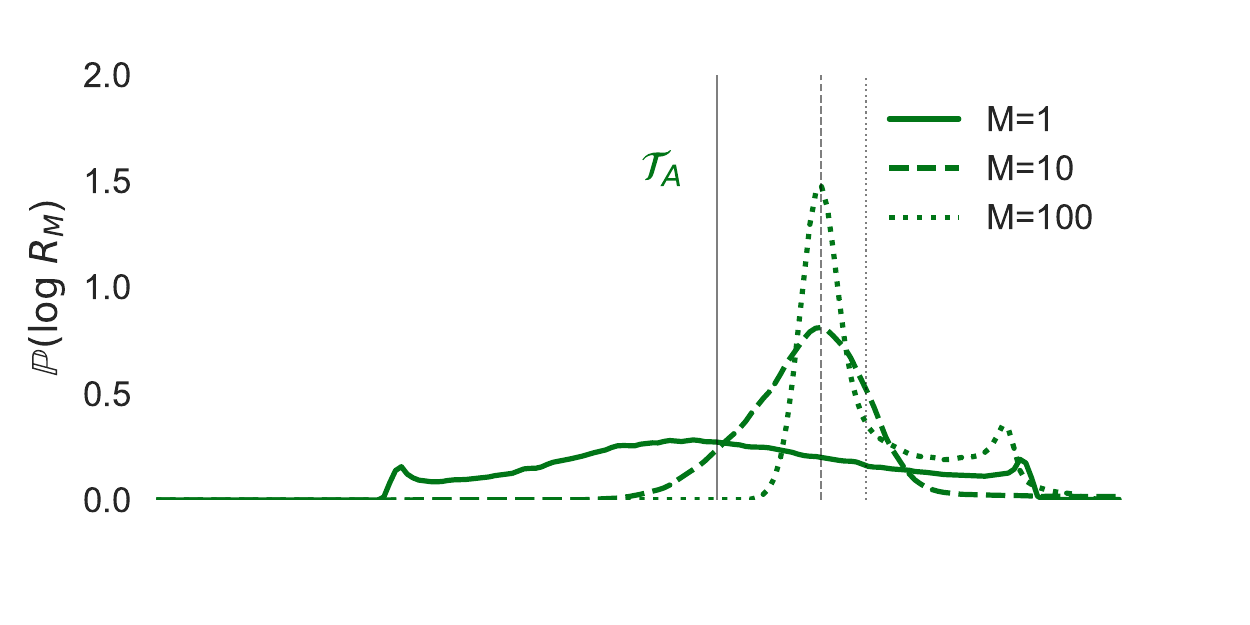}

\includegraphics[viewport=40bp 20bp 330bp 150bp,scale=0.43]{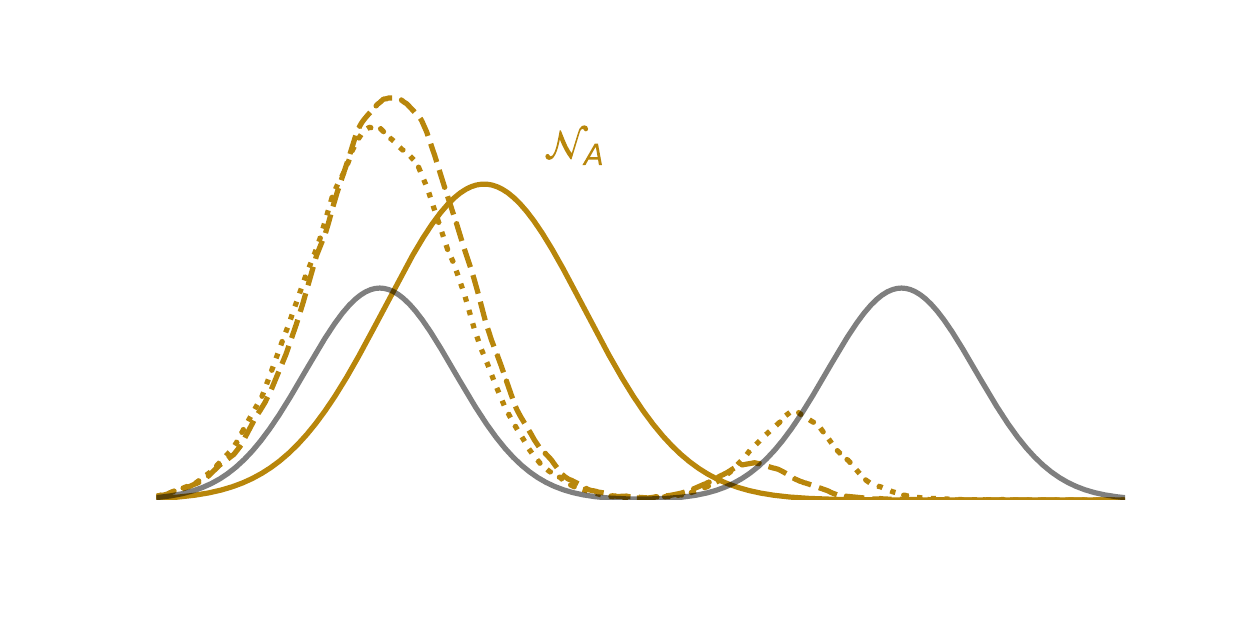}\includegraphics[viewport=0bp 20bp 330bp 150bp,scale=0.43]{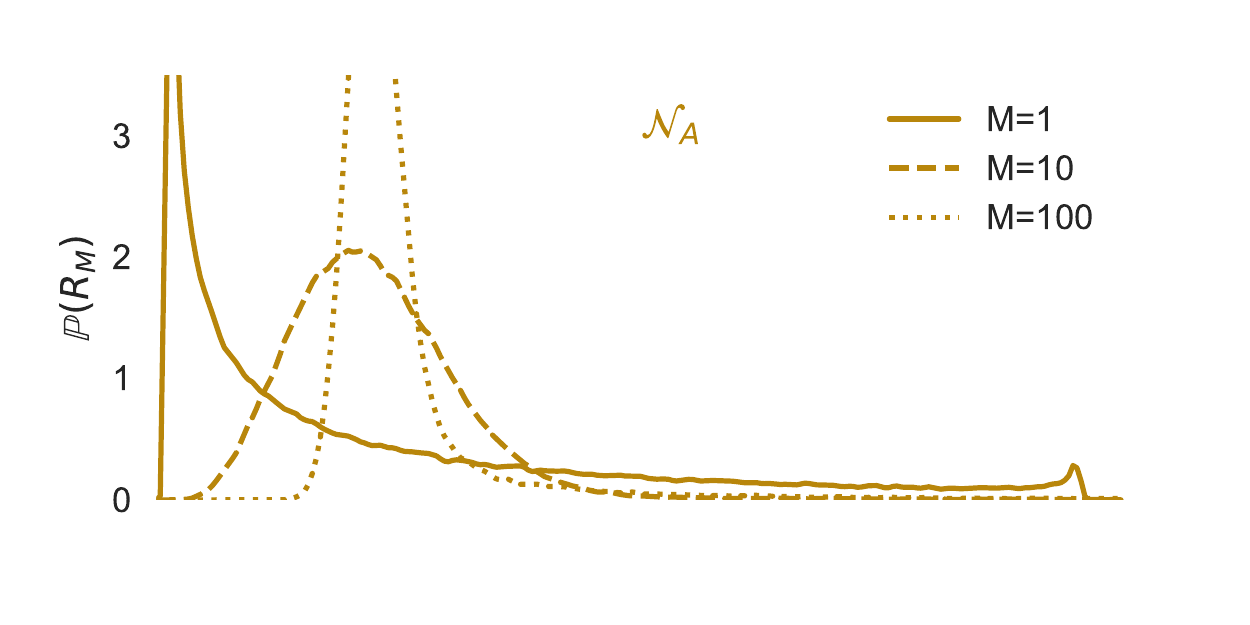}\includegraphics[viewport=0bp 20bp 330bp 150bp,scale=0.43]{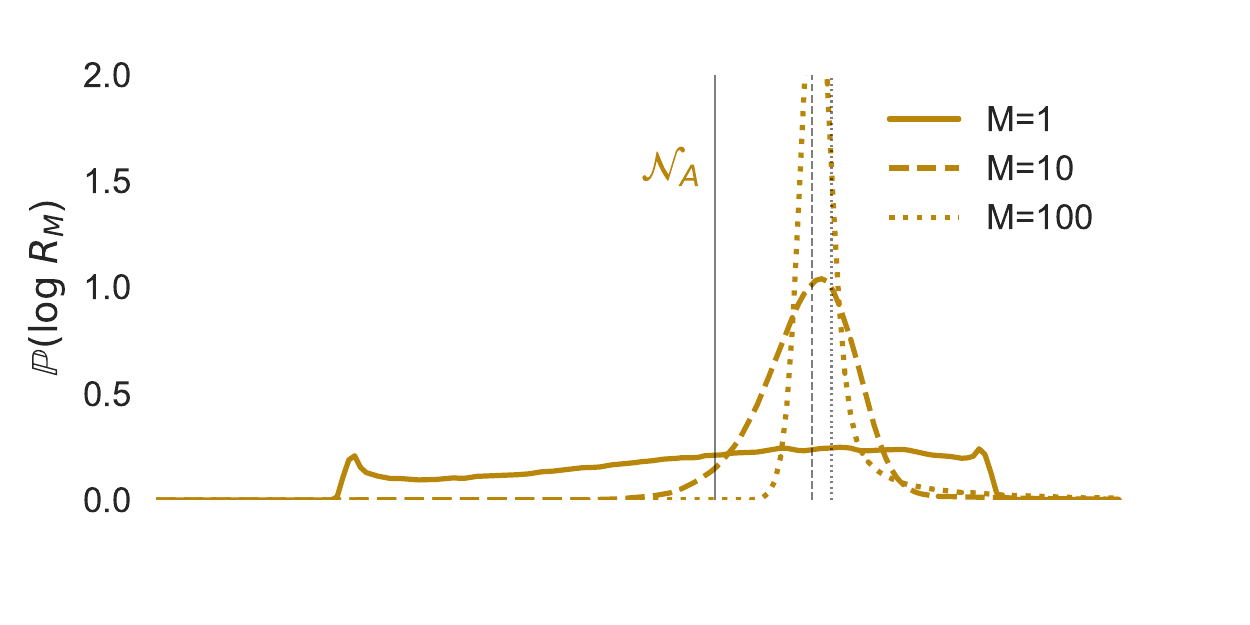}

\includegraphics[viewport=40bp 20bp 330bp 150bp,scale=0.43]{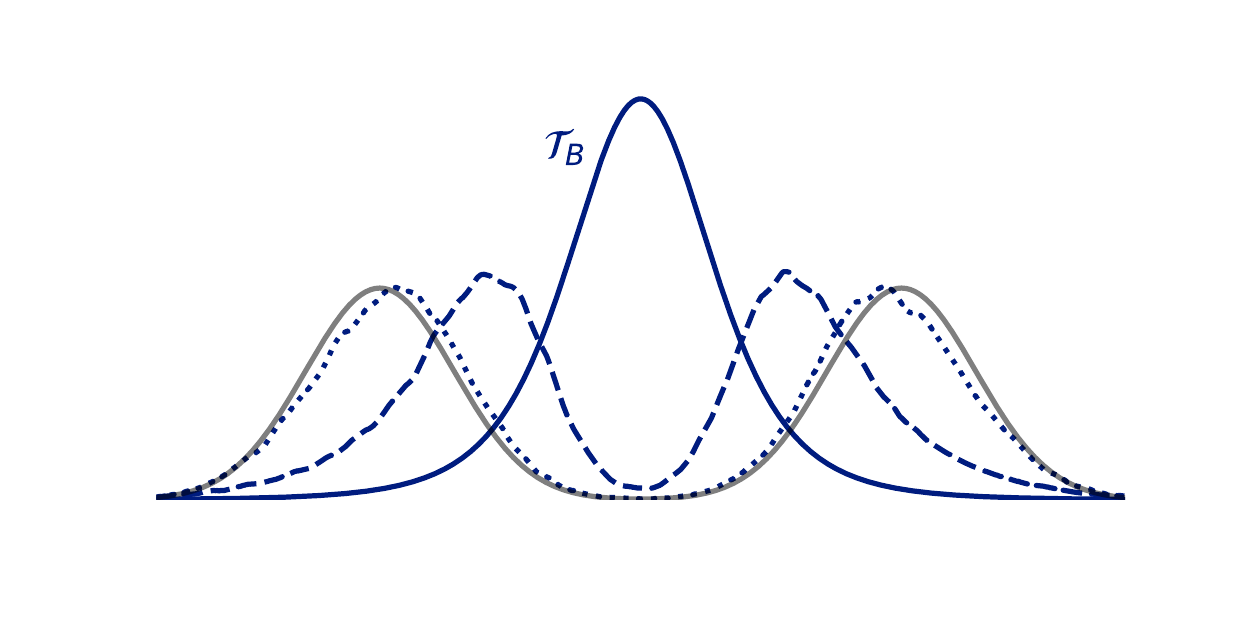}\includegraphics[viewport=0bp 20bp 330bp 150bp,scale=0.43]{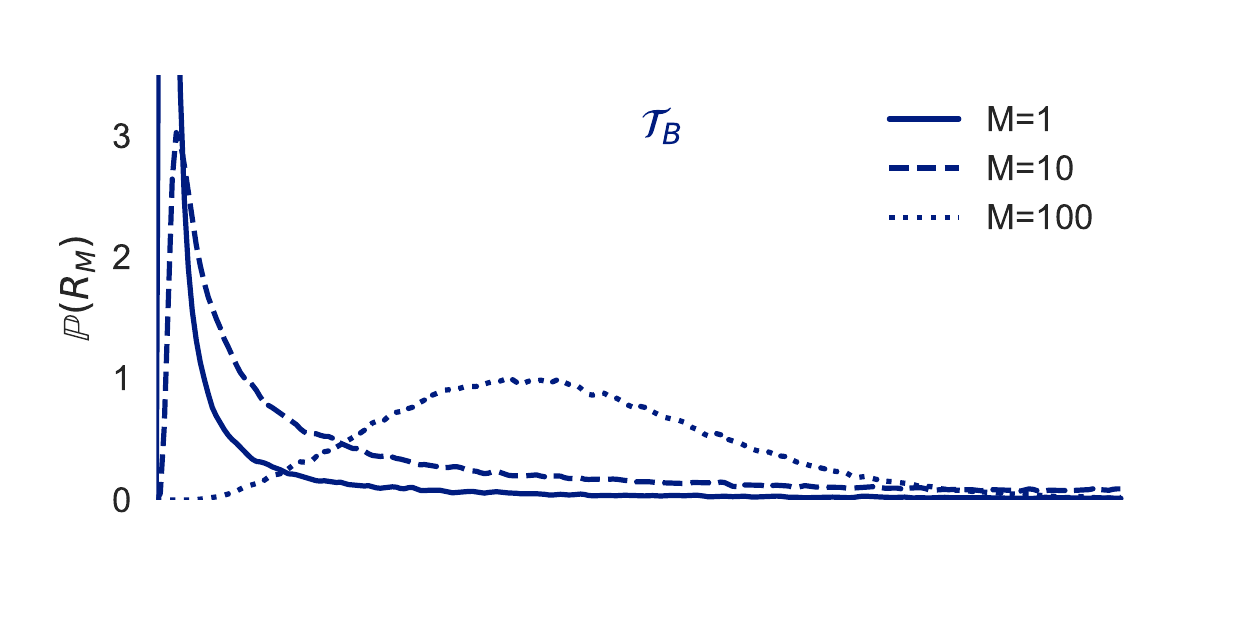}\includegraphics[viewport=0bp 20bp 330bp 150bp,scale=0.43]{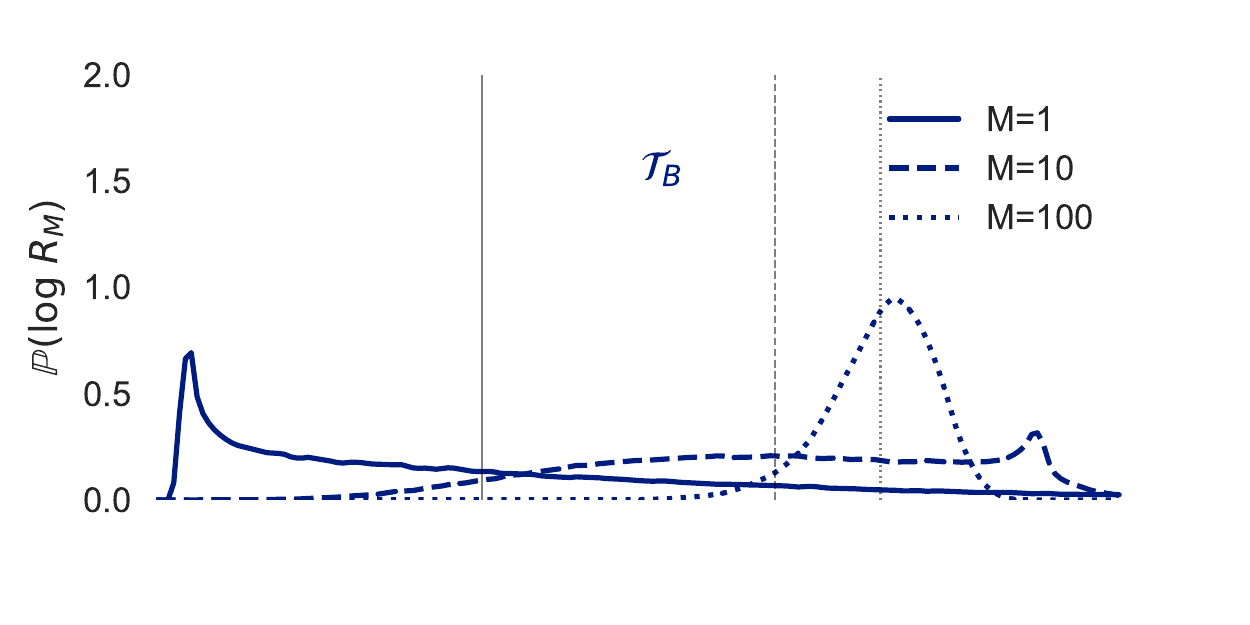}

\includegraphics[viewport=40bp 0bp 330bp 150bp,scale=0.43]{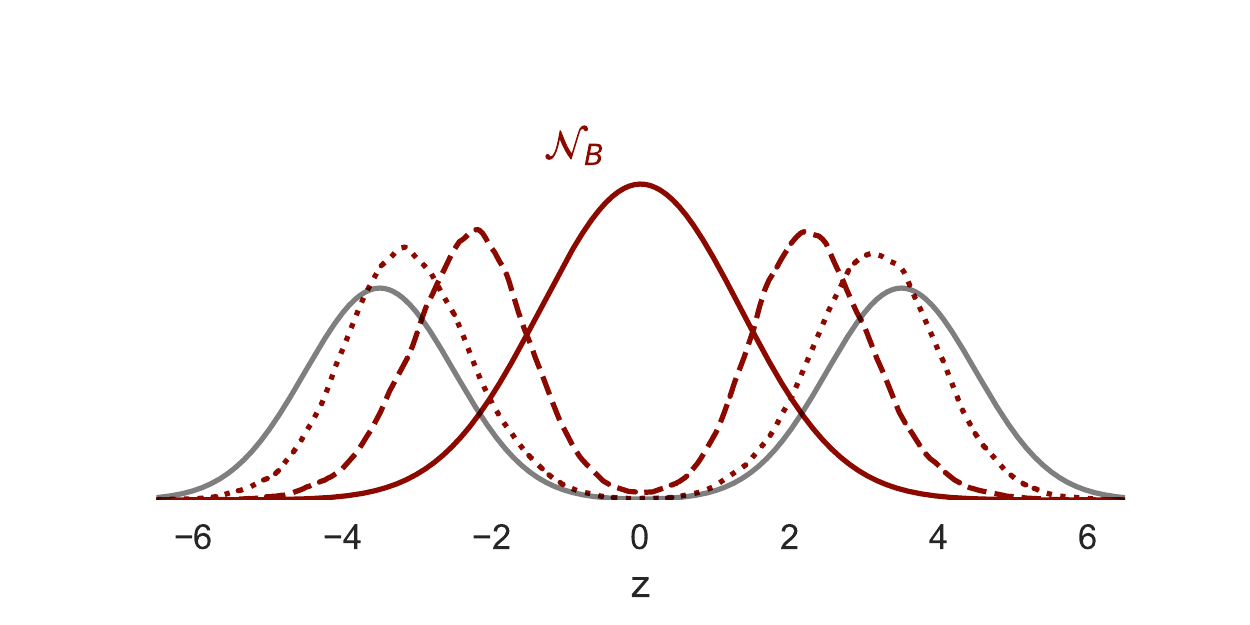}\includegraphics[viewport=0bp 0bp 330bp 150bp,scale=0.43]{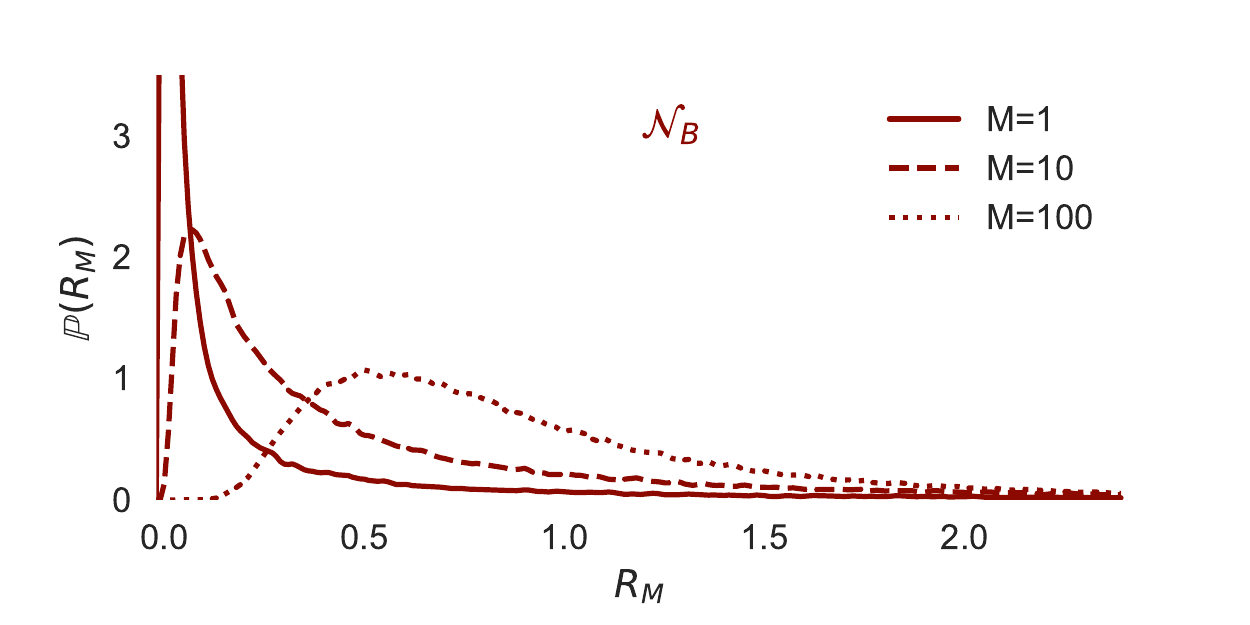}\includegraphics[viewport=0bp 0bp 330bp 150bp,scale=0.43]{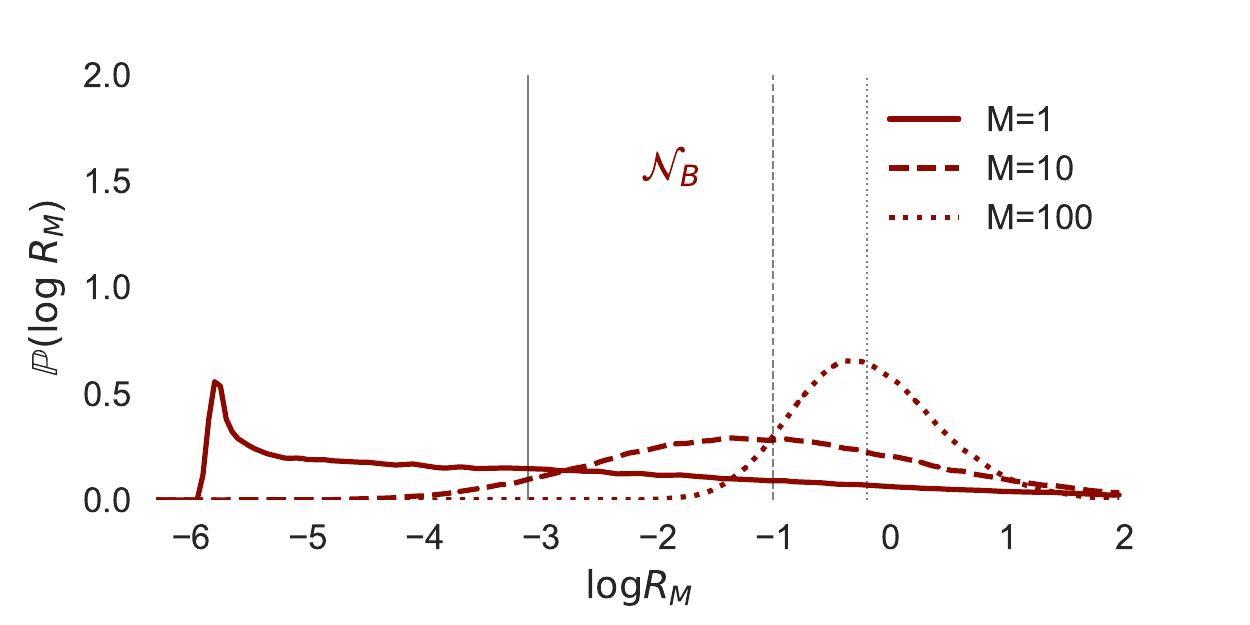}

\includegraphics[viewport=30bp 0bp 400bp 190bp,scale=0.55]{\string"1D weight visualization-subdivided-c/ELBOS_vs_M\string".pdf}\includegraphics[viewport=30bp 0bp 400bp 190bp,scale=0.55]{\string"1D weight visualization-subdivided-c/errs_vs_M\string".pdf}

\caption{More figures corresponding to the 1-D example.}
\end{figure}

\begin{figure}[p]
\begin{centering}
\begin{minipage}[t][1\totalheight][c]{0.75\columnwidth}%
\includegraphics[width=0.5\textwidth]{DirichletFigs/KL_3}\includegraphics[width=0.5\textwidth]{DirichletFigs/err_3}

\includegraphics[width=0.5\textwidth]{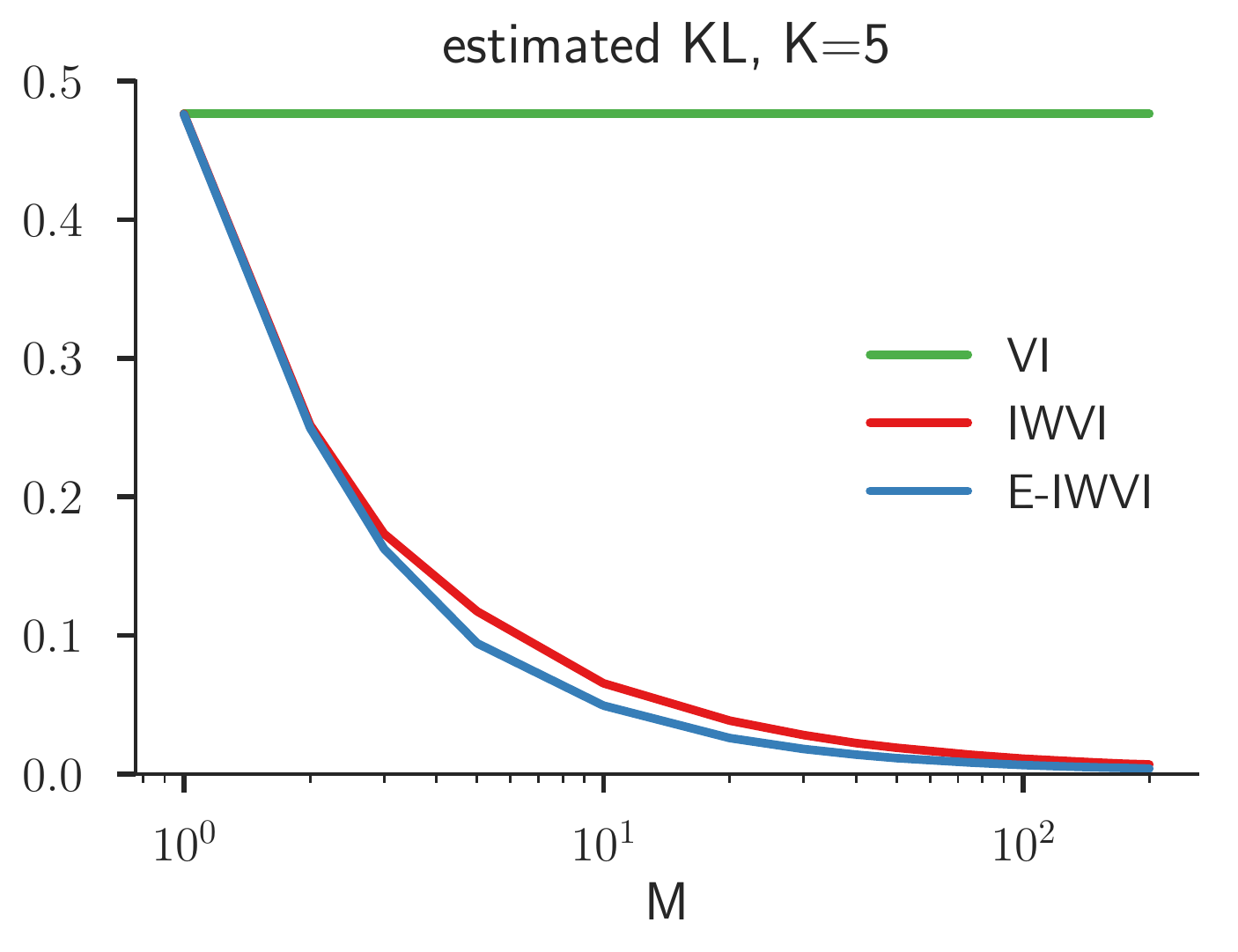}\includegraphics[width=0.5\textwidth]{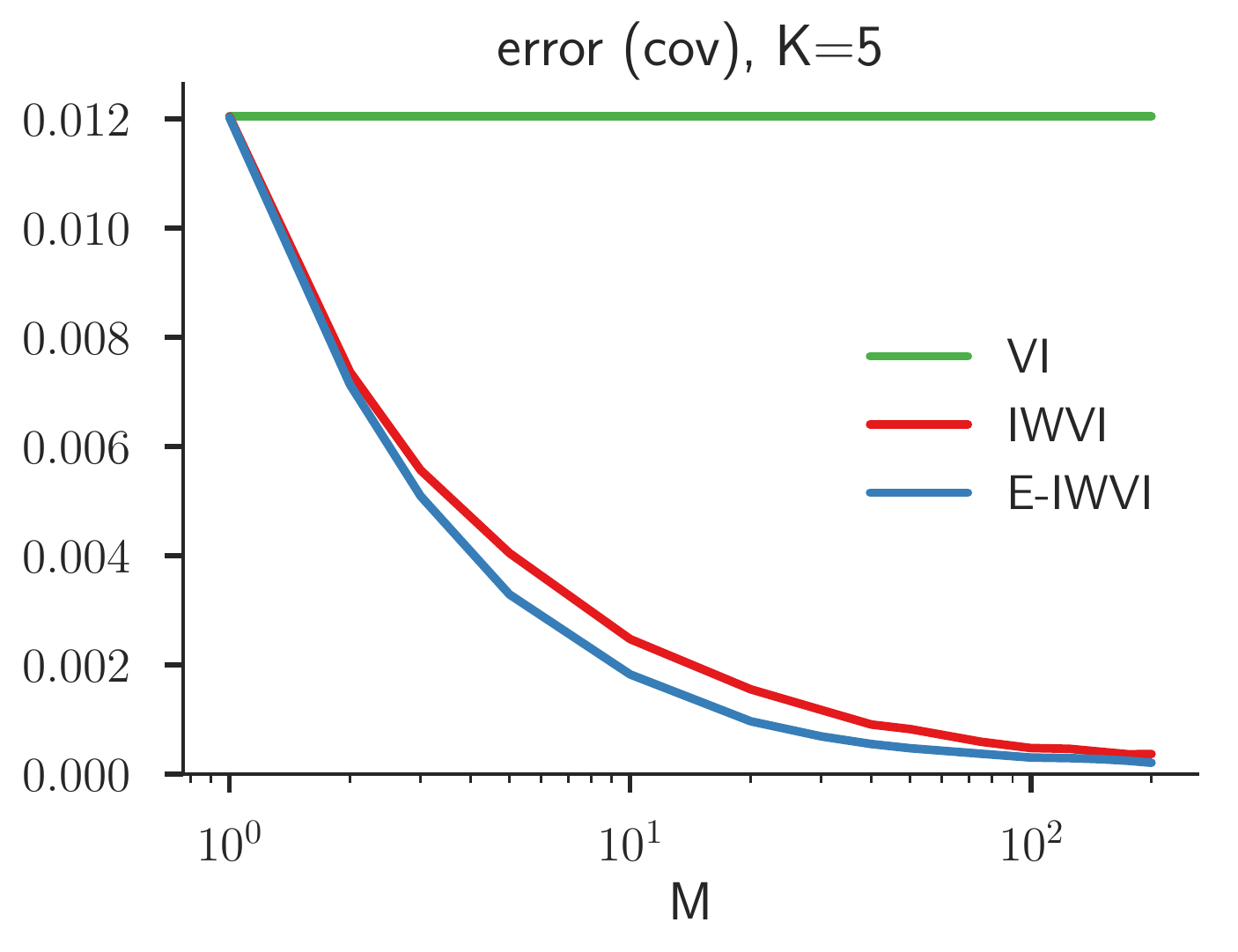}

\includegraphics[width=0.5\textwidth]{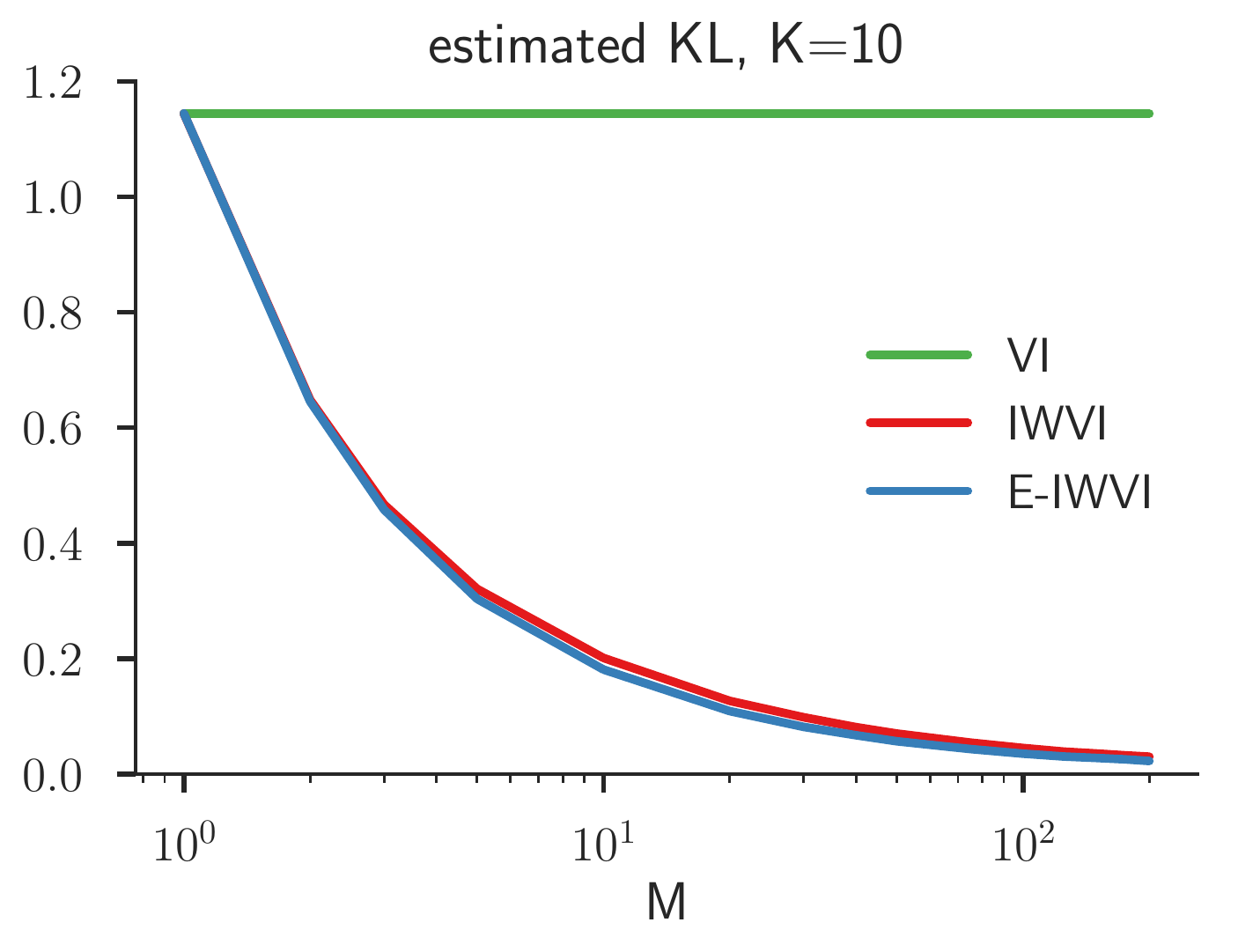}\includegraphics[width=0.5\textwidth]{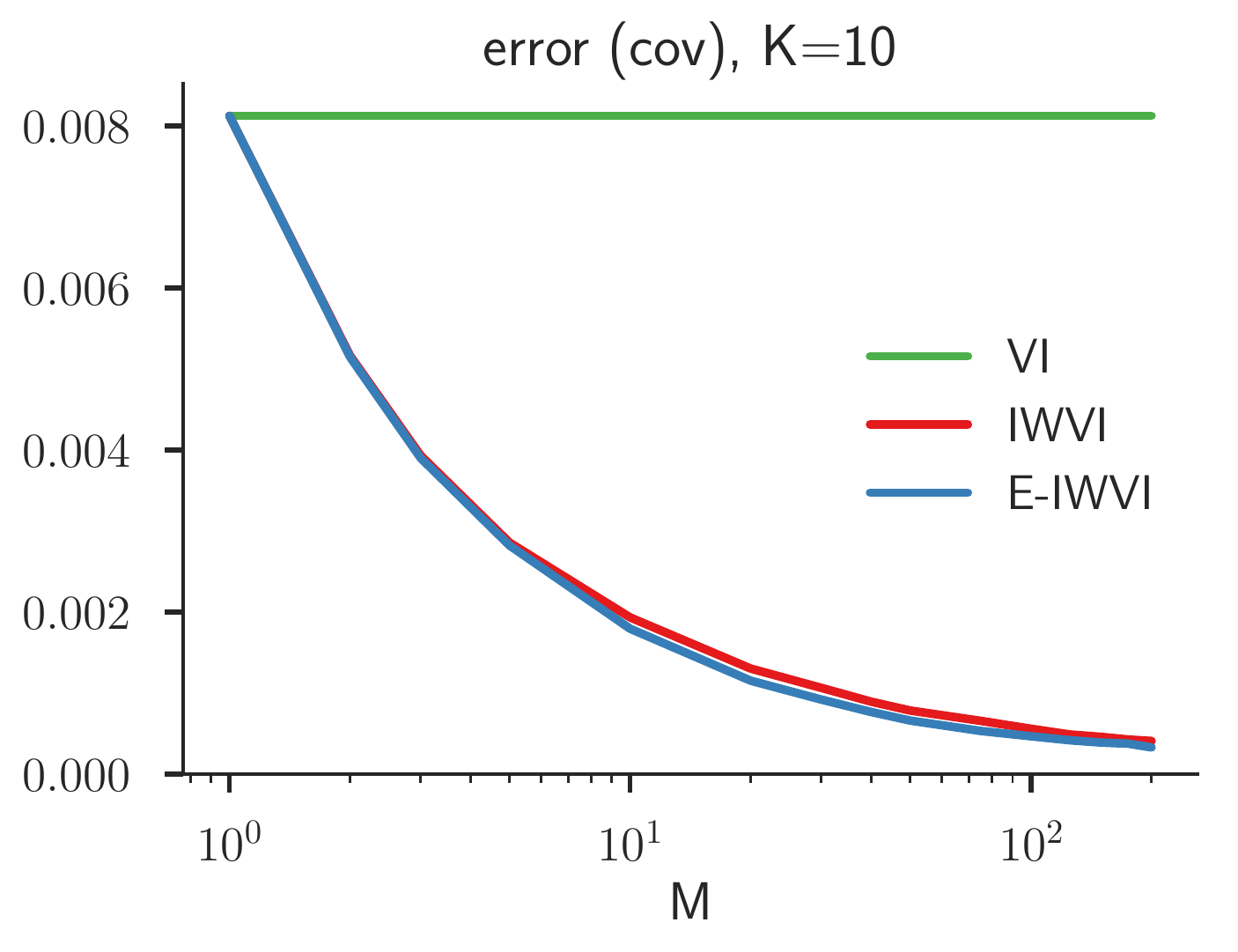}

\includegraphics[width=0.5\textwidth]{DirichletFigs/KL_20}\includegraphics[width=0.5\textwidth]{DirichletFigs/err_20}

\includegraphics[width=0.5\textwidth]{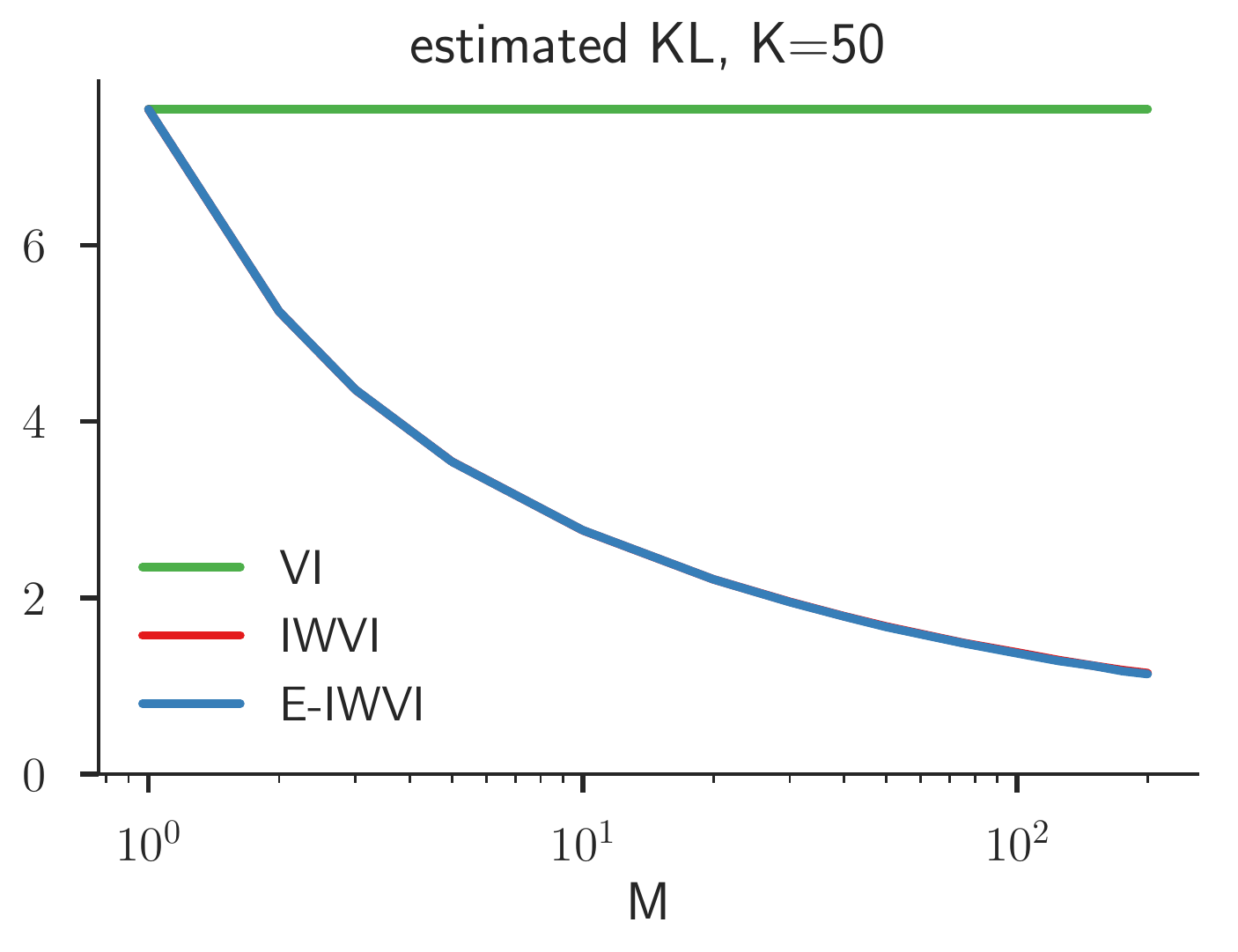}\includegraphics[width=0.5\textwidth]{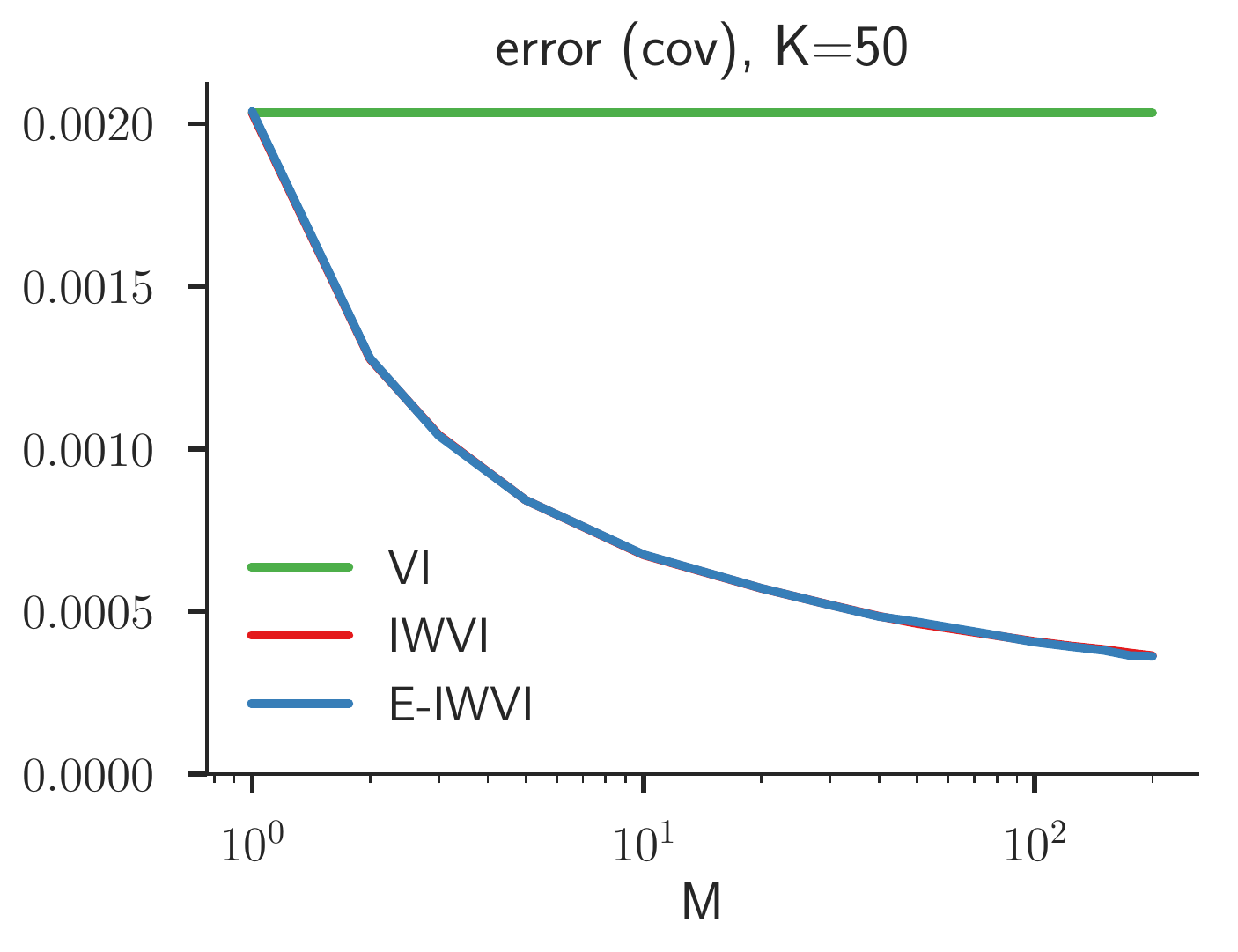}%
\end{minipage}
\par\end{centering}
\caption{More Results on Inference with Dirichlet distributions}
\end{figure}

\begin{figure}
\begin{minipage}[t]{0.05\columnwidth}%
\vspace{-20pt}\rotatebox[origin=c]{90}{

{\small{}australian\hspace{20pt}}{\small\par}

{\small{}sonar}{\small\par}

{\small{}\hspace{15pt}}{\small\par}

{\small{}ionosphere}{\small\par}

{\small{}\hspace{20pt}}{\small\par}

{\small{}w1a}{\small\par}

{\small{}\hspace{40pt}}{\small\par}

{\small{}a1a}{\small\par}

{\small{}\hspace{20pt}}{\small\par}

{\small{}mushrooms}{\small\par}

{\small{}\hspace{10pt}}{\small\par}

{\small{}madelon\hspace{40pt}}{\small\par}

}%
\end{minipage}%
\noindent\begin{minipage}[t]{1\columnwidth}%
{\small{}}%
\begin{minipage}[t]{1.2\columnwidth}%
{\small{}\hspace{50pt}$M=1$\hspace{80pt}$M=5$\hspace{70pt}$M=20$\hspace{60pt}$M=100$}%
\end{minipage}{\small\par}

\scalebox{.3}{%
\begin{minipage}[t]{4\textwidth}%
\includegraphics{LogregFigs/convergence_madelon_1_noxtick}\hspace{-110pt}\includegraphics{LogregFigs/convergence_madelon_5_notick}\hspace{-110pt}\includegraphics{LogregFigs/convergence_madelon_20_notick}\hspace{-110pt}\includegraphics{LogregFigs/convergence_madelon_100_notick}

\vspace{-75pt}

\includegraphics{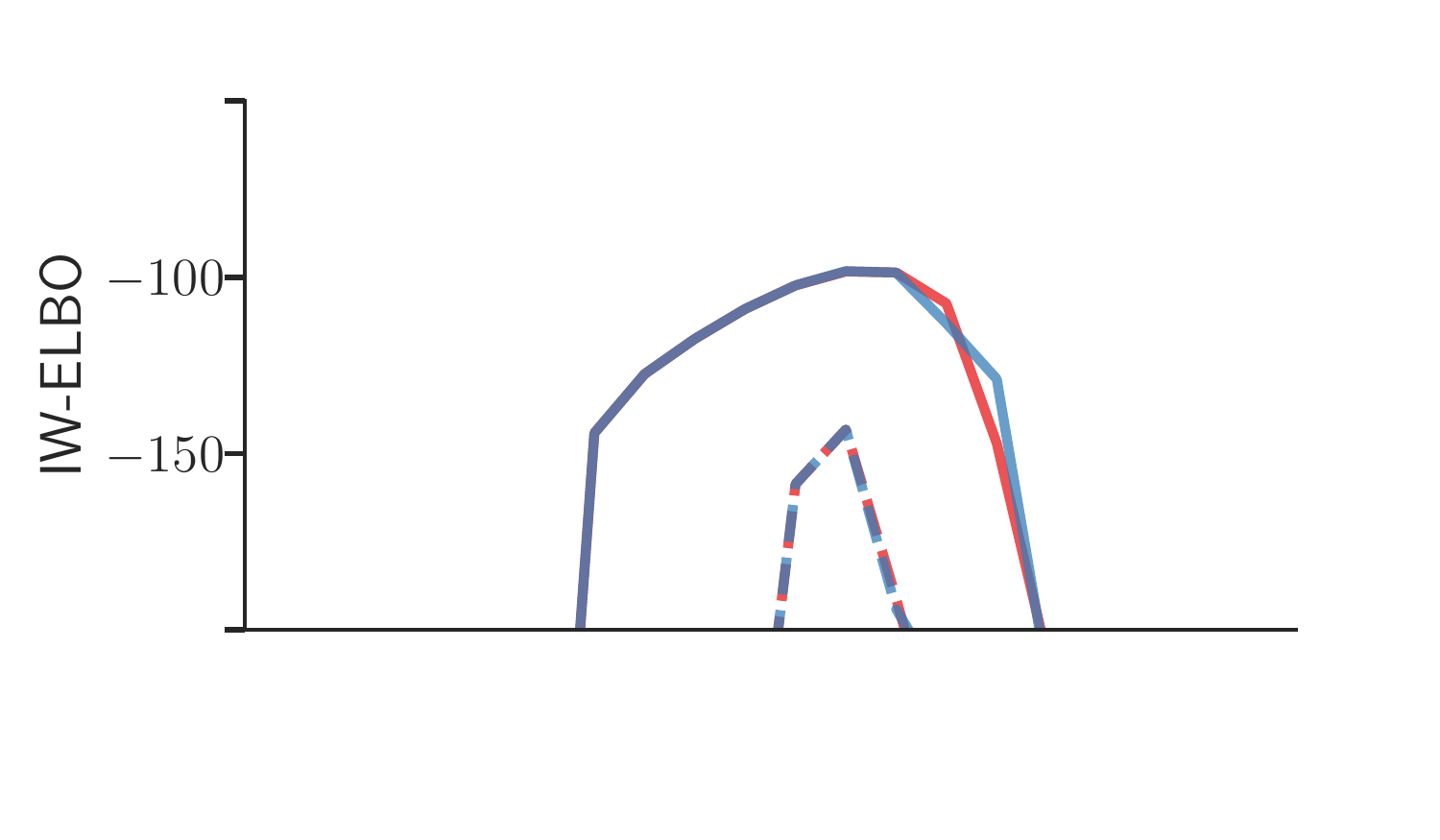}\hspace{-110pt}\includegraphics{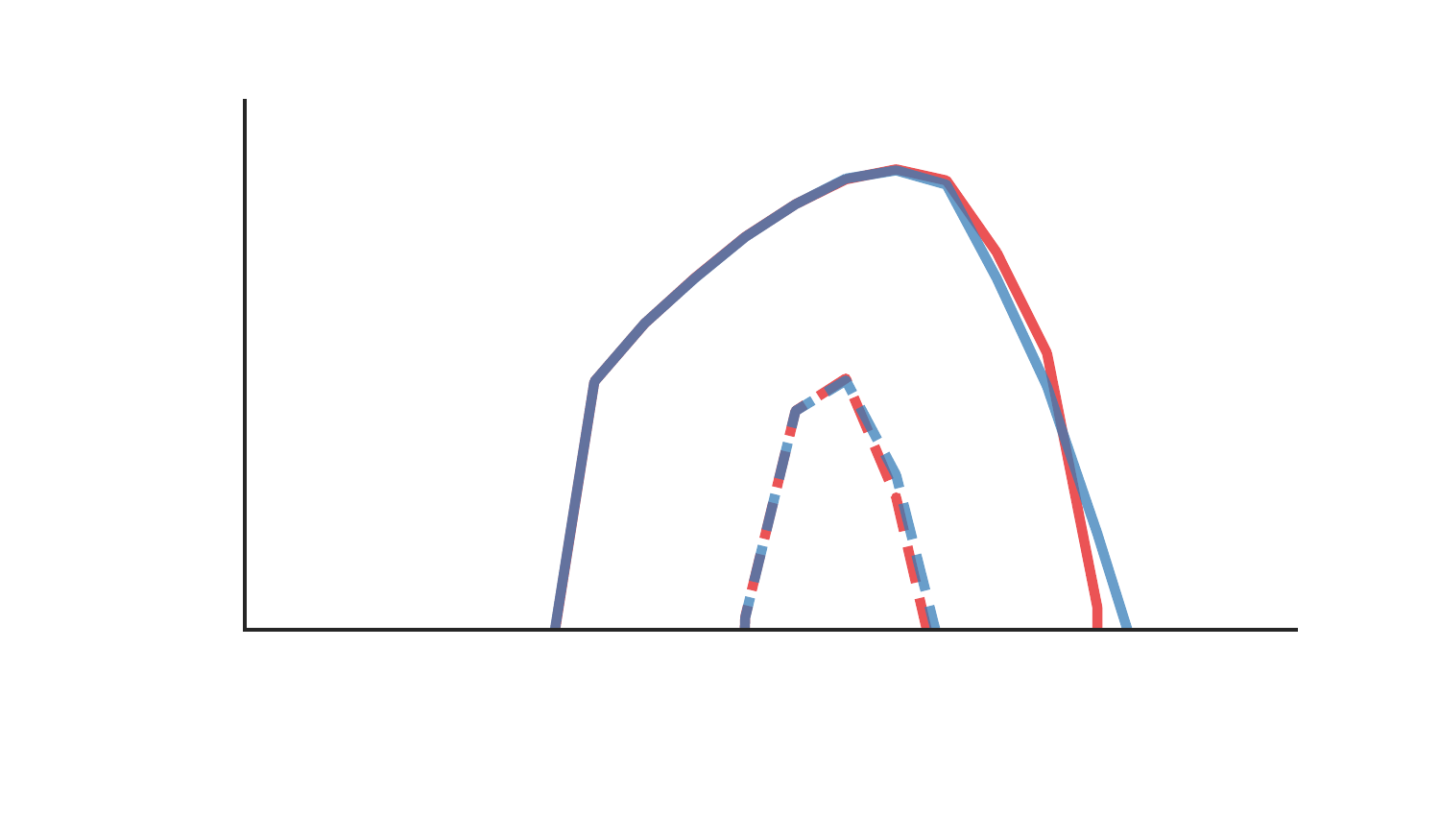}\hspace{-110pt}\includegraphics{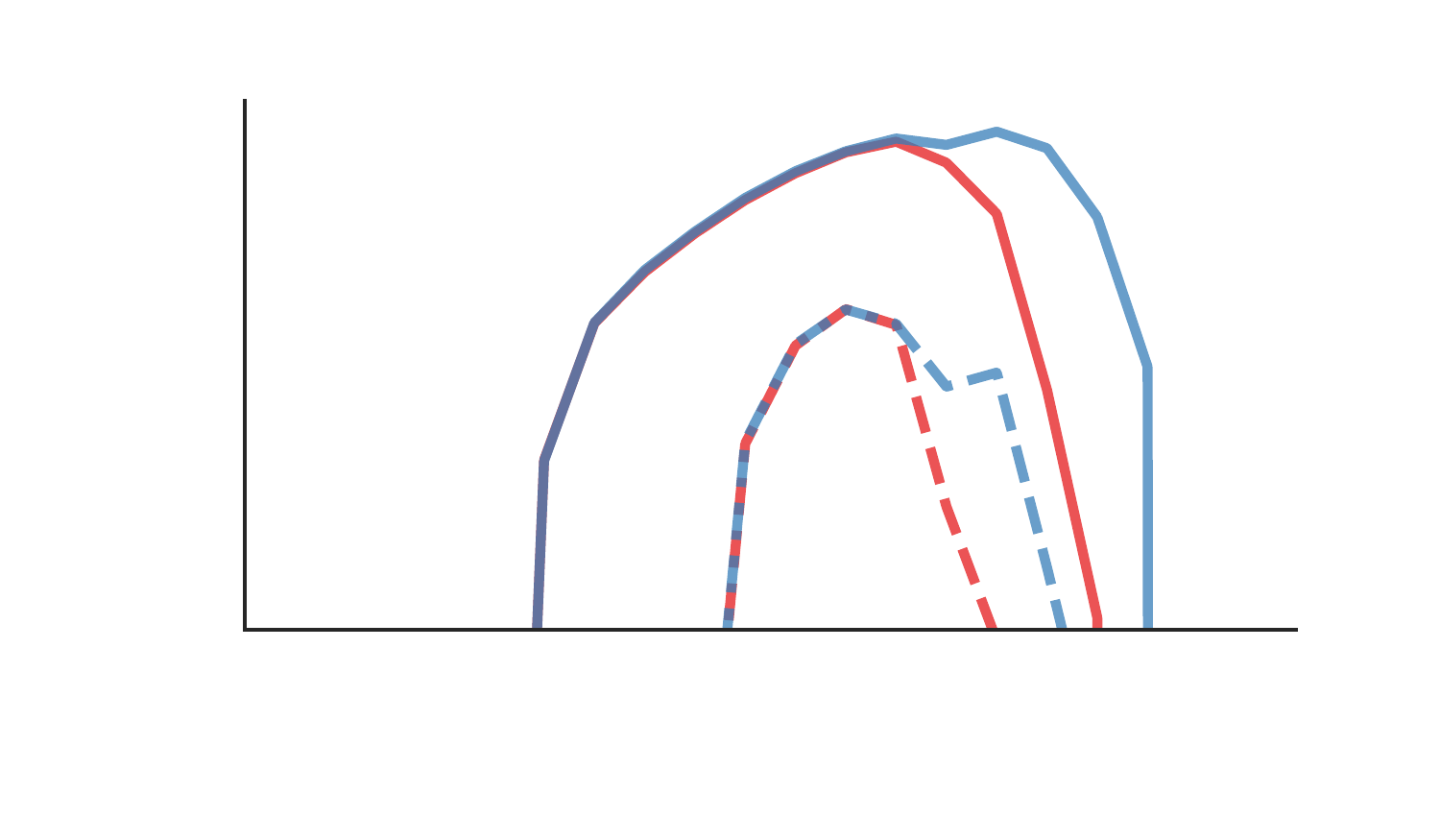}\hspace{-110pt}\includegraphics{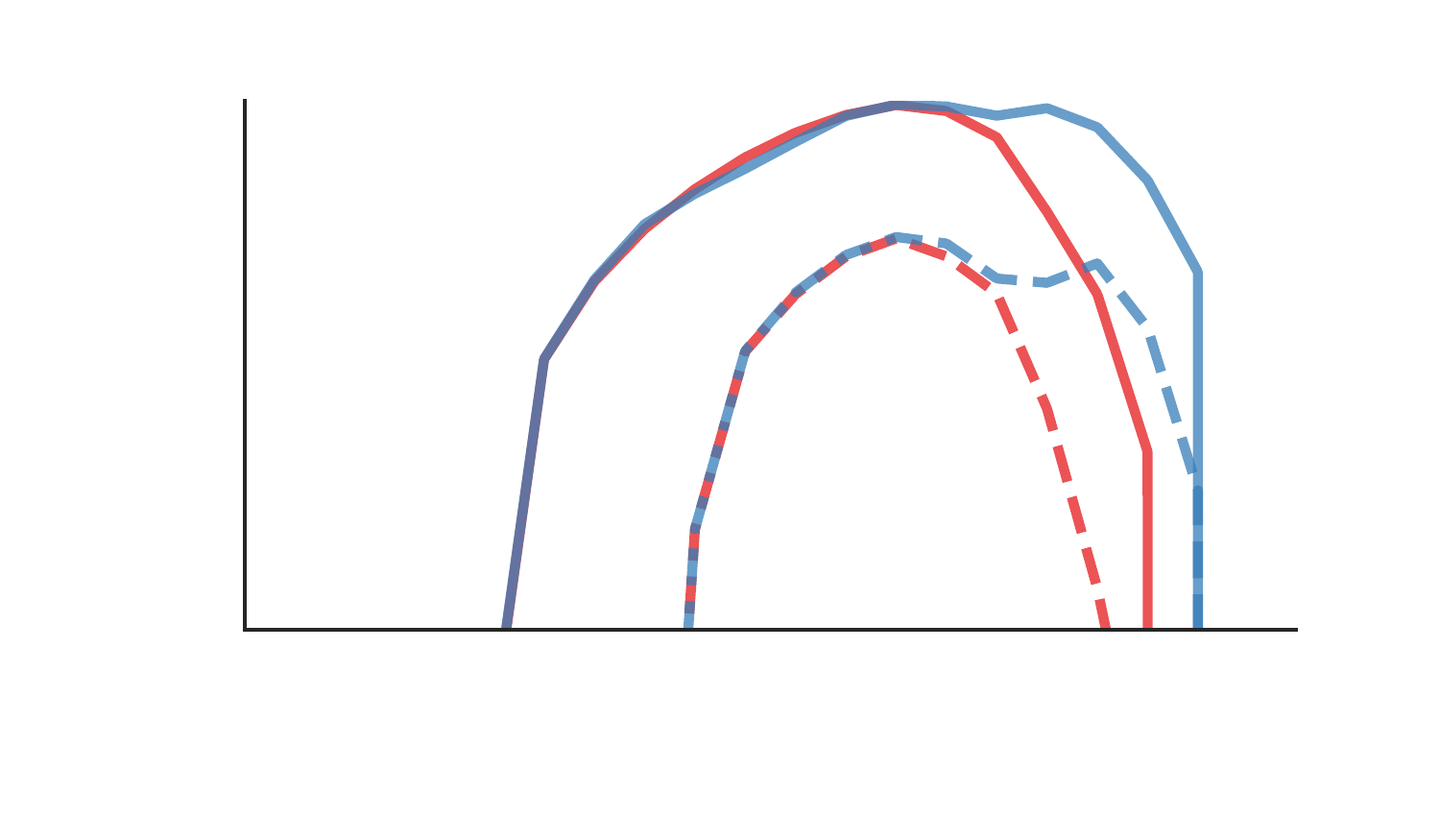}

\vspace{-75pt}

\includegraphics{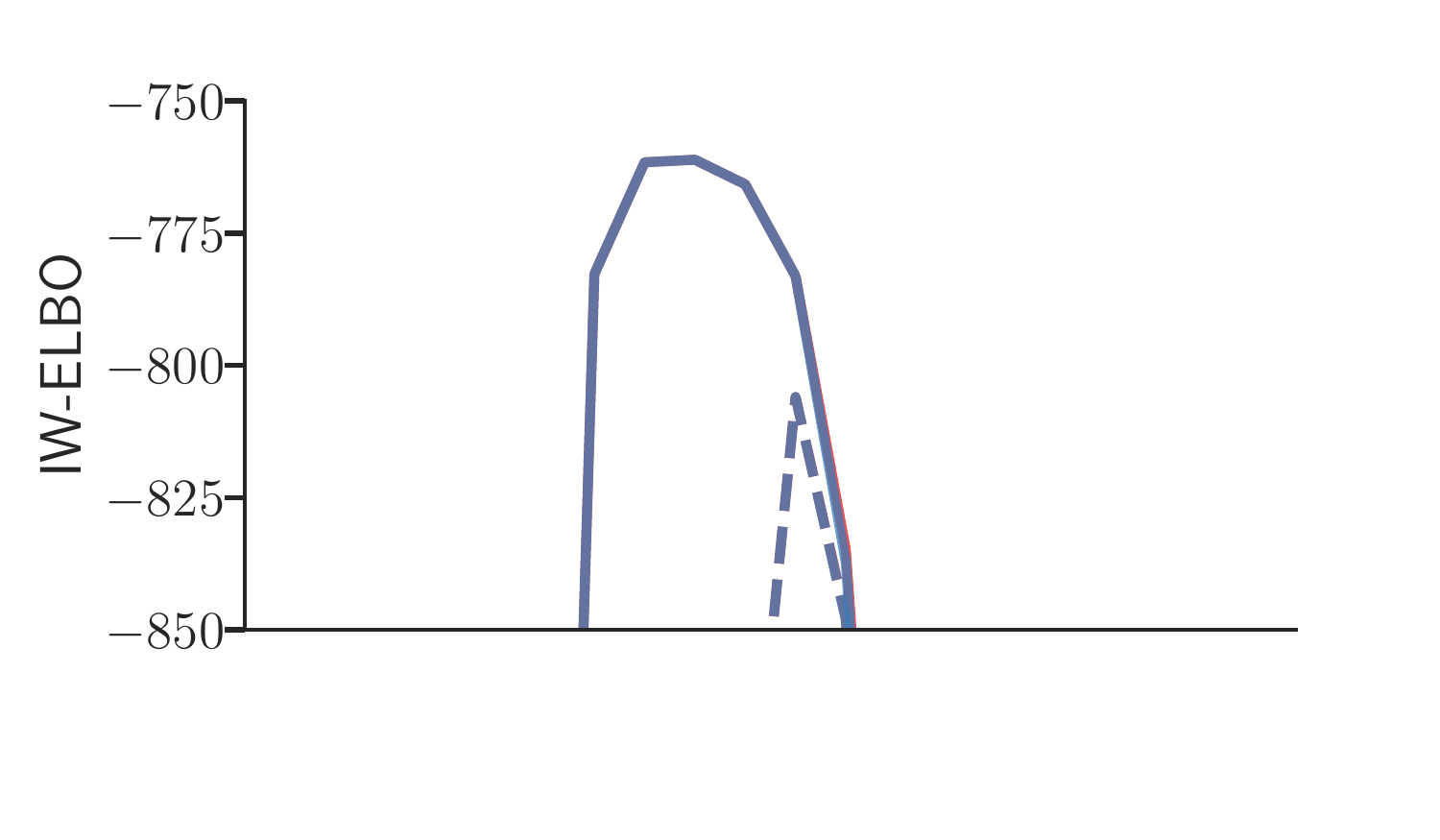}\hspace{-110pt}\includegraphics{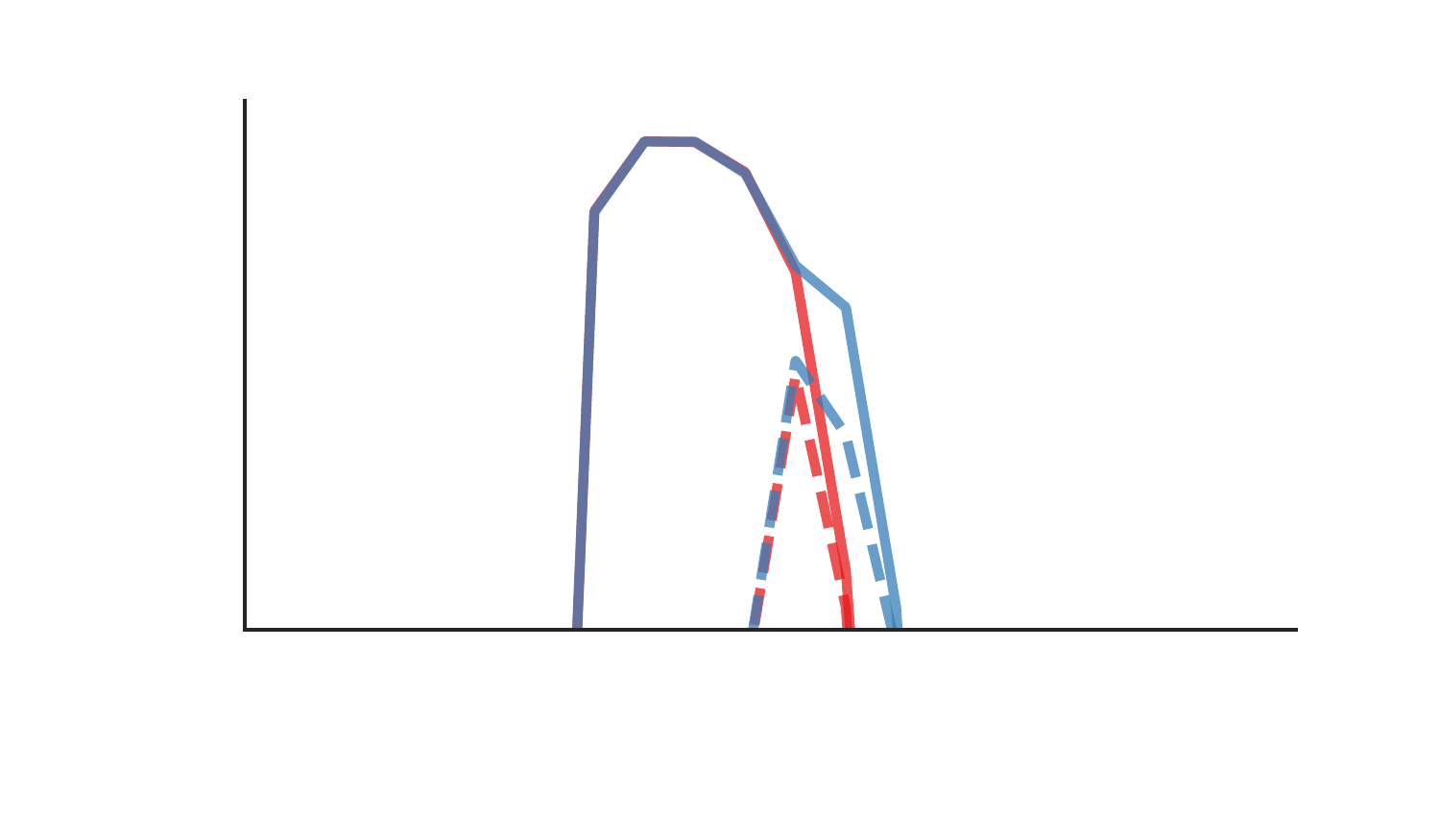}\hspace{-110pt}\includegraphics{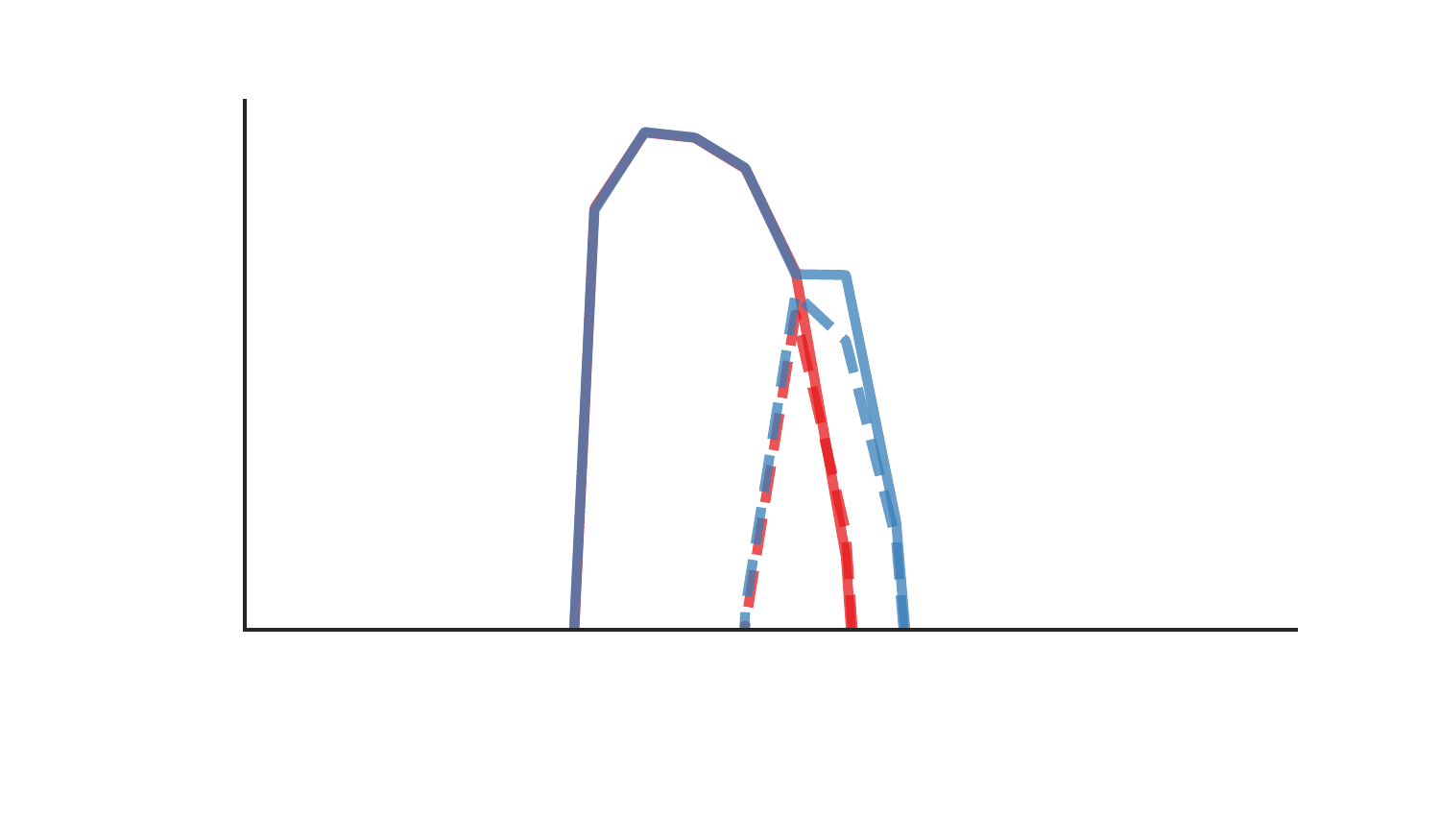}\hspace{-110pt}\includegraphics{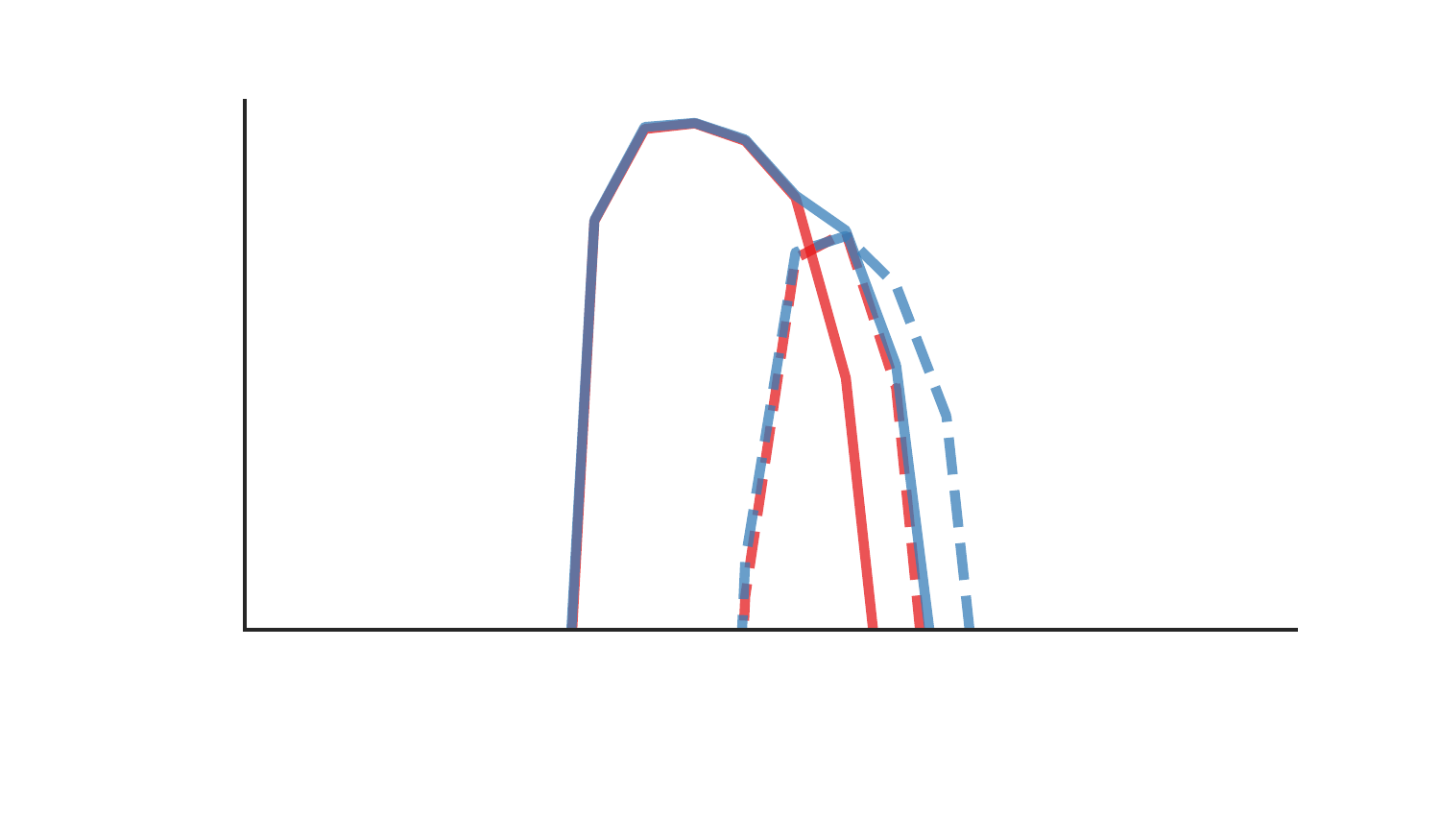}

\vspace{-75pt}

\includegraphics{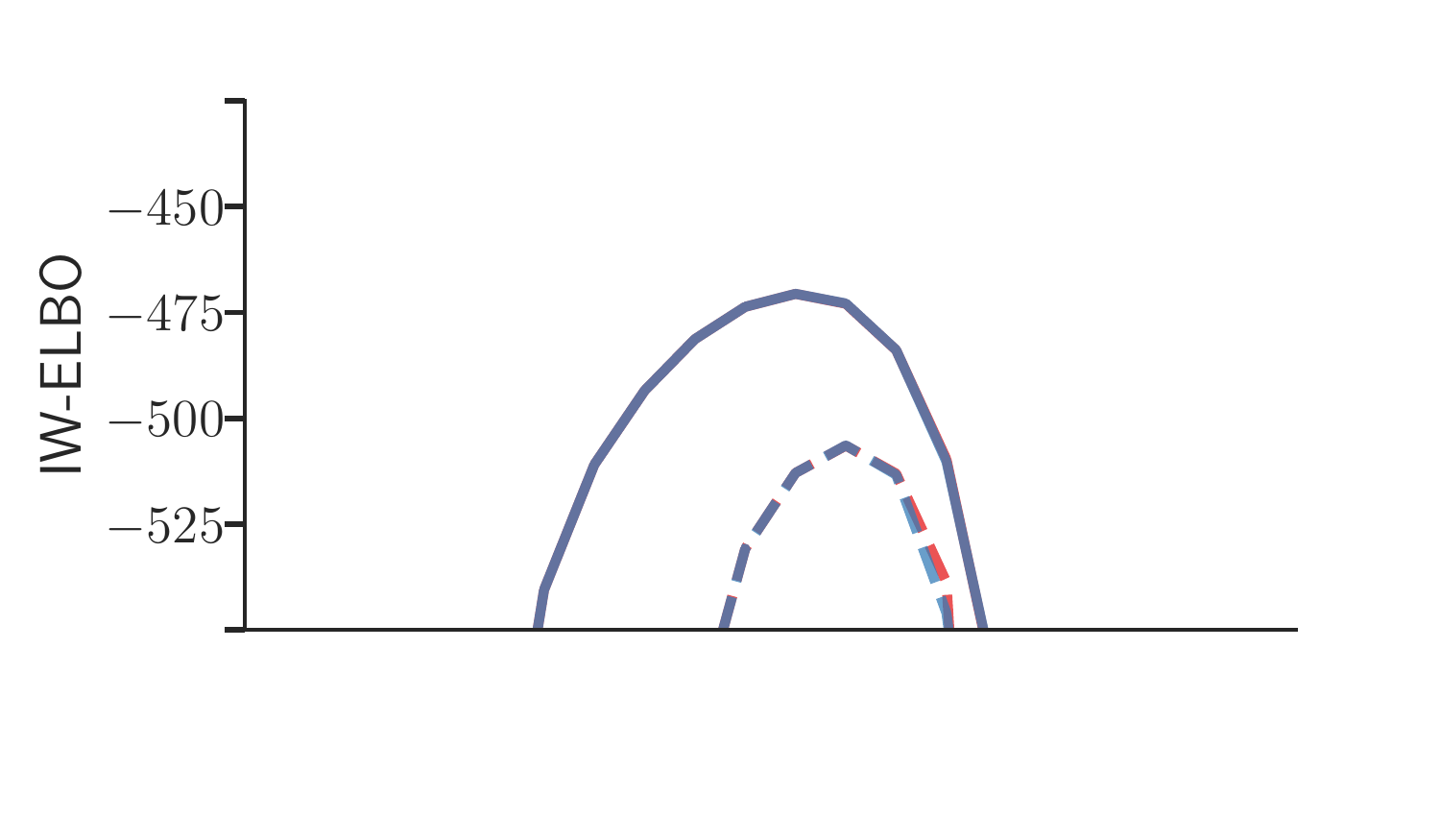}\hspace{-110pt}\includegraphics{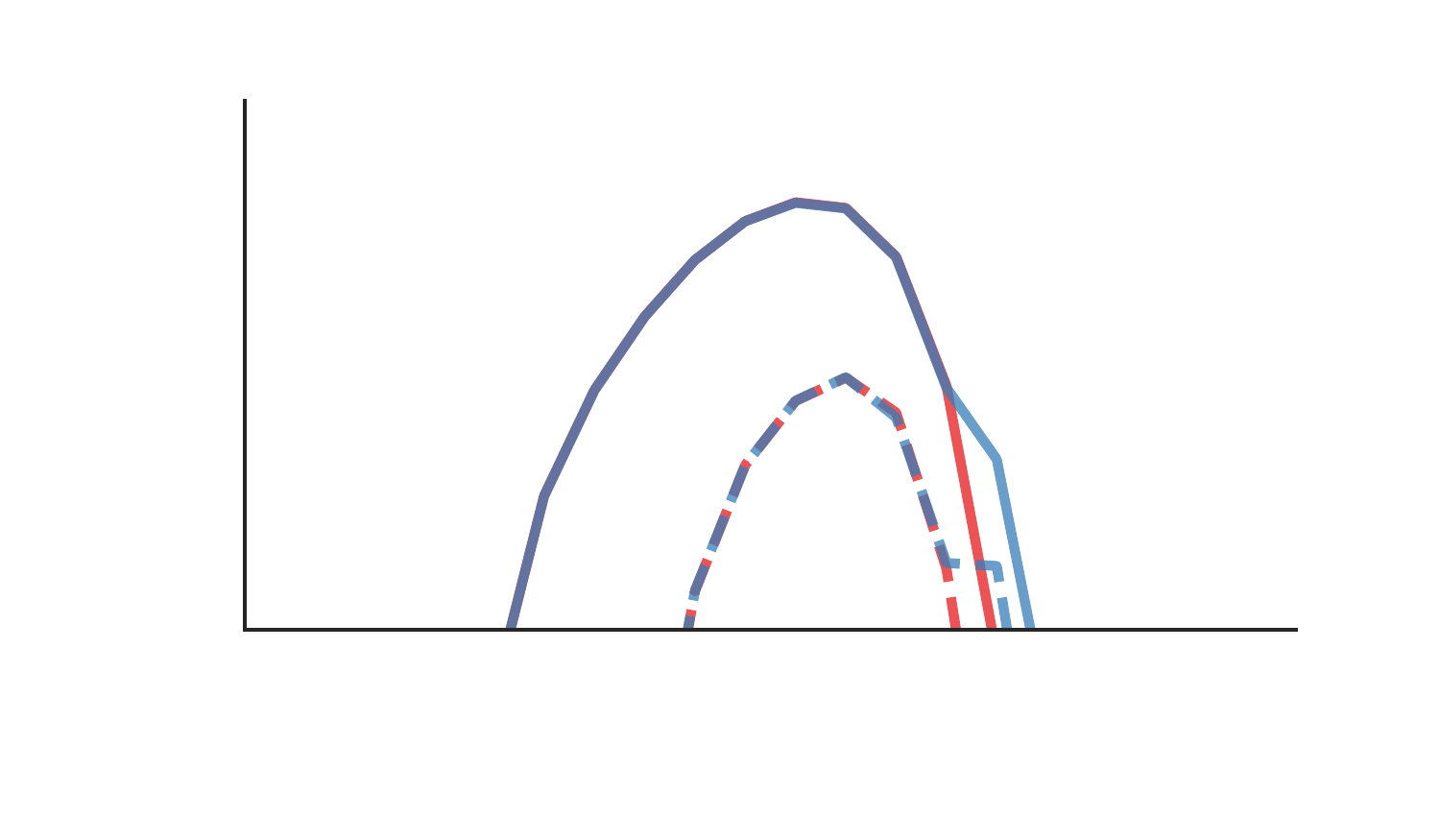}\hspace{-110pt}\includegraphics{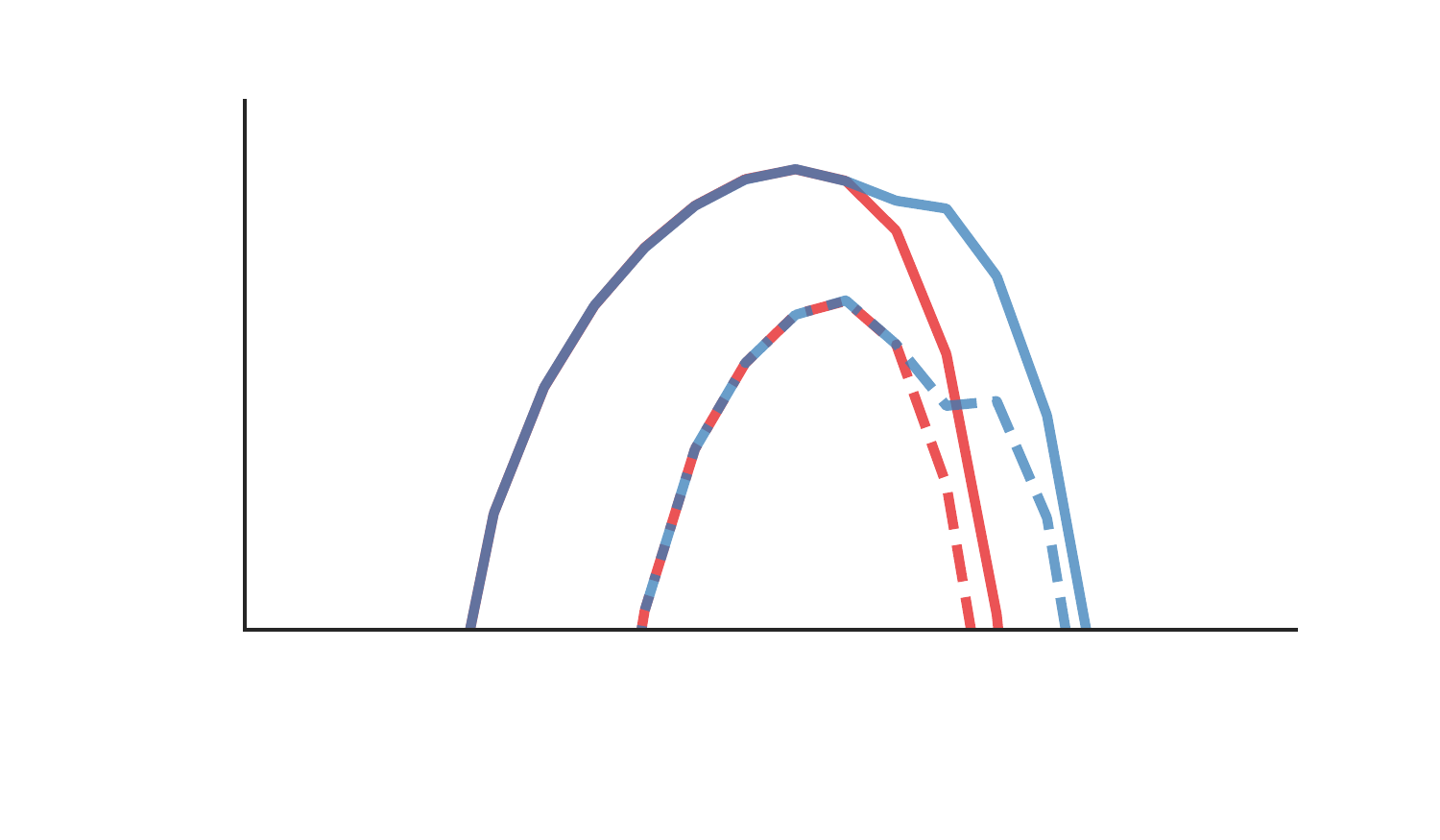}\hspace{-110pt}\includegraphics{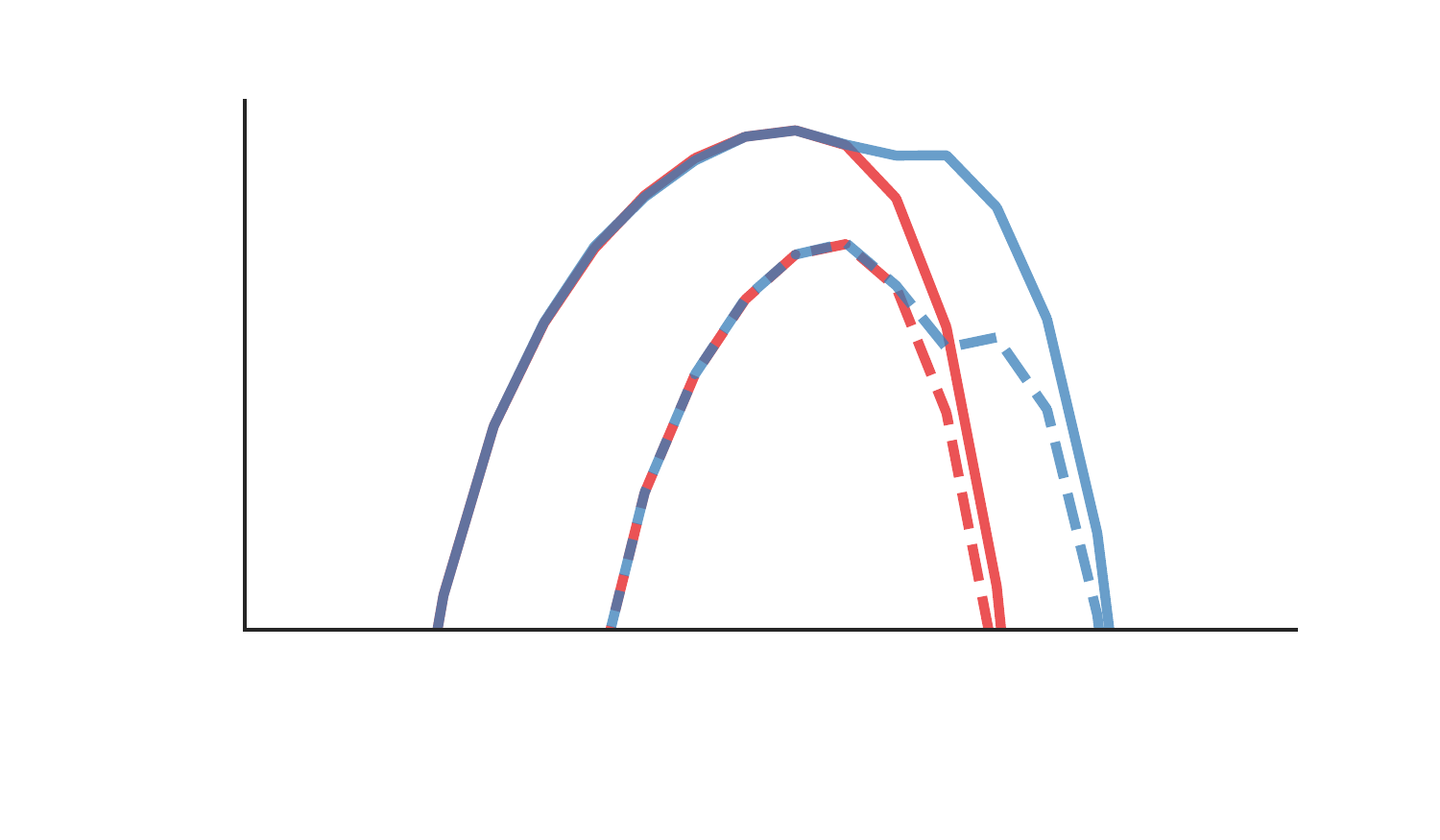}

\vspace{-75pt}

\includegraphics{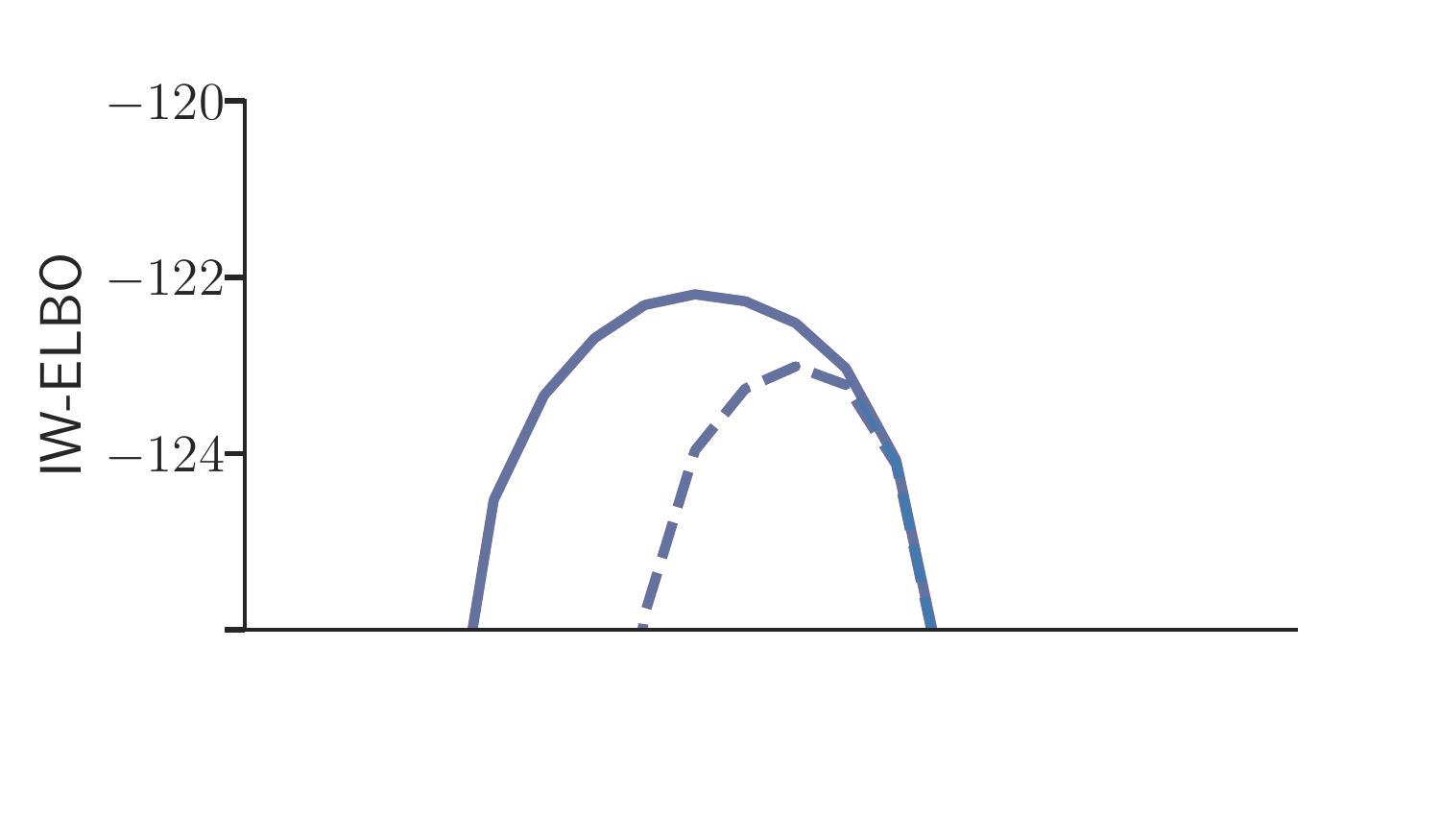}\hspace{-110pt}\includegraphics{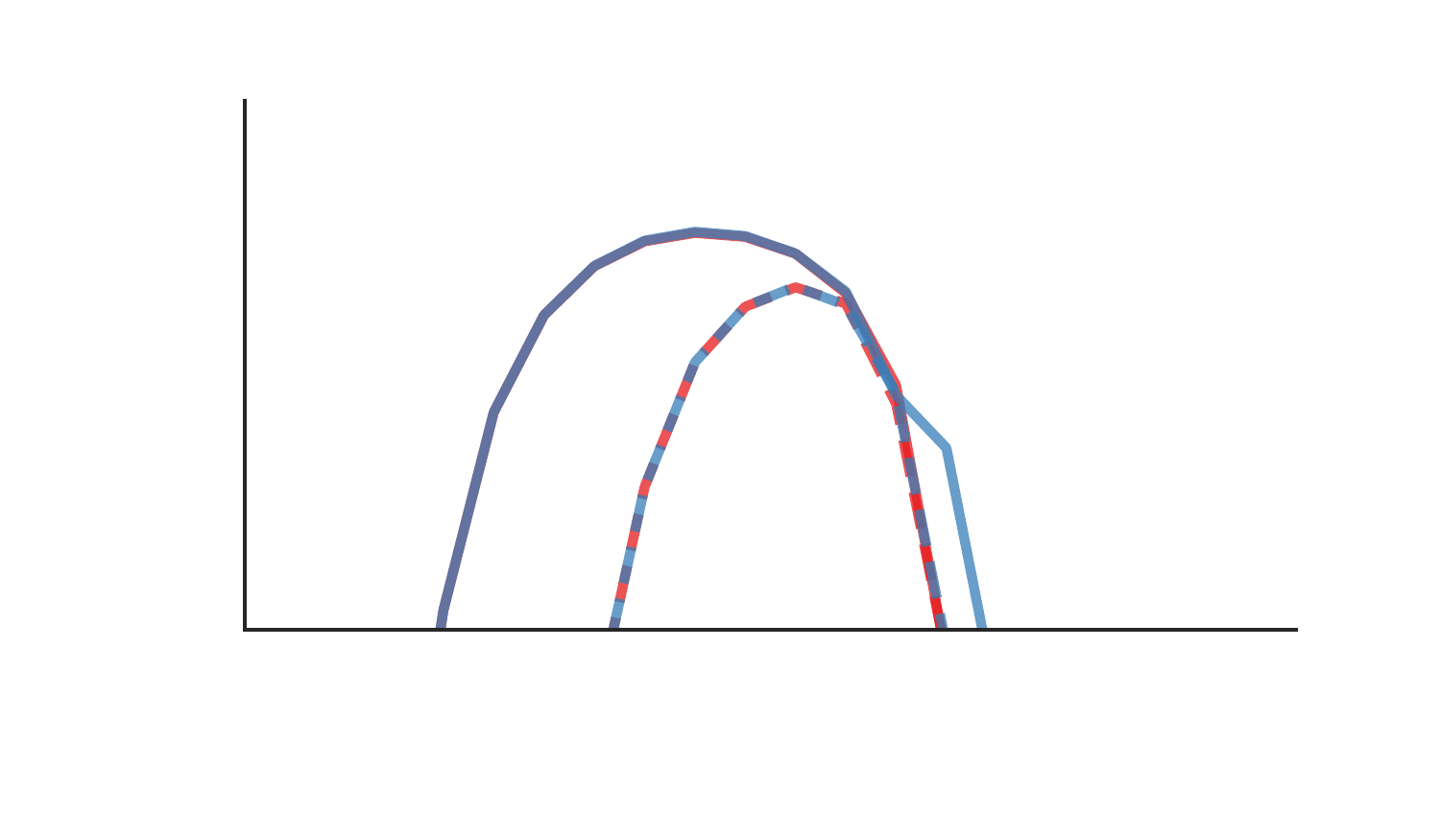}\hspace{-110pt}\includegraphics{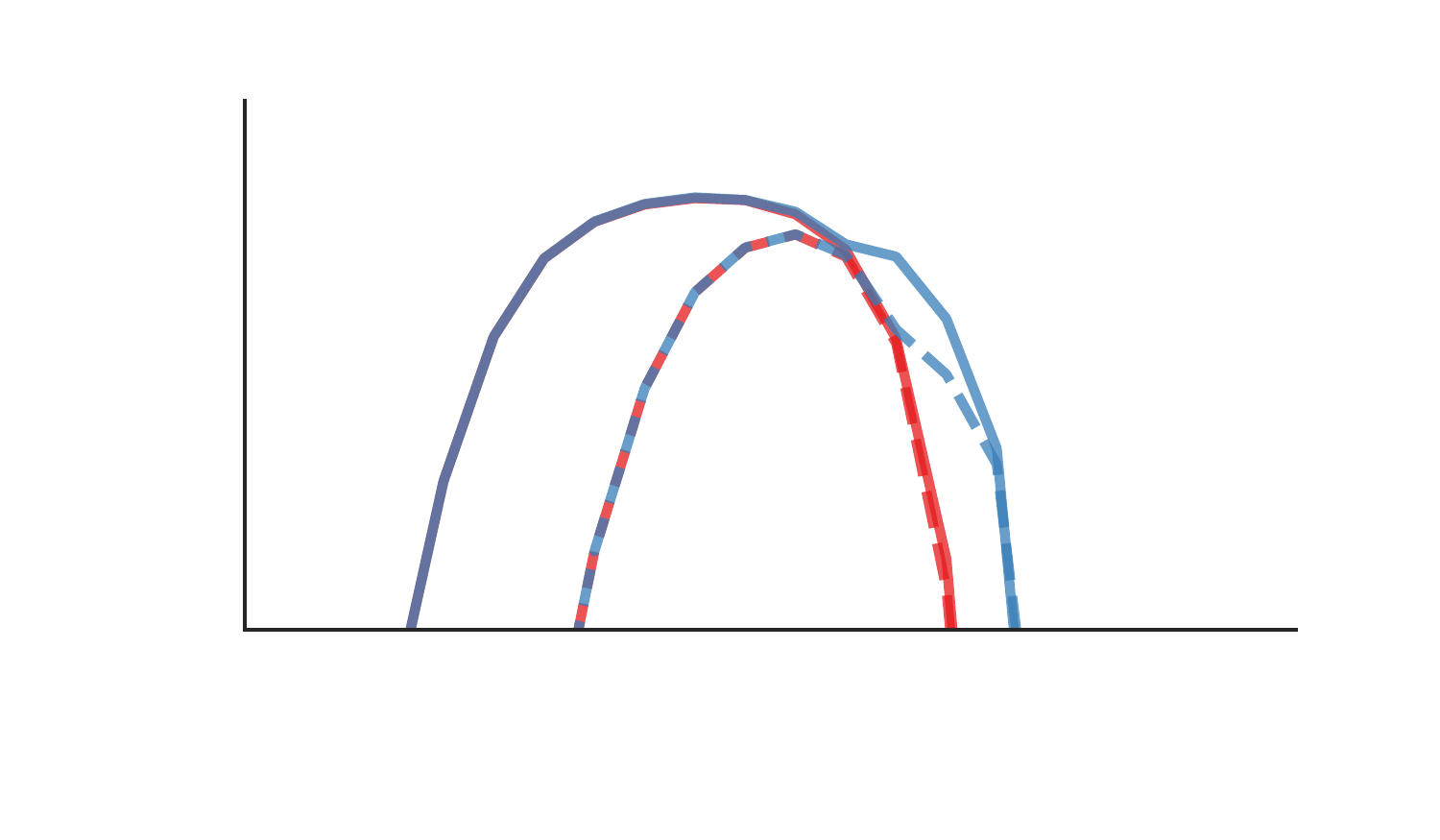}\hspace{-110pt}\includegraphics{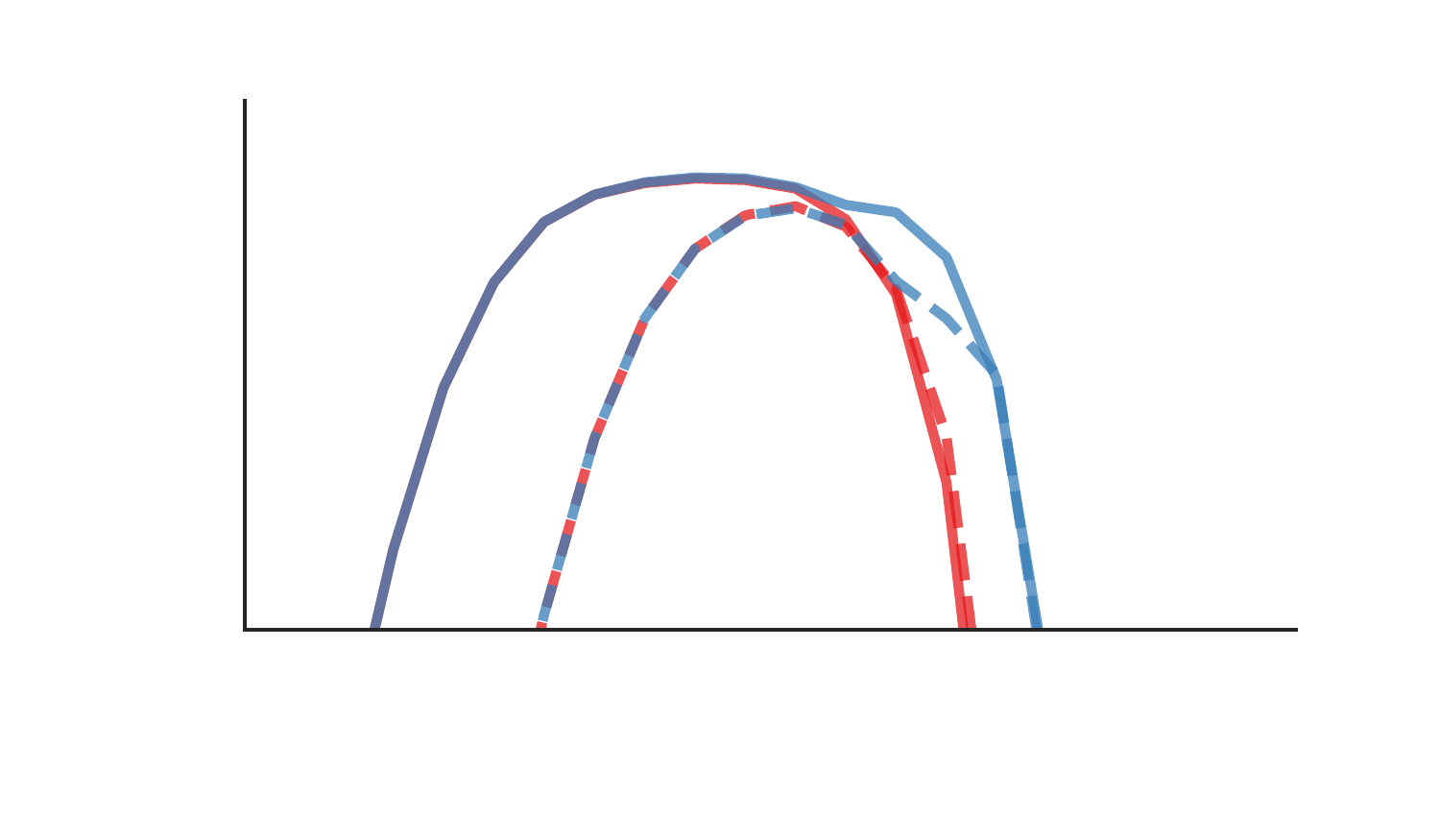}

\vspace{-75pt}

\includegraphics{LogregFigs/convergence_sonar_1_noxtick}\hspace{-110pt}\includegraphics{LogregFigs/convergence_sonar_5_notick}\hspace{-110pt}\includegraphics{LogregFigs/convergence_sonar_20_notick}\hspace{-110pt}\includegraphics{LogregFigs/convergence_sonar_100_notick}

\vspace{-75pt}

\includegraphics{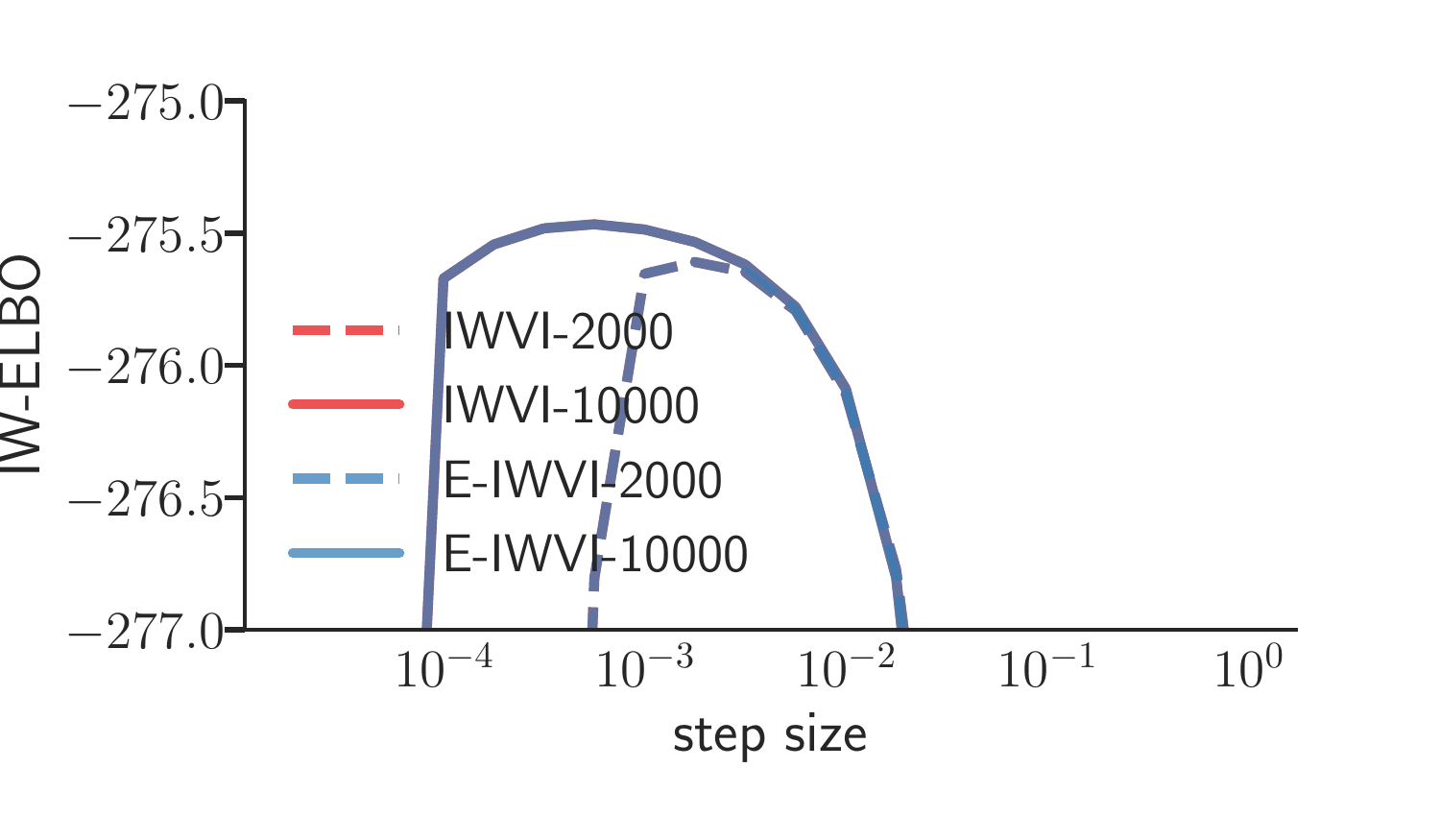}\hspace{-110pt}\includegraphics{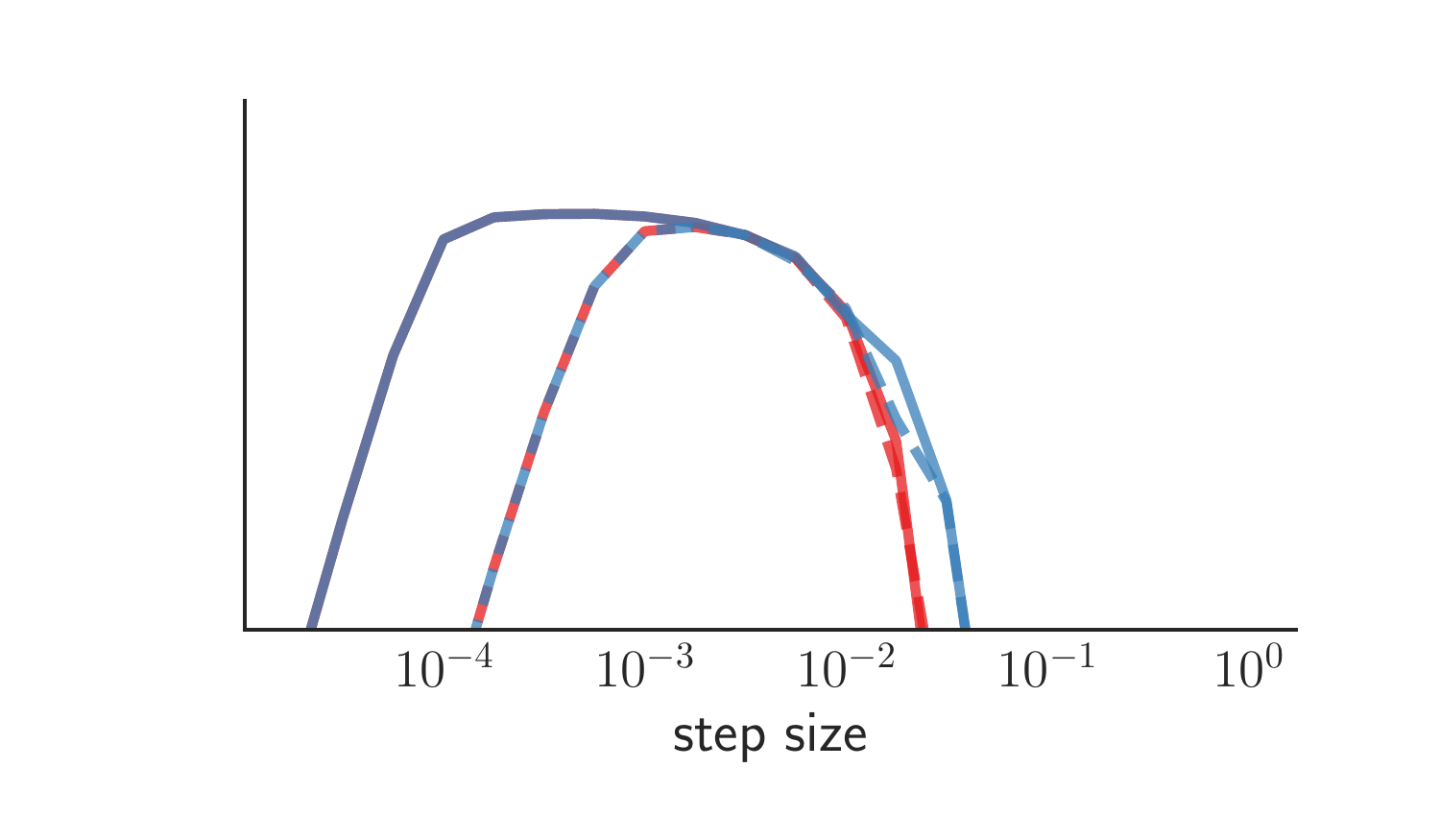}\hspace{-110pt}\includegraphics{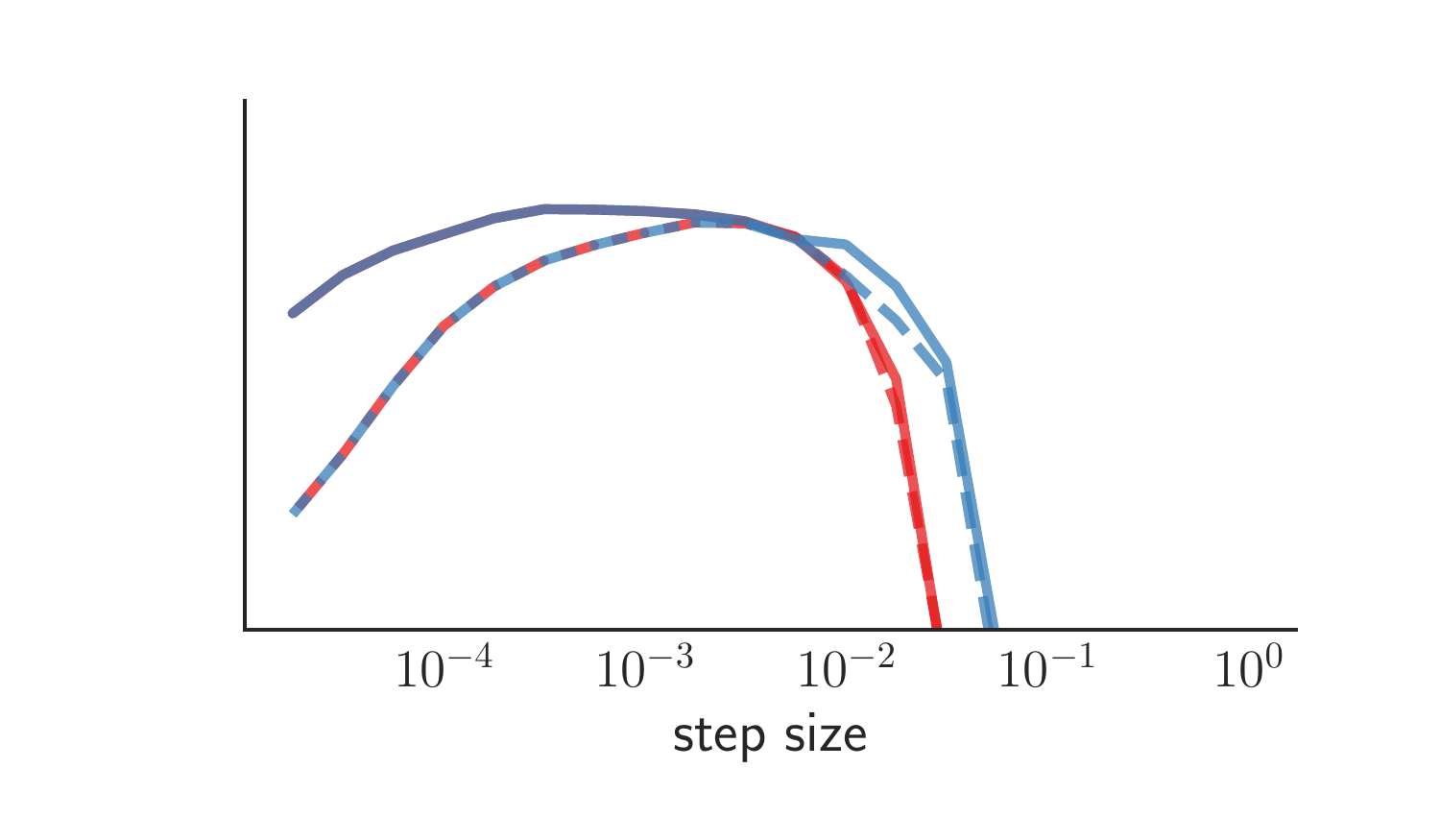}\hspace{-110pt}\includegraphics{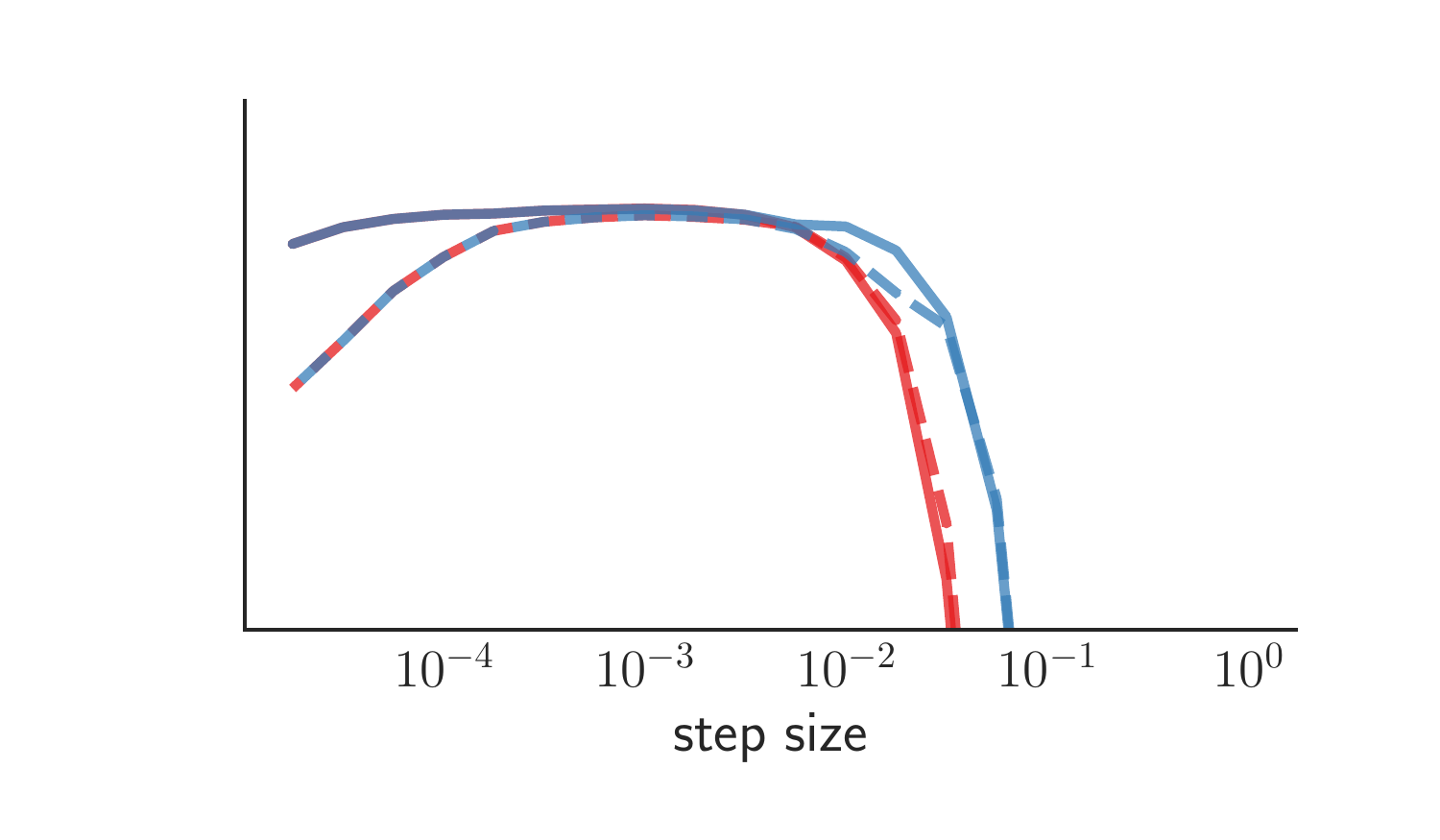}%
\end{minipage}}%
\end{minipage}

\caption{Logistic regression experiments (as in Fig. \ref{fig:Logistic-regression}) on more datasets.}
\end{figure}

\clearpage{}

\newpage{}

\subsection{Proofs for Section~\ref{sec:iw}}

\IWVIdecomp*

\begin{proof}
For the density $q_M$, define the distribution
\begin{eqnarray*}
q_{M}\pars{\hat{\z}_{1:M},\z_{1:M},h} & = & q_{M}\pars{\hat{\z}_{1:M}}q_{M}\pars{h\vert\hat{\z}_{1:M}}q_{M}\pars{\z_{1:M}\vert\hat{\z}_{1:M},h}\\
q_{M}\pars{\hat{\z}_{1:M}} & = & q\pars{\hat{\z}_{1:M}}\\
q_{M}\pars{h\vert\hat{\z}_{1:M}} & = & \frac{p\pars{\hat{\z}_{h},x}/q\pars{\hat{\z}_{h}}}{\sum_{m=1}^{M}p\pars{\hat{\z}_{m},x}/q\pars{\hat{\z}_{m}}}\\
q_{M}\pars{\z_{1:M}\vert\hat{\z}_{1:M},h} & = & \delta\pars{\z_{1}-\hat{\z}_{h}}\delta\pars{\z_{2:M}-\hat{\z}_{-h}}.
\end{eqnarray*}
What is the marginal distribution over $\z_{1:M}$?
\begin{eqnarray*}
q_{M}\pars{\z_{1:M}} & = & \int\sum_{h=1}^{M}q_{M}\pars{\hat{\z}_{1:M}}q_{M}\pars{h\vert\hat{\z}_{1:M}}q_{M}\pars{\z_{1:M}\vert\hat{\z}_{1:M},h}d\hat{\z}_{1:M}\\
 & = & \int\sum_{h=1}^{M}q\pars{\hat{\z}_{1:M}}\frac{p\pars{\hat{\z}_{h},\x}/q\pars{\hat{\z}_{h}}}{\sum_{m=1}^{M}p\pars{\hat{\z}_{m},\x}/q\pars{\hat{\z}_{m}}}\delta\pars{\z_{1}-\hat{\z}_{h}}\delta\pars{\z_{2:M}-\hat{\z}_{-h}}d\hat{\z}_{1:M}\\
 & = & \sum_{h=1}^{M}\int q\pars{\hat{\z}_{1:M}}\frac{p\pars{\hat{\z}_{h},\x}/q\pars{\hat{\z}_{h}}}{\sum_{m=1}^{M}p\pars{\hat{\z}_{m},\x}/q\pars{\hat{\z}_{m}}}\delta\pars{\z_{1}-\hat{\z}_{h}}\delta\pars{\z_{2:M}-\hat{\z}_{-h}}d\hat{\z}_{1:M}\\
 & = & M\int q\pars{\hat{\z}_{1:M}}\frac{p\pars{\hat{\z}_{1},\x}/q\pars{\hat{\z}_{1}}}{\sum_{m=1}^{M}p\pars{\hat{\z}_{m},\x}/q\pars{\hat{\z}_{m}}}\delta\pars{\z_{1}-\hat{\z}_{1}}\delta\pars{\z_{2:M}-\hat{\z}_{2:M}}d\hat{\z}_{1:M}\\
 & = & M\int\frac{p\pars{\hat{\z}_{1},\x}q\pars{\hat{\z}_{2:M}}}{\sum_{m=1}^{M}p\pars{\hat{\z}_{m},\x}/q\pars{\hat{\z}_{m}}}\delta\pars{\z_{1}-\hat{\z}_{1}}\delta\pars{\z_{2:M}-\hat{\z}_{2:M}}d\hat{\z}_{1:M}\\
 & = & M\frac{p\pars{\z_{1},\x}q\pars{\z_{2:M}}}{\sum_{m=1}^{M}p\pars{\z_{m},\x}/q\pars{\z_{m}}}\\
 & = & \frac{p\pars{\z_{1},\x}q\pars{\z_{2:M}}}{\frac{1}{M}\sum_{m=1}^{M}p\pars{\z_{m},\x}/q\pars{\z_{m}}}
\end{eqnarray*}

For the decomposition,   
we have, by Eq. \ref{eq:ELBO-decomposition} that
\[
\log p_{M}(\x)=\E_{q_{M}\pp{\z_{1:M}}}\log\frac{p_{M}(\z_{1:M},\x)}{q_{M}(\z_{1:M})}+\KL{q_{M}(\z_{1:M})}{p_{M}(\z_{1:M}\vert\x)}.
\]

Now, by the definition of $p_{M}$, it's easy to see that $p\pp{\x}=p_{M}\pp{\x}.$

Next, re-write the importance-weighted ELBO as
\begin{eqnarray*}
\E_{q_{M}\pars{\z_{1:M}}}\log\frac{p_{M}\pars{\z_{1:M},\x}}{q_{M}\pars{\z_{1:M}}} & = & \E_{q_{M}\pars{\z_{1:M}}}\log\frac{p\pars{\z_{1},\x}q\pars{\z_{2:M}}}{\frac{p\pars{\z_{1},\x}q\pars{\z_{2:M}}}{\frac{1}{M}\sum_{m=1}^{M}p\pars{\z_{m},x}/q\pars{\z_{m}}}}\\
 & = & \E_{q_{M}\pars{\z_{1:M}}}\log\pars{\frac{1}{M}\sum_{m=1}^{M}\frac{p\pars{\z_{m},x}}{q\pars{\z_{m}}}}.
\end{eqnarray*}

This gives that
\[
\log p\pars{\x}=\underbrace{\E_{q_{M}\pars{\z_{1:M}}}\log\pars{\frac{1}{M}\sum_{m=1}^{M}\frac{p\pars{\z_{m},x}}{q\pars{\z_{m}}}}}_{\text{importance weighted ELBO}}+\KL{q_{M}\pars{\z_{1:M}}}{p_{M}\pars{\z_{1:M}\vert\x}}.
\]
\end{proof}

\begin{lem}
$\E_{q_{M}(\z_{1})}t(\z_{1})=\E_{q\pp{\z_{1:M}}}\frac{\sum_{m=1}^{M}\w\pars{\z_{m}}\ t(\z_{m})}{\sum_{m=1}^{M}\w\pars{\z_{m}}}.$
\label{lem:t-transformation-justification}
\end{lem}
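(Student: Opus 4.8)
The plan is to evaluate the left-hand side directly from the generative process of Alg.~\ref{alg:A-generative-process} that defines $q_{M}$. First I would observe that, since $t(\z_{1})$ depends only on the first coordinate, $\E_{q_{M}(\z_{1})}t(\z_{1})=\E_{q_{M}(\z_{1:M})}t(\z_{1})$, so it suffices to compute the expectation of $t(\z_{1})$ over the full joint law produced by the algorithm. In that process $\z_{1}=\hat{\z}_{h}$, where $\hat{\z}_{1},\dots,\hat{\z}_{M}\sim q$ are drawn i.i.d.\ and the index $h$ is selected with probability $\w(\hat{\z}_{h})\big/\sum_{m'=1}^{M}\w(\hat{\z}_{m'})$.

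Taking the expectation over $h$ conditioned on the batch, and then over the batch, I would obtain
\[
\E_{q_{M}(\z_{1})}t(\z_{1})=\E_{\hat{\z}_{1:M}\sim q}\left[\sum_{h=1}^{M}\frac{\w(\hat{\z}_{h})}{\sum_{m'=1}^{M}\w(\hat{\z}_{m'})}\,t(\hat{\z}_{h})\right]=\E_{\hat{\z}_{1:M}\sim q}\frac{\sum_{h=1}^{M}\w(\hat{\z}_{h})\,t(\hat{\z}_{h})}{\sum_{m'=1}^{M}\w(\hat{\z}_{m'})}.
\]
Relabeling the i.i.d.\ variables $\hat{\z}_{m}\to\z_{m}$ then yields the claim exactly, because $q(\z_{1:M})=\prod_{m}q(\z_{m})$ is precisely the law of an i.i.d.\ batch from $q$.

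As a cross-check I would also run an argument based on the explicit density. Substituting the closed form $q_{M}(\z_{1:M})=p(\z_{1},\x)q(\z_{2:M})\big/\pars{\tfrac{1}{M}\sum_{m}\w(\z_{m})}$ from Theorem~\ref{thm:IWVI-decomp}, together with $p(\z_{1},\x)=\w(\z_{1})q(\z_{1})$, gives
\[
\E_{q_{M}(\z_{1})}t(\z_{1})=\E_{q(\z_{1:M})}\frac{M\,\w(\z_{1})\,t(\z_{1})}{\sum_{m=1}^{M}\w(\z_{m})}.
\]
I would then exploit the exchangeability of the i.i.d.\ batch: each expectation $\E_{q(\z_{1:M})}\w(\z_{k})t(\z_{k})\big/\sum_{m}\w(\z_{m})$ is identical for every $k$, so the $k=1$ term equals the average over $k=1,\dots,M$, which converts the prefactor $M$ into the symmetric numerator $\sum_{k}\w(\z_{k})t(\z_{k})$ and recovers the same identity.

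There is no genuinely hard step here; the only point requiring care is the bookkeeping of the selection-and-relabeling phase (equivalently, the handling of the delta functions used in the proof of Theorem~\ref{thm:IWVI-decomp}), so that marginalizing over the chosen index $h$ produces the self-normalized weights with the correct indices. I would therefore present Approach~1 as the primary argument, since it avoids re-deriving the density and makes the symmetry transparent, invoking the exchangeability observation of Approach~2 only as confirmation.
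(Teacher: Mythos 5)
Your proposal is correct, and your primary argument takes a genuinely different route from the paper, while your ``cross-check'' is essentially the paper's own proof verbatim. The paper proves the lemma by substituting the closed-form density $q_{M}\pars{\z_{1:M}}=p\pars{\z_{1},\x}q\pars{\z_{2:M}}\big/\pars{\tfrac{1}{M}\sum_{m}\w\pp{\z_{m}}}$ from Theorem~\ref{thm:IWVI-decomp} into the integral, rewriting it as an expectation under the i.i.d.\ law $q\pp{\z_{1:M}}$, and then symmetrizing the single term $t(\z_{1})\w(\z_{1})$ into the average $\tfrac{1}{M}\sum_{m}t(\z_{m})\w(\z_{m})$ by exchangeability --- exactly your Approach~2. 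Your Approach~1 instead works directly from the generative process of Alg.~\ref{alg:A-generative-process}: conditioning on the batch and averaging over the selected index $h$ immediately produces the self-normalized weights, and relabeling the i.i.d.\ batch finishes the argument. That route is sound and arguably more transparent --- it does not rely on the density formula of Theorem~\ref{thm:IWVI-decomp} at all, only on the definition of the sampling procedure, and it makes the connection to self-normalized importance sampling explicit. What the paper's route buys in exchange is that it works purely at the level of the already-established density, so it requires no further bookkeeping of the selection-and-relabeling step (the delta functions have already been integrated out in the proof of Theorem~\ref{thm:IWVI-decomp}). Either argument is complete; the one point you correctly flag as needing care in Approach~1 --- that marginalizing over $h$ attaches the weight $\w(\hat{\z}_{h})/\sum_{m'}\w(\hat{\z}_{m'})$ to the value $t(\hat{\z}_{h})$ with matching indices --- is handled correctly in your displayed computation.
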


\begin{proof}
\begin{eqnarray*}
\E_{q_{M}\pp{\z_{1:M}}}t(\z_{1}) & = & \int\frac{t(\z_{1})\ p\pars{\z_{1},\x}q\pars{\z_{2:M}}}{\frac{1}{M}\sum_{m=1}^{M}p\pars{\z_{m},\x}/q\pars{\z_{m}}}d\z_{1:M}\\
 & = & \int q\pars{\z_{1:M}}\frac{\ t(\z_{1})\ p\pars{\z_{1},\x}/q(\z_{1})}{\frac{1}{M}\sum_{m=1}^{M}p\pars{\z_{m},\x}/q\pars{\z_{m}}}d\z_{1:M}\\
 & = & \E_{q\pars{\z_{1:M}}}\frac{t(\z_{1})\ p\pars{\z_{1},\x}/q(\z_{1})}{\frac{1}{M}\sum_{m=1}^{M}p\pars{\z_{m},\x}/q\pars{\z_{m}}}\\
 & = & \E_{q\pars{\z_{1:M}}}\frac{\frac{1}{M}\sum_{m=1}^{M}t(\z_{m})\ p\pars{\z_{m},\x}/q(\z_{m})}{\frac{1}{M}\sum_{m=1}^{M}p\pars{\z_{m},\x}/q\pars{\z_{m}}}\\
 & = & \E_{q\pp{\z_{1:M}}}\frac{\sum_{m=1}^{M}\w\pars{\z_{m}}\ t(\z_{m})}{\sum_{m=1}^{M}\w\pars{\z_{m}}}
\end{eqnarray*}
\end{proof}
\marginaltojoint*
\begin{proof}
\begin{eqnarray*}
\KL{q_{M}\pp{\z_{1:M}}}{p_{M}\pp{\z_{1:M}\vert\x}} & = & \KL{q_{M}\pp{\z_{1}}}{p_{M}\pp{\z_{1}\vert\x}}+\KL{q_{M}\pp{\z_{2:M}\vert\z_{1}}}{p_{M}\pp{\z_{2:M}\vert\z_{1},\x}}\\
 &  & \text{by the chain rule of KL-divergence}\\
 & = & \KL{q_{M}\pp{\z_{1}}}{p\pp{\z_{1}\vert\x}}+\KL{q_{M}\pp{\z_{2:M}\vert\z_{1}}}{q\pp{\z_{2:M}}} \label{eq:final_KL_split}\\
 &  & \text{since }p_{M}\pp{\z_{1}\vert\x}=p\pp{\z_{1}\vert\x}\text{ and }p_{M}\pp{\z_{2:M}\vert\z_{1},\x}=q\pp{\z_{2:M}}.
\end{eqnarray*}
The KL-divergences can be identified with the gaps in the inequalities in Eq. ~\ref{eq:IWELBO-lower-bound} through the application of Eq. \ref{eq:ELBO-decomposition} to give that
\[  \log p(\x) - \ELBO{q_M(\z_1)}{p(\z_1, \x)} = \KL{q_M(\z_1)}{p_M(\z_1\vert\x)} \]

which establishes the looseness of the first inequality. Then, Thm. ~\ref{thm:IWVI-decomp} gives that

\[ \log p(\x) - \IWELBO{q(\z)}{p(\z,\x)} =\KL{q_{M}\pp{\z_{1:M}}}{p_{M}\pp{\z_{1:M}\vert\x}}. \]

The difference of the previous two equations gives that the looseness of the second inequality is

\begin{eqnarray*}
\ELBO{q_M(\z_1)}{p(\z_1, \x)} - \IWELBO{q(\z)}{p(\z,\x)} =& \KL{q_{M}\pp{\z_{1:M}}}{p_{M}\pp{\z_{1:M}\vert\x}} \\
&- \KL{q_M(\z_1)}{p_M(\z_1\vert\x)} \\
=& \KL{q_{M}\pp{\z_{2:M}\vert\z_{1}}}{q\pp{\z_{2:M}}}.
\end{eqnarray*}
\end{proof}

\subsection{Asymptotics}

\asymptoticIWVI*

We first give more context for this theorem, and then its proof. Since $\IWELBO{p}{q} = \E \log(R_M)$ where $\sqrt{M} (R_M-p(\x))$ converges in distribution to a Gaussian distribution, the result is \emph{nearly} a straightforward application of the ``delta method for moments'' (e.g.~\citep[Chapter 5.3.1]{bickel2015mathematical}). The key difficulty is that the derivatives of $\log(r)$ are unbounded at $r=0$; bounded derivatives are typically required to establish convergence rates.

The assumption that $\limsup_{M \rightarrow \infty} \E[1/R_M] < \infty$ warrants further discussion. One (rather strong) assumption that implies this\footnote{To see this, observe that since $1/r$ is convex over $r>0$, Jensen's inequality gives that $(\frac{1}{M} \sum_{m=1}^M r_m)^{-1} \leq \frac{1}{M} \sum_{m=1}^M r_m^{-1}$ and so $\E 1/R_M \leq \E 1/R$.} would be that $\E 1/R < \infty$. However, this is not necessary. For example, if $R$ were uniform on the $[0,1]$ interval, then $\E 1/R$ does not exist, yet $\E 1/R_M$ does for any $M\geq2$. It can be shown\footnote{Define $\sigma$ to be a uniformly random over all permutations of ${1,...,M}$. Then, Jensen's inequality gives that
$$\left( \frac{1}{M} \sum_{m=1}^M r_m \right)^{-1} = \left( \E_\sigma \frac{1}{M_0} \sum_{m=1}^{M_0} r_{\sigma(m)} \right)^{-1} \leq \E_\sigma \left( \frac{1}{M_0} \sum_{m=1}^{M_0} r_{\sigma(m)} \right)^{-1}. $$
Since $R_M$ is a mean of i.i.d. variables, the permutation vanishes under expectations and so $\E 1/R_M \leq \E 1/R_{M_0}$.
 }
 that if $M \geq M_0$ and $\E 1/R_{M_0}<\infty$ then $\E 1/R_M \leq \E 1/R_{M_0}$. Thus, assuming only that there is some finite $M$ such that $\E 1/R_M < \infty$ is sufficient for the $\limsup$ condition.

Both Maddison et al. \citep[Prop. 1]{maddison_filtering_2017} and Rainforth et al. \citep[Eq. 7]{rainforth_tighter_2018} give related results that control the rate of convergence. It can be shown that Proposition 1 of Maddison et al. implies the conclusion of Theorem~\ref{thm:asymptotic} if $\E\big[(R-p(\x))^6\big] < \infty$. Their Proposition 1, specialized to our notation and setting, is:

\begin{restatable}[\cite{maddison_filtering_2017}]{prop}{maddison}
  \label{prop:maddison}
  If $g(M) = \E\big[(R_M-p(\x))^6\big] < \infty$ and $\limsup_{M \rightarrow \infty} \E[1/R_M] < \infty$, then
  $$
  \log p(\x) - \E \log R_M = \frac{\V[R_M]}{2 p(\x)^2} + O(\sqrt{g(M)}).
  $$
\end{restatable}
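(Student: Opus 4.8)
The plan is to expand $\log R_M$ to second order about its mean $\mu := p(\x)$ and bound the remainder. Introduce the pointwise function
$$\phi(r) := \log r - \log\mu - \frac{r-\mu}{\mu} + \frac{(r-\mu)^2}{2\mu^2},$$
which is precisely the order-$\ge 3$ Taylor remainder of $\log$ at $\mu$. Using $\E R_M = \mu$ to annihilate the linear term and to recognize $\E\!\big[(R_M-\mu)^2/(2\mu^2)\big] = \V[R_M]/(2\mu^2)$, one obtains the exact identity
$$\log\mu - \E\log R_M - \frac{\V[R_M]}{2\mu^2} = -\,\E[\phi(R_M)].$$
Hence it suffices to prove $|\E[\phi(R_M)]| = O(\sqrt{g(M)})$. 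Near $\mu$ the remainder is $O(|r-\mu|^3)$, but $\phi$ inherits the blow-up of $\log r$ as $r\to 0$, so a single third-derivative bound cannot work globally.

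To organize the argument I would split according to where $R_M$ lands: the middle band $\{|r-\mu|\le\mu/2\}$, the high tail $\{r>3\mu/2\}$, and the low tail $\{0<r<\mu/2\}$. On the middle band $r$ is bounded away from $0$, so Taylor's theorem with Lagrange remainder gives $|\phi(r)| \le \tfrac{1}{3(\mu/2)^3}|r-\mu|^3$, and the power-mean (Lyapunov) inequality $\E|R_M-\mu|^3 \le (\E(R_M-\mu)^6)^{1/2} = \sqrt{g(M)}$ bounds this contribution by $O(\sqrt{g(M)})$. On the high tail, concavity gives $\log(r/\mu)\le (r-\mu)/\mu$ and every term of $\phi$ is dominated by a monomial in $(r-\mu)$; since $|r-\mu|>\mu/2$ there, each such monomial is at most a constant times $(r-\mu)^6$, so this contribution is $O(g(M))$, which is absorbed into $O(\sqrt{g(M)})$ once $g(M)$ is bounded.

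The low tail is the main obstacle, since there $\log r$ is large, negative, and unbounded. The polynomial terms of $\phi$ are controlled exactly as on the high tail (giving $O(g(M))$), so the real issue is $\E[\log(\mu/R_M)\,\mathbf{1}_{R_M<\mu/2}]$. I would apply Cauchy--Schwarz to separate the \emph{rareness} of the event from the \emph{size} of the logarithm:
$$\E[\log(\mu/R_M)\,\mathbf{1}_{R_M<\mu/2}] \le \sqrt{\E\big[(\log(\mu/R_M))^2\,\mathbf{1}_{R_M<\mu/2}\big]}\;\sqrt{\P(R_M<\mu/2)}.$$
Markov's inequality applied to $(R_M-\mu)^6$ gives $\P(R_M<\mu/2)\le g(M)/(\mu/2)^6$, which supplies the decisive factor $\sqrt{g(M)}$. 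For the first factor, the elementary estimate $(\log s)^2 \le C s$ for $s\ge 2$ (with $s=\mu/r$) yields $(\log(\mu/r))^2 \le C\mu/r$ on the low tail, so $\E[(\log(\mu/R_M))^2\,\mathbf{1}_{R_M<\mu/2}] \le C\mu\,\E[1/R_M]$, which is bounded exactly by the hypothesis $\limsup_{M\to\infty}\E[1/R_M]<\infty$. Multiplying the two factors gives $O(\sqrt{g(M)})$, and summing the three regional bounds completes the proof. The two hypotheses thus play complementary roles: the sixth-moment bound controls both the Taylor remainder and the tail probability, while the $\E[1/R_M]$ bound is precisely what tames the logarithmic singularity on the rare low-tail event.
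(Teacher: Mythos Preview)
The paper does not actually prove this proposition; it is quoted as a result of Maddison et al.\ and used only to contextualize the paper's own Theorem~\ref{thm:asymptotic}. So there is no in-paper proof to compare against directly. Your argument is correct and self-contained, with one small caveat: absorbing the $O(g(M))$ contributions from the two tail regions into $O(\sqrt{g(M)})$ requires $g(M)$ to be bounded in $M$. In the intended setting where $R_M$ is an i.i.d.\ average, finiteness of $g(M)$ for any single $M$ forces $\E|R-\mu|^6<\infty$, and then Lemma~\ref{lemma:sample-moments} gives $g(M)=O(M^{-3})\to 0$; it would be worth stating this explicitly.

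For an indirect comparison: the paper remarks that its proof of Theorem~\ref{thm:asymptotic} follows the same high-level structure as Maddison et al.'s proof of this proposition. That structure differs from yours. Rather than a three-region split, the paper writes the Taylor remainder in integral form, $\int_0^{\Delta}\tfrac{x^2}{1+x}\,dx$, and establishes a single pointwise inequality valid for all $\Delta>-1$ (Lemma~\ref{lem:delta}) of the shape
\[
\Bigl|\int_0^{\Delta}\tfrac{x^2}{1+x}\,dx\Bigr| \;\le\; C_\epsilon\,\Bigl|\tfrac{1}{1+\Delta}\Bigr|^{\epsilon/(1+\epsilon)}\,|\Delta|^{(2+3\epsilon)/(1+\epsilon)} \;+\; D_\alpha\,|\Delta|^{2+\alpha},
\]
and then separates the $1/(1+\Delta_M)$ factor from the $|\Delta_M|$ factor via H\"older's inequality. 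Your regional decomposition with Cauchy--Schwarz on the low tail is more elementary and makes the role of the $\E[1/R_M]$ hypothesis especially transparent (it enters only to control $(\log(\mu/R_M))^2$ on a rare event). The integral-remainder-plus-H\"older route is more uniform and is precisely what allows the paper to weaken the sixth-moment assumption to a $(2+\alpha)$th-moment assumption for arbitrary $\alpha>0$ in Theorem~\ref{thm:asymptotic}.
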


In order to imply the conclusion of Theorem~\ref{thm:asymptotic}, it is necessary to bound the final term. To do this, we can use the following lemma, which is a consequence of the Marcinkiewicz–Zygmund inequality~\cite{marcinkiewicz1937quelques} and provides an asymptotic bound on the higher moments of a sample mean. We will also use this lemma in our proof of Theorem~\ref{thm:asymptotic} below. 

\begin{restatable}[Bounds on sample moments]{lem}{samplemoments}
  \label{lemma:sample-moments}
Let $U_1, \ldots, U_M$ be i.i.d random variables with $\E[U_i] = 0$ and let $\bar{U}_M = \frac{1}{M} \sum_{i=1}^M U_i$. Then, for each $s \geq 2$ there is a constant $B_s>0$ such that
$$
\E \big|\bar{U}_M\big|^s \leq B_s M^{-s/2}\E\big|U_1\big|^s.
$$
\end{restatable}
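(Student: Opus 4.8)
The plan is to invoke the Marcinkiewicz--Zygmund inequality and then collapse the resulting $L^{s/2}$ norm of a sum of squares back to the $s$th absolute moment of a single summand by a convexity argument. To set things up, write $S_M = \sum_{i=1}^M U_i$, so that $\bar U_M = S_M/M$ and hence $\E|\bar U_M|^s = M^{-s}\,\E|S_M|^s$. The whole claim therefore reduces to showing $\E|S_M|^s \leq B_s\, M^{s/2}\,\E|U_1|^s$.

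First I would apply the upper half of the Marcinkiewicz--Zygmund inequality. Since the $U_i$ are independent with mean zero and $s \geq 2 \geq 1$, there is a constant $B_s > 0$ depending only on $s$ (in particular independent of $M$ and of the distribution of the $U_i$) with
$$\E|S_M|^s \;\leq\; B_s\, \E\Bigl(\sum_{i=1}^M U_i^2\Bigr)^{s/2}.$$
Next I would bound the expectation on the right. Because $s/2 \geq 1$, the map $x \mapsto x^{s/2}$ is convex on $[0,\infty)$, so Jensen's inequality applied to the average of the nonnegative numbers $U_1^2,\dots,U_M^2$ gives $\bigl(\tfrac1M\sum_i U_i^2\bigr)^{s/2} \leq \tfrac1M\sum_i |U_i|^s$ pointwise, i.e.
$$\Bigl(\sum_{i=1}^M U_i^2\Bigr)^{s/2} \;\leq\; M^{s/2-1}\sum_{i=1}^M |U_i|^s.$$
Taking expectations and using that the $U_i$ are identically distributed yields $\E\bigl(\sum_i U_i^2\bigr)^{s/2} \leq M^{s/2-1}\cdot M\cdot \E|U_1|^s = M^{s/2}\,\E|U_1|^s$.

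Chaining the three displays gives $\E|\bar U_M|^s \leq M^{-s}\,B_s\,M^{s/2}\,\E|U_1|^s = B_s\,M^{-s/2}\,\E|U_1|^s$, which is the claim. The argument is essentially routine once the right tool is identified, so I do not anticipate a serious obstacle; the only point requiring care is that every step uses $s \geq 2$ in an essential way, since this guarantees $s/2 \geq 1$, which is exactly the regime where Marcinkiewicz--Zygmund applies and where $x \mapsto x^{s/2}$ is convex. Because the statement asserts only the existence of $B_s$, there is no need to track an explicit value for the Marcinkiewicz--Zygmund constant.
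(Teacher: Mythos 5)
Your proof is correct and follows essentially the same route as the paper's: apply the Marcinkiewicz--Zygmund inequality to $\sum_i U_i$ and then use convexity of $x \mapsto x^{s/2}$ (valid since $s \geq 2$) to reduce $\E\bigl(\sum_i U_i^2\bigr)^{s/2}$ to $M^{s/2}\,\E|U_1|^s$. The bookkeeping of the powers of $M$ matches the paper's exactly, so there is nothing to add.
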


We now show that if the assumptions of Prop.~\ref{prop:maddison} are true, this lemma can be used to bound $g(M)$ and therefore imply the conclusion of Theorem~\ref{thm:asymptotic}. If $\E\big|R-p(\x)\big|^6 < \infty$ then $g(M) = \E\big|R_M - p(\x)\big|^6 \leq B_6 M^{-3}\E\big|R- p(\x)\big|^6 \in O(M^{-3})$ and $\sqrt{g(M)} \in O(M^{-3/2})$. Then, since $\V[R_M] = \V[R]/M$, we can multiply by $M$ in both sides of Prop.~\ref{prop:maddison} to get
$$
M(\log p(\x) - \E \log R_M) = \frac{\V[R]}{2 p(\x)^2} + O(M^{-1/2}),
$$
which goes to $\frac{\V[R]}{2 p(\x)^2}$ as $M \to \infty$, as desired.

\begin{proof}[Proof of Theorem \ref{thm:asymptotic}]
  Our proof will follow the same high-level structure as the proof of Prop.~1 from Maddison et al. \citep{maddison_filtering_2017}, but we will more tightly bound the Taylor remainder term that appears below.

Let $\theta = p(\x) = \E R$ and $\sigma^2 = \V[R]$. For any $r > 0$, let $\Delta = \Delta(r) = \frac{r-\theta}{\theta} = \frac{r}{\theta}-1$. Then $\log \theta - \log r= -\log(1 + \Delta)$. Since $r > 0$, we only need to consider $-1 < \Delta < \infty$.

Consider the second-order Taylor expansion of $\log(1+\Delta)$:
$$
\log(1+\Delta) = \Delta - \frac{1}{2}\Delta^2 + \int_0^{\Delta}\frac{x^2}{1+x} dx
$$

Now, let $\Delta_M = \Delta(R_M)$. Then, since $\E[\Delta_M]=0$ and $\E[\Delta^2_M]=\frac{1}{\theta^2} \frac{\sigma^2}{M}$,
$$
\begin{aligned}
\E (\log \theta - \log R_M) = -\E\log(1+\Delta_M)
&=  \frac{1}{2} \E \Delta_M^2 - \E \int_0^{\Delta_M}\frac{x^2}{1+x} dx \\
&=  \frac{\sigma^2/M}{2 \theta^2} - \E \int_0^{\Delta_M}\frac{x^2}{1+x} dx
\end{aligned}
$$
Moving $M$ and taking the limit, this is
$$
\begin{aligned}
\lim_{M \to \infty} M( \log \theta - \E \log R_M)
&=  \frac{\sigma^2}{2 \theta^2} - \lim_{M \to \infty} M \E \int_0^{\Delta_M}\frac{x^2}{1+x} dx.
\end{aligned}
$$
Our desired result holds if and only if $\lim_{M\rightarrow \infty} \big| M\E \int_0^{\Delta_M}  \frac{x^2}{1+x} dx\big|=0$. Lemma~\ref{lem:delta} (proven in Section~\ref{sec:proofs} below) bounds the absolute value of this integral for fixed $\Delta$. Choosing $\Delta=\Delta_M$, multiplying by $M$ and taking the expectation of both sides of Lemma~\ref{lem:delta} is equivalent to the statement that, for any $\epsilon, \alpha > 0$:
\begin{equation}
  \label{eq:upper-bound}
M \E\Bigg|  \int_0^{\Delta_M}\frac{x^2}{1+x} dx\Bigg| 
\leq 
M \E \Bigg[C_\epsilon   \bigg|\frac{1}{1+\Delta_M}\bigg|^{\frac{\epsilon}{1+\epsilon}}
\big|\Delta_M\big|^{\frac{2 + 3\epsilon}{1+\epsilon}}
\Bigg] + M D_{\alpha} \E\big|\Delta_M\big|^{2+\alpha}.
\end{equation}
Let $\alpha$ be as given in the conditions of the theorem, so that $\E|R-\theta|^{2+\alpha} < \infty$.
We will show that both terms on the right-hand side of Eq.~\eqref{eq:upper-bound} have a limit of zero as $M \rightarrow \infty$ for suitable $\epsilon$.
For the second term, let $s = 2+\alpha$. Then by Lemma~\ref{lemma:sample-moments},
\begin{equation}
\E \big| \Delta_M \big|^{2+\alpha}
= \E \big| \Delta_M \big|^{s}
= \theta^{-s} \E \big| R_M - \theta\big|^{s} \leq \theta^{-s} B_s M^{-s /2} \E\big|R-\theta\big|^{s}.
\label{eq:deltabound}
\end{equation}
Since $s/2 > 1$ and $\E|R- \theta|^s < \infty$, this implies that the $D_{\alpha}\E|\Delta_M|^{2+\alpha}$ is $o(M^{-1})$ and so the limit of the second term on the right of Eq.~\ref{eq:upper-bound} is zero.

For the first term on the right-hand side of Eq.~\eqref{eq:upper-bound}, apply Holder's inequality with $p=\frac{1+\epsilon}{\epsilon}$ and $q=1+\epsilon$, to get that
\begin{align*}
M \E \Bigg[C_\epsilon   \bigg|\frac{1}{1+\Delta_M}\bigg|^{\frac{\epsilon}{1+\epsilon}}
\big|\Delta_M\big|^{\frac{2+3\epsilon}{1+\epsilon}}
\Bigg] 
&\leq M C_\epsilon
\Bigg(\E \bigg| \frac{1}{1+\Delta_M}\bigg|\Bigg)^{\frac{\epsilon}{1+\epsilon}}
\Bigg(\E \big| \Delta_M\big|^{2+3\epsilon} \Bigg)^{\frac{1}{1+\epsilon}}.
\end{align*}

Now, use the fact that $\limsup (a_M b_M) \leq \limsup a_M \limsup b_M$ to get that

\begin{multline}
\limsup_{M \rightarrow \infty} M \E \Bigg[C_\epsilon   \bigg|\frac{1}{1+\Delta_M}\bigg|^{\frac{\epsilon}{1+\epsilon}}
\big|\Delta_M\big|^{\frac{2+3\epsilon}{1+\epsilon}}
\Bigg]  \\
\leq C_\epsilon \limsup_{M\rightarrow \infty}
 \Bigg( \E \bigg| \frac{1}{1+\Delta_M}\bigg|\Bigg)^{\frac{\epsilon}{1+\epsilon}}
\limsup_{M\rightarrow \infty} M \Bigg(\E \big| \Delta_M\big|^{2+3\epsilon} \Bigg)^{\frac{1}{1+\epsilon}} 
\label{eq:limsups}
\end{multline}

We will now show that the first limit on the right of Eq. \ref{eq:limsups} is finite, while the second is zero. For the first limit, our assumption that $\limsup_{M \rightarrow \infty} \E \frac{1}{R_M} < \infty$, means that for sufficiently large $M$, $\E \frac{1}{R_M}$ is bounded by a constant. Thus, we have that  regardless of $\epsilon$, the first limit of

\begin{align*}
\limsup_{M\rightarrow \infty}
 \Bigg( \E \bigg| \frac{1}{1+\Delta_M}\bigg|\Bigg)^{\frac{\epsilon}{1+\epsilon}}
 & = 
 \limsup_{M\rightarrow \infty}
 \Bigg( \E \bigg| \frac{\theta}{R_M}\bigg|\Bigg)^{\frac{\epsilon}{1+\epsilon}}
 \end{align*}

is bounded by a constant.

%

Now, consider the second limit on the right of Eq. \ref{eq:limsups}. Let $\epsilon = \alpha/3$ and $s' =\frac{2+\alpha}{1+\epsilon}> 2$.
Then, using the bound we already established above in Eq. \ref{eq:deltabound} we have that
\[
\Big(\E \big| \Delta_M \big|^{2+3\epsilon}\Big)^{\tfrac{1}{1+\epsilon}}
= \Big(\E \big| \Delta_M \big|^{2+\alpha}\Big)^{\tfrac{1}{1+\epsilon}}
\leq \theta^{-s'} B_s^{\frac{1}{1+\epsilon}} M^{-s' /2} \Big(\E\big|R-\theta\big|^{s}\Big)^{\frac{1}{1+\epsilon}}.
\]
Since $s' > 2$ and $\E\big|R-\theta\big|^s < \infty$, this proves that the second limit in Eq.~\eqref{eq:limsups} is zero. Since we already showed that the first limit on the right of Eq. \ref{eq:limsups} is finite we have that the limit of the first term on the right of Eq.~\eqref{eq:upper-bound} is zero, completing the proof.

\end{proof}

\subsubsection{Proofs of Lemmas}
\label{sec:proofs}

\samplemoments*

\begin{proof}
The Marcinkiewicz–Zygmund inequality~\cite{marcinkiewicz1937quelques} states that, under the same conditions, for any $s \geq 1$ there exists $B_s > 0$ such that
$$
\E\Bigg(\bigg| \sum_{i=1}^M U_i \bigg|^s  \Bigg)
\leq B_s \E \Bigg( \bigg(\sum_{i=1}^M |U_i|^2 \bigg)^{s/2} \Bigg) 
$$
Therefore,
\begin{align*}
\E\Bigg(\bigg| \frac{1}{M}\sum_{i=1}^M U_i \bigg|^s  \Bigg) 
&= M^{-s} \E\Bigg(\bigg|\sum_{i=1}^M U_i \bigg|^s  \Bigg) \\
&\leq B_s M^{-s} \E \Bigg( \bigg(\sum_{i=1}^M |U_i|^2 \bigg)^{s/2} \Bigg) \\
&= B_s M^{-s/2} \E \Bigg( \bigg(\frac{1}{M}\sum_{i=1}^M |U_i|^2 \bigg)^{s/2} \Bigg) \\
\end{align*}
Now, since $v \mapsto v^{s/2}$ is convex for $s \geq 2$
\[
\bigg(\frac{1}{M}\sum_{i=1}^M |U_i|^2 \bigg)^{s/2} \leq \frac{1}{M} \sum_{i=1}^M |U_i|^s,
\]
and $\E \Big(\frac{1}{M} \sum_{i=1}^M |U_i|^s\Big) = \E|U_1|^s$, so we have
\[
\E\Bigg(\bigg| \frac{1}{M}\sum_{i=1}^M U_i \bigg|^s  \Bigg) \leq B_s M^{-s/2} \E |U_1|^s.
\]
\end{proof}

\begin{restatable}{lem}{deltabound}
  \label{lem:delta}

For every $\epsilon, \alpha > 0$ there exists constants $C_\epsilon$, $D_{\alpha}$ such that, for all $\Delta > -1$,
$$
\Bigg|\int_0^\Delta \frac{x^2}{1+x}dx \Bigg| 
\leq 
C_\epsilon \bigg|\frac{1}{1+\Delta}\bigg|^{\frac{\epsilon}{1+\epsilon}}
\big|\Delta\big|^{\frac{2+3\epsilon}{1+\epsilon}} + 
D_{\alpha} \big|\Delta\big|^{2+\alpha}.
$$
\end{restatable}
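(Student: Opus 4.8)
The plan is to prove the bound by splitting on the sign of $\Delta$, since the integrand $x^2/(1+x)$ is singular only as $x\to-1$ and that singularity is the sole source of difficulty. First I would record the elementary fact that $\int_0^\Delta \frac{x^2}{1+x}\,dx$ is nonnegative for $\Delta\ge 0$ and nonpositive for $-1<\Delta<0$ (we integrate a positive integrand, "backwards" in the second case), so that in either case the absolute value equals $\int_{\min(\Delta,0)}^{\max(\Delta,0)} \frac{x^2}{1+x}\,dx$ with a positive integrand over the range of integration.

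The crux is the case $-1<\Delta<0$, where I would apply Hölder's inequality to $\int_\Delta^0 x^2\,(1+x)^{-1}\,dx$, writing the integrand as the product $x^2\cdot(1+x)^{-1}$ and using the conjugate exponents $p=\frac{1+\epsilon}{\epsilon}$ and $q=1+\epsilon$. The first factor contributes $\bigl(\int_\Delta^0 |x|^{2p}\,dx\bigr)^{1/p}=(2p+1)^{-1/p}|\Delta|^{\,2+1/p}$, and the exponent identity $2+\tfrac1p = 2+\tfrac{\epsilon}{1+\epsilon}=\tfrac{2+3\epsilon}{1+\epsilon}$ is exactly what makes the target power of $|\Delta|$ appear. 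The second factor contributes $\bigl(\int_\Delta^0 (1+x)^{-(1+\epsilon)}\,dx\bigr)^{1/q}$; evaluating the integral gives $\frac{(1+\Delta)^{-\epsilon}-1}{\epsilon}\le \frac{(1+\Delta)^{-\epsilon}}{\epsilon}$, whose $1/(1+\epsilon)$ power is $\epsilon^{-1/(1+\epsilon)}(1+\Delta)^{-\epsilon/(1+\epsilon)}$, precisely the singular factor $\bigl|\tfrac{1}{1+\Delta}\bigr|^{\epsilon/(1+\epsilon)}$. Multiplying the two factors yields the first term of the claimed bound with an explicit $C_\epsilon=(2p+1)^{-1/p}\epsilon^{-1/(1+\epsilon)}$; notably the second term is not even needed in this case (one may take any $D_\alpha$).

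For $\Delta\ge 0$ there is no singularity, and I would finish by a short sub-split. Using $\frac{x^2}{1+x}\le x^2$ gives $|\int_0^\Delta|\le \Delta^3/3$, while the exact antiderivative $\frac{\Delta^2}{2}-\Delta+\log(1+\Delta)$ together with $\log(1+\Delta)\le\Delta$ gives the sharper $|\int_0^\Delta|\le \Delta^2/2$. For $0\le\Delta\le 1$ I would dominate $\Delta^3/3$ by the first term, using $\tfrac{2+3\epsilon}{1+\epsilon}<3$ (hence $\Delta^3\le \Delta^{(2+3\epsilon)/(1+\epsilon)}$ on $[0,1]$) and $(1+\Delta)^{-\epsilon/(1+\epsilon)}\ge 2^{-\epsilon/(1+\epsilon)}$; for $\Delta>1$ I would dominate $\Delta^2/2$ by the second term $D_\alpha|\Delta|^{2+\alpha}$, using $\Delta^2\le \Delta^{2+\alpha}$ on $(1,\infty)$. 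This is where the second term earns its place and why both terms appear in the statement.

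The main obstacle is not the positive-$\Delta$ bookkeeping but choosing the Hölder split correctly: the fractional exponents $\epsilon/(1+\epsilon)$ and $(2+3\epsilon)/(1+\epsilon)$ look unmotivated until one sees that they are forced by the conjugate pair $\bigl(\tfrac{1+\epsilon}{\epsilon},\,1+\epsilon\bigr)$ applied to the factorization $x^2\cdot(1+x)^{-1}$. I would therefore verify the two exponent identities $\tfrac1p+\tfrac1q=1$ and $2+\tfrac1p=\tfrac{2+3\epsilon}{1+\epsilon}$ at the outset, since they are what make the whole calculation close and produce the singular factor with precisely the stated power.
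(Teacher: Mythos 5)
Your proof is correct and follows essentially the same route as the paper's: the crux is the identical H\"older split on $(-1,0)$ with conjugate exponents $\bigl(1+\epsilon,\tfrac{1+\epsilon}{\epsilon}\bigr)$ isolating the singular factor $(1+x)^{-1}$, which forces the same exponents $\tfrac{\epsilon}{1+\epsilon}$ and $\tfrac{2+3\epsilon}{1+\epsilon}$ and the same constant up to relabeling of $p$ and $q$. The only (minor) difference is on $\Delta\ge 0$, where the paper compares derivatives of $f(\Delta)=\int_0^\Delta\frac{x^2}{1+x}\,dx$ and $g(\Delta)=\tfrac{1}{2+\alpha}\Delta^{2+\alpha}$ to get $D_\alpha=\tfrac{1}{2+\alpha}$ in one step, while you split at $\Delta=1$ and absorb the small-$\Delta$ piece into the $C_\epsilon$ term and the large-$\Delta$ piece into $D_\alpha=\tfrac12$; both are valid.
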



\begin{proof}
We will treat positive and negative $\Delta$ separately, and show that:
\begin{enumerate}
\item If $-1 < \Delta < 0$, then for every $\epsilon > 0$, there exists $C_\epsilon > 0$ such that
  \begin{equation}
    \label{eq:negative}
    \Bigg|\int_0^\Delta \frac{x^2}{1+x}dx \Bigg| \leq C_\epsilon \bigg|\frac{1}{1+\Delta}\bigg|^{\frac{\epsilon}{1+\epsilon}} \big|\Delta\big|^{\frac{2+3\epsilon}{1+\epsilon}}.
  \end{equation}
  
\item If $\Delta \geq 0$, then for every $\alpha > 0$,
  \begin{equation}
    \label{eq:positive}
    \Bigg|\int_0^\Delta \frac{x^2}{1+x}dx\Bigg| \leq \underbrace{\frac{1}{2+\alpha}}_{D_\alpha} \Delta^{2+\alpha}.
  \end{equation}
  
\end{enumerate}

Put together, these imply that for all $\Delta > -1$, the quantity $\big|\int_{0}^\Delta \frac{x^2}{1+x} dx \big|$ is no more than the maximum of the upper bounds in Eqs.~\eqref{eq:negative} and~\eqref{eq:positive}. Since these are both non-negative, it is also no more than their sum, which will prove the lemma.

We now prove the bound in Eq.~\eqref{eq:negative}. For $-1 < \Delta < 0$, substitute $u = -x$ to obtain an integral with non-negative integrand and integration limits:
$$
\int_0^\Delta \frac{x^2}{1+x}dx = -\int_0^{-\Delta} \frac{u^2}{1-u}du < 0
$$
Therefore:
$$
\begin{aligned}
\Bigg|\int_0^\Delta \frac{x^2}{1+x}dx\Bigg| 
&= \int_0^{-\Delta} \frac{u^2}{1-u}du \\
\end{aligned}
$$

Now apply Holder's inequality with $p, q > 1$ such that $\frac{1}{p} + \frac{1}{q} = 1$:
\begin{align*}
\int_0^{-\Delta} \frac{u^2}{1-u}du
&\leq \Bigg( \int_0^{-\Delta}  \frac{1}{(1-u)^p} du \Bigg)^{1/p}
      \cdot \Bigg( \int_0^{-\Delta} u^{2q}\, du \Bigg)^{1/q} \\
&= \Bigg( \frac{1}{p-1} \frac{1 - (1 + \Delta)^{p-1}}{(1+\Delta)^{p-1}} \Bigg)^{1/p}
      \cdot \Bigg( \frac{1}{2q+1} \big(-\Delta\big)^{2q+1} \Bigg)^{1/q} \\
&= C_{p,q} \cdot \Bigg( \frac{1 - (1  + \Delta)^{p-1}}{(1+\Delta)^{p-1}} \Bigg)^{1/p}
      \cdot \Bigg( \big|\Delta\big|^{2q+1} \Bigg)^{1/q} \\
&\leq C_{p,q} \cdot \Bigg( \frac{1 }{(1+\Delta)^{p-1}} \Bigg)^{1/p}
      \cdot \Bigg( \big|\Delta\big|^{2q+1} \Bigg)^{1/q} \\
&= C_{p,q} \cdot \Bigg| \frac{1}{1 + \Delta} \Bigg|^{\frac{p-1}{p}}
      \cdot \big| \Delta \big|^{\frac{2q + 1}{q}}
\end{align*}
In the fourth line, we used the fact that $0 < (1 + \Delta)^{p-1} < 1$.
Now set $p = 1+\epsilon$, $q = \frac{1+\epsilon}{\epsilon}$ and $C_\epsilon = C_{p,q}$, and we obtain Eq.~\eqref{eq:negative}.

We now prove the upper bound of Eq.~\eqref{eq:positive}.
For $\Delta \geq 0$, the integrand is non-negative and:
\[
\Bigg|\int_0^\Delta \frac{x^2}{1+x}dx \Bigg| = \int_0^\Delta \frac{x^2}{1+x}dx.
\]
Let $f(\Delta) = \int_0^{\Delta}\frac{x^2}{1+x}dx$ and $g(\Delta) = \frac{1}{2+\alpha} x^{2+\alpha}$.
Then $f(0) = g(0) = 0$, and we claim that $f'(\Delta) \leq g'(\Delta)$ for all $\Delta \geq 0$, which together imply $f(\Delta) \leq g(\Delta)$ for all $\Delta \geq 0$. 

To see that $f'(\Delta) \leq g'(\Delta)$, observe that:
\[
\frac{g'(\Delta)}{f'(\Delta)}
= \frac{\Delta^{1+\alpha}}{\frac{\Delta^2}{1+\Delta}}
= \frac{\Delta^{1+\alpha}(1+\Delta)}{\Delta^2}
= \frac{\Delta^{1+\alpha} + \Delta^{2+\alpha}}{\Delta^2}
= \frac{1}{\Delta^{1-\alpha}} + \Delta^{\alpha}
\]
Both terms on the right-hand side are nonnegative.
If $\Delta \in [0, 1]$, then$\frac{1}{\Delta^{1-\alpha}} \geq 1$.
If $\Delta \geq 1$ then $\Delta^{\alpha} \geq 1$.
Therefore, the sum is at least one for all $\Delta \geq 0$.
\end{proof}

\subsubsection{Relationship of Decompositions}\label{sec:decomp}

This section discusses the relationship of our decomposition to that of Le et al. \cite[Claim 1]{Le2017May}.

We first state their Claim 1 in our notation. Define $q^{IS}_{M}(\z_{1:M}) = \prod_{m=1}^M q(\z_m)$ and define 

\[ p^{IS}_M(\z_{1:M}, \x) = q^{IS}_M(\z_{1:M}) \frac{1}{M} \sum_{m=1}^M\frac{p(\z_m, \x)}{q(\z_m)} = \frac{1}{M} \sum_{m=1}^M p(\z_m, \x) \prod_{m' \neq m} q(\z_{m'}). \]

By construction, the ratio of these two distributions is

\[ \frac{p^{IS}_M(\z_{1:M}, \x)}{q^{IS}_M(\z_{1:M})} = \frac{1}{M}\sum_{m=1}^M\frac{p(\z_m, \x)}{q(\z_m)}, \]

and $p^{IS}_M(\x)=p(\x)$ and so applying the standard ELBO decomposition (Eq. \ref{eq:ELBO-decomposition}) to $p^{IS}_M$ and $q^{IS}_M$ gives that

 \[ \log p(\x) = \IWELBO{q(\z)}{p(\z, \x)} + KL[q^{IS}_M(\z_{1:M}) \| p^{IS}_M(\z_{1:M}\mid \x)]. \]

This is superficially similar to our result because it shows that maximizing the IW-ELBO minimizes the KL-divergence between two augmented distributions. However, it is fundamentally different and does not inform probabilistic inference. In particular, note that the marginals of these two distributions are

\[
  \begin{aligned}  
  p_M^{IS}(\z_1 \mid \x) &= \frac{1}{M}p(\z_1 \mid \x) + \frac{M-1}{M} q(\z_1), \\
  q_M^{IS}(\z_1) &= q(\z_1).
  \end{aligned}
\]

This pair of distributions holds $q_M^{IS}$ "fixed" to be an independent sample of size $M$ from $q$, and changes $p_M^{IS}$ so that its marginals approach those of $q_M^{IS}$ as $M \to \infty$. The distribution one can actually sample from, $q_M^{IS}$, does not approach the desired target.

Contrast this with our approach, where we hold the marginal of $p_M$ fixed so that $p_M(\z_1 \mid \x) = p(\z_1 \mid \x)$, and augment $q$ so that $q_M(\z_1)$ gets closer and closer to $p_M(\z_1 \mid \x)$ as $M$ increases. Further, since $q_M(\z_1)$ is the distribution resulting from self-normalized importance sampling, it is available for use in a range of inference tasks.

\end{document}